\documentclass[11pt]{article}

\usepackage[margin=1in]{geometry}
\usepackage{setspace}
\usepackage{graphicx}
\usepackage{epsfig}
\usepackage{natbib}
\usepackage{bm}
\usepackage{amsmath,amssymb,amsthm}
\usepackage[ruled,linesnumbered]{algorithm2e}
\usepackage{dsfont}
\usepackage{xcolor}
\usepackage{color}
\usepackage{booktabs}
\usepackage{float}
\usepackage{microtype}
\usepackage[colorlinks,
linkcolor=blue,
anchorcolor=blue,
citecolor=blue,
]{hyperref}
\usepackage{tikz}
\usetikzlibrary{arrows.meta,positioning,shapes.geometric}

\newcommand{\ex}{{\mathsf E}}

\newcommand{\ts}{{\mathtt{T}}}
\let\mathscr\mathcal
\DeclareMathOperator*{\argmax}{arg\,max}

\theoremstyle{plain}
\newtheorem{theorem}{Theorem}
\newtheorem{proposition}{Proposition}
\newtheorem{corollary}{Corollary}

\theoremstyle{definition}
\newtheorem{assumption}{Assumption}

\newcommand{\Halmos}{\qed}
\newcommand{\ECSwitch}{\appendix}
\newcommand{\ECHead}[1]{\begin{center}#1\end{center}}
\makeatletter
\renewenvironment{proof}[1][Proof]{\par\noindent\textit{#1.}\ }{\par}
\makeatother

\allowdisplaybreaks[4]

\bibpunct[, ]{(}{)}{,}{a}{}{,}

\begin{document}

\title{Nonparametric Learning and Earning with One-Point Feedback under Nonstationarity}
\author{
Xiangyu Yang\\
School of Management, Shandong University\\
\texttt{yxy@email.sdu.edu.cn}
\and
Feng Xu\thanks{Corresponding author}, Jian-Qiang Hu\\
School of Management, Fudan University\\
\texttt{fxu25@m.fudan.edu.cn}, \texttt{hujq@fudan.edu.cn}
\and
Jiaqiao Hu\\
Department of Applied Mathematics and Statistics, Stony Brook University\\
\texttt{jiaqiao.hu.1@stonybrook.edu}
}
\date{ }
\maketitle

\begin{abstract}
	Firms increasingly rely on dynamic pricing to respond to evolving customer demand, yet in many applications they observe only the revenue generated by a single posted price in each period. At the same time, market conditions may shift gradually or abruptly due to changes in customer preferences, competition, or external shocks. These features create two intertwined challenges: learning the revenue--demand relationship from limited feedback and adapting pricing decisions to a changing environment. We study how a seller can learn and earn effectively under these constraints, without assuming a specific parametric form for demand.
	We develop a learning framework that updates prices using revenue-based gradient approximations constructed from one observation per period. To address environmental changes, we incorporate a restarting mechanism that periodically refreshes the learning process so that outdated information is discounted. When the degree of nonstationarity is unknown, we further introduce a meta-learning layer to adaptively hedge across multiple restarting schedules.
	We provide performance guarantees for our approach, showing how cumulative revenue loss relative to a fully informed benchmark depends on both the time horizon and the magnitude of market variation. Simulation experiments using synthetic and real-world data illustrate the effectiveness of the proposed procedures.
\end{abstract}

\noindent\textbf{Keywords:} revenue management, dynamic pricing with demand learning, nonstationary environment, bandit feedback, dynamic regret

\bigskip

\section{Introduction}\label{sec:intro}

Dynamic pricing has become a widely adopted strategy in modern revenue management and online marketplaces, involving real-time adjustments of product prices in response to customer demand  \citep{DENBOER20151}. Advances in data analytics have enabled sellers to apply statistical methods to learn demand through continual price updates.
Dynamic pricing with demand learning is a canonical learning-and-earning problem, in which the decision maker faces a fundamental trade-off between exploitation, by selecting prices believed to be profitable, and exploration, by experimenting with prices to improve future decisions.
In practice, however, the market environment is often nonstationary. The revenue landscape may evolve gradually (e.g., due to shifts in willingness to pay), experience abrupt regime changes (such as promotions, competitor entry, or policy interventions), or fluctuate with calendar effects and exogenous shocks.
For instance, as reported by \emph{The Wall Street Journal} in mid-2022 \citep{WSJ2022},
major retailers such as Walmart encountered unprecedented operational challenges when a sharp rise in inflation abruptly shifted consumer spending away from discretionary goods. Such macroeconomic shocks can render historical demand models obsolete, compelling firms to swiftly implement dynamic markdown strategies while relearning consumer preferences.
Nonstationarity therefore introduces an additional challenge: an effective pricing policy must continuously adapt by selectively discounting outdated information. When combined with the exploration–exploitation trade-off, this challenge gives rise to a three-way tension among learning, revenue generation, and adaptation to evolving conditions.

The problem of learning and earning under nonstationarity has attracted considerable attention in the literature. 
A variety of modeling frameworks have been developed, including change-point detection methods, Markov process-based models, and autoregressive specifications; a comprehensive review is provided in Section~\ref{sec:literature}.
Notably, several studies have examined more general nonstationary environments that essentially subsume the aforementioned settings; cf., e.g., \cite{DENBOER2015EJOR, Keskin2017MOR, Zhu2020ICML}. 
These general models often rely on partial knowledge of the demand--price relationship. In particular, \cite{DENBOER2015EJOR} studies a demand environment with an evolving market size under the assumption of known price sensitivity,  \cite{Keskin2017MOR} consider a setting in which product demand follows a linear functional form, and \cite{Zhu2020ICML} investigate a scenario where the underlying demand process increases over time in expectation and may also experience growth in the magnitude of random fluctuations.

In this paper, we study the learning and earning problem in a general nonstationary, \emph{nonparametric} setting. Our setting requires little structural knowledge of the demand--price relationship, and the revenue can be inferred only from realized sales and therefore lacks an explicit analytical expression.
This combination of nonstationarity and model uncertainty is especially relevant in operational settings where demand can change quickly, and managers may be reluctant to commit to a potentially unreliable parametric model.
Stochastic gradient-based methods are effective approaches for revenue maximization in nonparametric dynamic pricing, in which Kiefer--Wolfowitz (KW) methods and their variants are commonly  employed to construct gradient estimates from revenue feedback \citep[see][]{Hong2020NRL, YANG2024EJOR}. However, KW methods typically require multiple revenue observations to construct a finite-difference-based gradient estimate, although the seller can post only one price in each period. Moreover, in nonstationary settings, the revenue function with respect to price may vary across periods, making a multi-point gradient approximation approach inaccurate and difficult to implement in practice.

To address the joint challenges of nonstationarity and single-point observations, we propose a hierarchical framework that integrates an appropriate ``forgetting principle'' to discard outdated information \citep{Garivier2011icalt} with a one-point (function-value) gradient estimation method. Specifically, our approach builds on a restarting mechanism and incorporates one-point feedback techniques from the online optimization literature \citep[see, e.g.,][]{la2025gradient, Lattimore_2026}. 
This design enables the algorithm to rely on only a single revenue observation per period while adapting to nonstationary environments by prioritizing more recent data.
The key idea underlying one-point feedback is the use of smoothing. Instead of optimizing the original revenue function directly, the method operates on a locally averaged surrogate obtained by perturbing the posted price and observing the resulting revenue. This smoothing step allows the construction of a nearly unbiased gradient estimator based on a single observation, which can then be incorporated into an online update rule.
Building on this idea, we first analyze a generic mirror ascent algorithm equipped with one-point feedback under mild bias--variance conditions on the gradient estimator. We then introduce a restarting mechanism that periodically resets the algorithm after a finite number of iterations, allowing it to track changes in nonstationary environments better. This mechanism is designed under the assumption that the ``variation budget'' of the underlying revenue function sequence is known. The variation budget quantifies how much the underlying revenue function can change over time, serving as a measure of the degree of nonstationarity; its formal definition is deferred to subsequent sections.
Moreover, to eliminate the need for prior knowledge of the variation budget, we introduce a meta-layer that aggregates a pool of restarting-based experts via exponential weighting \citep[see, e.g.,][]{CB2006}, thereby tracking the best restart schedule over time. Such a ``bandit-over-bandit'' strategy \citep{Cheung2022MS} couples the core restarting mirror ascent procedure with an outer expert-tracking mechanism that selects among base algorithms calibrated to different variation-budget levels. 
Collectively, these components provide a unified and modular approach to nonparametric dynamic pricing in changing environments: a base learner for local price updating, a restarting wrapper for discounting stale information, and an adaptive aggregation layer for selecting an effective forgetting pace when the environment’s variation is unknown.
 Figure~\ref{fig:framework} presents an overview of the proposed framework.

\begin{figure}[ht]
	\centering
	\resizebox{0.5\linewidth}{!}{
		\begin{tikzpicture}[
			font=\small,
			level/.style={trapezium, trapezium angle=70, draw, align=center, minimum height=1.1cm},
			level1/.style={level, minimum width=11.2cm, fill=blue!4},
			level2/.style={level, minimum width=9.4cm, fill=blue!8},
			level3/.style={level, minimum width=5.6cm, fill=blue!12},
			arr/.style={-{Latex[length=2mm]}, thick}
			]
			\node[level3] (top) {\textbf{Algorithm~\ref{alg:hedge}}\\Expert-Weighting over Restarting Online Mirror Ascent\\\emph{(meta-layer for unknown variation budget)}};
			\node[level2, below=0.55cm of top] (mid) {\textbf{Algorithm~\ref{alg:dynamic}}\\Restarting Online Mirror Ascent\\\emph{(wraps Algorithm~\ref{alg:static} via restarts)}};
			\node[level1, below=0.55cm of mid] (base) {\textbf{Algorithm~\ref{alg:static}}\\Online Mirror Ascent with One-Point Feedback\\\emph{(base learner within a batch)}};
			\draw[arr] ([xshift=7pt]top.south) -- node[midway, right]{\scriptsize meta (expert-weighting)} ([xshift=7pt]mid.north);
			\draw[arr] ([xshift=-7pt]mid.north) -- node[midway, left]{\scriptsize be invoked} ([xshift=-7pt]top.south);
			\draw[arr] ([xshift=7pt]mid.south) -- node[midway, right]{\scriptsize restarting / batching} ([xshift=7pt]base.north);
			\draw[arr] ([xshift=-7pt]base.north) -- node[midway, left]{\scriptsize be invoked} ([xshift=-7pt]mid.south);
		\end{tikzpicture}
	}
	\caption{Overview of the proposed methods: Algorithm~\ref{alg:static} (base) $\leftrightarrow$ Algorithm~\ref{alg:dynamic} (middle) $\leftrightarrow$ Algorithm~\ref{alg:hedge} (top).}
	\label{fig:framework}
\end{figure}

From a theoretical perspective, our framework accommodates a broad spectrum of stochastic gradient estimators with a single design point.
We rigorously derive upper bounds on the algorithm's regret (i.e., the expected loss relative to using the optimal price) that explicitly separate the contributions of the time horizon and environmental variation, and further demonstrate that similar bounds can be achieved without prior knowledge of either.
To the best of our knowledge, this is among the first works to develop a unified regret analysis for nonparametric dynamic pricing with demand learning under nonstationarity and one-point feedback.
From a numerical perspective, we conduct extensive simulation studies to demonstrate the effectiveness and superiority of the proposed algorithms under nonstationary demand scenarios. These experiments include ablation analyses, as well as evaluations on both synthetic data and the Walmart dataset.
The numerical study provides evidence along two dimensions: it isolates the contributions of restarting and meta-learning under controlled synthetic nonstationarity, and it further validates the practical relevance of the proposed framework in a data-driven pricing environment.
Our framework is well-suited to real-world dynamic pricing settings in which sellers observe only realized revenues from a single posted price and operate under evolving market conditions. 
The proposed algorithms are easy to implement, require limited structural assumptions, and accommodate various gradient estimation techniques, providing a scalable and principled approach to nonparametric learning and earning in nonstationary markets.

\textbf{Organization}:
The remainder of this paper is organized as follows. Section~\ref{sec:literature} reviews the related literature. Section~\ref{sec:problem} formulates the nonparametric dynamic pricing problem in a nonstationary environment and introduces the required concepts. Sections~\ref{sec:known} and \ref{sec:unknown} present the proposed algorithms, including the restarting mirror ascent method and the bandit-over-bandit meta-algorithm, and establish their theoretical regret guarantees. Section~\ref{sec:num} reports numerical experiments illustrating the empirical performance of the proposed methods. Section~\ref{sec:conclu} concludes with a discussion of implications and directions for future research.

\textbf{Notation}:
Throughout the paper, all vectors are column vectors unless otherwise specified.
We use $\|\cdot\|$ to denote the Euclidean norm for vectors and the spectral norm for matrices, and $\mathrm{vol}(\cdot)$ to denote the volume of a set.
The ceiling symbol $\lceil \cdot \rceil$ represents rounding up to the nearest integer.
We denote by $\sigma\{\mathscr X\}$ the smallest $\sigma$-field generated by the collection of random elements $\mathscr X$.
For two positive sequences ${v_k}$ and ${w_k}$, we write $v_k = O(w_k)$ if $\limsup_{k\to\infty} v_k/w_k < \infty$ and $v_k = o(w_k)$ if $\lim_{k\to\infty} v_k/w_k = 0$.

\section{Related Literature}\label{sec:literature}

This section provides a brief review of two streams of related literature, with an emphasis on modeling approaches to nonstationary learning and earning and bandit convex optimization techniques.

We elaborate on the streams of research mentioned in Section~\ref{sec:intro}, including change-point detection, Markov process-based models, and autoregressive specifications. 
Change-point detection provides a natural approach for identifying structural breaks in time series data. In dynamic pricing applications, this framework can be extended to detect temporal shifts in the underlying demand function; see, e.g., the single change-point setting in \cite{Besbes2011OR} and the multiple change-point formulation in \cite{Boer2020MS}.
Markov process-based models adopt a different viewpoint by positing that the underlying demand environment exhibits Markovian dynamics. For example, \cite{Aviv2005MS} assume that the sequence of demand functions evolves as a discrete-time Markov chain, where the decision maker has partial information about the prevailing state at the beginning of each period, leading to a partially observed Markov decision process formulation.
\cite{Zhang2016NRL} study a two-class revenue management admission-control problem under nonstationary demand, where customer arrivals follow nonhomogeneous Poisson processes that can be updated using available demand and exogenous information.
In a continuous-time setting, \cite{Chen2013OR} model the evolution of market size as a Gaussian process. More recently, \cite{Keskin2024OR} study dynamic pricing in a Markovian demand environment with unknown transition probabilities.
As a closely related alternative to Markovian models, autoregressive specifications capture nonstationarity by assuming that the parameters of a parametric demand function evolve according to an autoregressive process \citep{BECK2002JEDC}.
As noted earlier, all these models impose specialized structural assumptions on the demand function and can therefore be viewed as special cases of the broader framework considered in this paper, which only requires a variation budget for the demand functions.

In addition to the studies discussed above,
demand variability also arises in contextual dynamic pricing settings \citep[see, e.g.,][]{Ban2021MS, Miao2022pom, Luo2024MOR, zhao2026contextual}.
In such settings, observed contexts such as product characteristics and customer features induce heterogeneous and evolving demand environments, and the seller seeks to set personalized prices through sequential interactions with customers. 
A special class of contextual dynamic pricing captures consumers' behavioral biases through reference prices modeled as a sequence of covariates \citep{Boer2022MS}. A key characteristic is that the covariate sequence is endogenously driven by the history of posted prices.
Beyond reference-price effects, other mechanisms can also induce endogenously evolving environments. For instance, when customer patience is taken into account \citep{Zhang2022MSOM, BirgeOR2025}, forward-looking consumers may wait across multiple selling opportunities, creating intertemporal dependence between pricing decisions and demand.
Moreover, nonstationarity may arise in competitive markets with multiple sellers, where rivals' pricing adjustments influence demand dynamics; e.g., \citet{Meylahn2022MSOM} study a KW-type recursion and obtain an interesting result that the algorithm can learn to collude under self-play (ignoring the presence of competition) even though the price--demand relationship is unknown. A related learning problem in general monotone games has been investigated by \cite{Bravo2018nips, Ba2025OR}.
It can be seen that all the approaches discussed in this paragraph either explicitly incorporate covariates into the pricing policy or model environmental changes as endogenously driven by the seller’s pricing decisions; consequently, they differ fundamentally from the framework studied in this paper.
For a comprehensive overview of nonstationary learning and earning models, we refer the reader to \citet{denBoer2022book}.

From a technical standpoint, our solution approach draws on the literature on nonstationary sequential optimization and bandit convex optimization. Initiated by \cite{Zinkevich2003ICML}, adversarial online convex optimization (OCO) has become a central paradigm in the learning community and has undergone extensive development and broad generalization \citep[see][and references therein]{hazan2022introduction}.
The seminal work of \cite{Besbes2015OR} establishes a fundamental connection between adversarial OCO and nonstationary stochastic optimization, demonstrating that policies designed for the adversarial setting can be adapted to nonstationary environments through a simple restarting procedure. 
Much of the research on nonstationary stochastic optimization, including \cite{Besbes2015OR}, assumes a relatively idealized case in which certain regularity conditions on the sequence of objective functions are known \citep{Yang2016ICML, Wei2016NIPS, wei2018abruptly}.
To overcome this restriction, significant effort has been devoted to developing methods that adapt to changing environments on the fly. Techniques achieving this include the doubling trick \citep{Jadbabaie2015AISTATAS}, heuristic envelopes \citep{Besbes2019SS}, bandit corralling \citep{Luo2018COLT}, moving windows and decaying weights \citep{Keskin2017MOR},  prediction with expert advice \citep{Zhang2018ICML}, distributional change detection \citep{chen2025arxiv}, and multiscale sampling \citep{Wang2025OR}.
However, most of the existing literature focuses on settings with full-information feedback, where the objective function or its gradient is observable, or is limited to linear bandit models. The works of \cite{Zhao2021JMLR} and \cite{Wang2025OR} are most closely related to the nonparametric single-point feedback setting studied in this paper. \cite{Zhao2021JMLR} consider a noiseless objective function, and \cite{Wang2025OR} assumes strong convexity (for the minimization problem). In contrast, the present study extends the analysis to accommodate stochastic noise and requires only the concavity of the revenue function.

\section{Problem Formulation}\label{sec:problem}

We consider a monopolist firm (seller) that sells $d$ distinct products over $T$ discrete time periods. At the beginning of each time period $t\in \{1,\ldots,T\}$, the seller first offers a price vector $\tilde x_t\in \mathcal K \subset \mathbb R^d_{+}$, where $\mathcal K$ is a convex body (i.e., a compact, convex set with non-empty interior) contained in the $d$-dimensional Euclidean space with nonnegative coordinates. A demand realization in response to the posted price is then observed by the seller. In the absence of inventory constraints (so that all demand can be satisfied) and with zero marginal costs, this setting is equivalent to the seller receiving a revenue feedback $\phi_t(\tilde x_t):=r_t(\tilde x_t)+\xi_t$, where $r_t(\cdot)$ denotes the expected revenue function in period $t$, and $\xi_t$ is a  zero-mean stochastic disturbance that may depend on past prices. Let $x_t^*$ be a maximizer of $r_t(\cdot)$ over $\mathcal K$. Throughout this paper, we impose the following regularity conditions on the revenue functions:

\begin{assumption}[Revenue Function]
	\label{asp:rf}
For each period $t$, the revenue function $r_t(\cdot)$ is concave and continuously differentiable on $\mathcal K$, with $L$-Lipschitz continuous gradients; that is, there exists a constant $L>0$ such that, for all $x,y\in\mathcal K$,
\[
\|\nabla r_t(x)-\nabla r_t(y)\|\le L\|x-y\|.
\]
Moreover,  $x_t^*$ belongs to a convex body $\Theta$ satisfying $\Theta\subset\mathcal K^\circ$, where $\mathcal K^\circ$ denotes the interior of $\mathcal K$.

\end{assumption}
Assumptions on the concavity/convexity and smoothness of the objective function are common in the bandit convex optimization literature \citep[see, e.g.,][]{Besbes2015OR, Wang2025OR}. 
The requirement that the maximizer be an interior point is also standard in the pricing literature and is typically innocuous in practical applications. 
It is worth noting that we introduce the set $\Theta$, which lies strictly away from the boundary of $\mathcal K$, to ensure that the revenue function is evaluated only at valid inputs in each period. This argument will be formalized in the sequel; see Assumption~\ref{asp:ut}.
Indeed, both the subsequent algorithm design and theoretical analysis are primarily carried out over $\Theta$.

The widely used linear demand model \citep[cf., e.g.,][]{Besbes2015MS} satisfies the above assumption. Nevertheless, in the nonparametric setting, the seller does not have access to the explicit forms of the expected revenue functions $\{r_t(\cdot)\}_{t=1}^T$.
Operationally, we treat the revenue environment as exogenously evolving over time and not controlled by the seller's current pricing decision. 
From a game-theoretic perspective, this uncertainty can be interpreted as an oblivious adversary who secretly selects the entire revenue sequence at the start of the game. Consequently, to earn higher cumulative revenue, the seller must learn the underlying revenue functions over time. 
Formally, a learning policy is a sequence of mappings that generates the price vector  at each period.
The learning policy is required to be nonanticipating, that is, each pricing decision (except possibly the initial price) may depend only on historical prices and observed revenues, and may incorporate randomized strategies through dependence on exogenous randomness.
To gauge the performance of the learning policy, we adopt the standard notion of \emph{dynamic regret}. This benchmark captures the revenue loss from not pricing as well as a fully informed seller who knows the period-specific revenue landscape in advance. It is defined as
\begin{equation}\label{eq:d-reg}
	\mathrm{D\text{-}Reg}(T)=
	\sum_{t=1 }^{T}r_t(x^*_t) - 
	\mathsf E\left[ \sum_{t=1 }^{T}r_t(\tilde x_t)  \right].
\end{equation} 
Note that under oblivious nonstationarity, the choice of $x_t^*$ (or any price in period $t$) does not influence future revenue functions $\{r_{t+1}(\cdot), \ldots, r_T(\cdot)\}$, which are considered to be exogenously determined by an adversary at the beginning of the horizon.
The first term on the right-hand side of (\ref{eq:d-reg}) represents the revenue that would be collected if the seller knew the entire sequence of revenue functions in advance (i.e., under clairvoyance). The expectation in the second term is taken over both the stochasticity in revenue outcomes and any randomization in the pricing policy. For brevity, we omit explicit specification of the underlying probability measure with respect to which expectations are taken in what follows.
Our objective is to design an effective learning policy that achieves a slow growth rate of dynamic regret over time.

 However, it is well known that sublinear dynamic regret, i.e., $\mathrm{D\text{-}Reg}(T)=o(T)$, cannot be attained for an arbitrary sequence of revenue functions \citep{Besbes2015OR}.  Hence, it is often necessary to impose certain regularity conditions on the sequence of revenue functions to obtain meaningful dynamic regret bounds. In this work, we leverage a commonly used regularity measure called \emph{path variation}, which constrains the variation of the maximizers of the function sequence \citep[see, e.g.,][]{Zinkevich2003ICML, Yang2016ICML}. Formally, there exists a variation budget $V_T$ such that
\[
\max_{ \{x_t^*\in \Theta^*_t\}_{t=1}^{T} }\sum_{t=2}^{T}\|x^*_{t}-x^*_{t-1}\| \leq V_T,
\] 
where $\Theta^*_t :=\argmax_{x\in \mathcal K}r_t(x)$.
Notice that, under the condition $x^*_t \in \Theta$ for all $t$ (Assumption~\ref{asp:rf}), the variation metric admits a natural upper bound of $T\mathrm{diam}(\Theta)$, where $\mathrm{diam}(\Theta):= \sup_{x,x' \in \Theta} \|x - x'\| $ denotes the diameter of $\Theta$. Thus, without loss of generality, we normalize $\Theta$ such that $\mathrm{diam}(\Theta) \equiv \sqrt{d}$ and assume that $\sqrt{d} \leq V_T \leq \sqrt{d}T.$
In a nonstationary environment, $V_T$ constitutes a pivotal  measure of the extent of environmental variation and plays a fundamental role in pricing policy design; yet, its value may or may not be known a priori.
Accordingly, the following sections, Sections~\ref{sec:known} and \ref{sec:unknown}, present principled approaches for the cases in which $V_T$ is known and unknown, respectively.

\section{Known $V_T$: Restarting Mirror Ascent Bandit} \label{sec:known}

As discussed in Section~\ref{sec:intro}, the seller faces a trilemma involving exploration, exploitation, and adaptation to changes, where the need for adaptation distinguishes nonstationary settings from stationary ones. Building on the seminal work of \cite{Besbes2015OR} on handling nonstationarity, we develop a restarting framework based on mirror ascent to prevent reliance on potentially obsolete observations.
In particular, a key insight gleaned from \cite{Besbes2015OR} is that a policy with strong  performance against the single best action in hindsight, i.e., the optimal static decision chosen after observing the entire sequence of the adversary’s functions, can be adapted via restarting to achieve favorable dynamic regret guarantees.
In Section~\ref{sec:oco}, we formally introduce our algorithm within the adversarial OCO framework to enable analysis relative to the single best price in hindsight. Section~\ref{sec:ds} then extends these results to the nonstationary stochastic setting.

\subsection{Adversarial OCO Setting}\label{sec:oco}

Let us define the \emph{static regret}, which quantifies the difference between the total revenue obtained by the seller and that of the best fixed price in hindsight, as follows:
\begin{align}\label{eq:s-reg}
	\mathrm{S\text{-}Reg}(T)=
	\max_{x\in \Theta}\sum_{t=1 }^{T}r_t(x) -
	\mathsf E\left[ \sum_{t=1 }^{T}r_t(\tilde x_t)  \right].
\end{align}
Compared with the dynamic regret defined in \eqref{eq:d-reg}, the main distinction lies in the order of the summation and maximization operators in the clairvoyant benchmark. Hence, static regret is a weaker performance criterion, since the benchmark associated with dynamic regret is always at least as strong as its static-regret counterpart.
Another minor difference lies in the domain over which the static-regret benchmark is defined: we use $\Theta$ rather than $\mathcal K$. This choice facilitates the subsequent technical analysis and does not affect our main conclusion, which concerns dynamic regret, since Assumption~\ref{asp:rf} ensures that the period-wise optimal solution lies in $\Theta$.

Motivated by the bandit convex optimization framework of \cite{Hu2016AISTATS}, we develop our algorithm by integrating online mirror ascent with a general gradient estimation scheme. The adoption of mirror ascent is driven by the need to balance local improvement with stability under noisy one-point feedback. In particular, the mirror ascent step solves a proximal optimization problem that balances ascent along the estimated gradient and proximity to the current iterate.
The update direction is determined by the estimated gradient, encouraging movement toward regions of higher revenue, while the regularizer term penalizes large deviations from the current iterate.  This is desirable in our setting, where gradient estimates are constructed from a single noisy observation and overly aggressive updates can degrade performance.
Throughout the paper, we denote by $\psi(\cdot)$ the regularization function and assume that it is $\alpha$-strongly convex. Under the standard choice of the Euclidean regularizer $\psi(x)=\|x\|^2/2$ as the mirror map, mirror ascent reduces to the conventional (projected) gradient ascent method.

Let $\tau$ denote the time horizon in the adversarial OCO setting, where $1 \leq \tau \leq T$, and
let $\delta$ and $\eta$ be two positive algorithmic parameters. The complete procedure is described in Algorithm~\ref{alg:static} below.
\begin{algorithm}[htbp]  
	\caption{Online Mirror Ascent with One-Point Feedback}
	\label{alg:static}
	\KwIn{time horizon $\tau$; perturbation size $\delta$; step size $\eta$;	
	}
	\BlankLine
	Initialize the time index by setting $t = 1$ and select an initial point $x_1 \in \Theta$;
	
	\While{$t \leq \tau$}
	{
		Generate a set of jointly distributed random variables $(\tilde U_t, \bar U_t)\in \mathbb R^d \times \mathbb R^d$;
		
		\uIf{$t=\tau$}
		{
			Set the price $\tilde x_t=x_t+\delta \tilde U_t$, assign $G_t = 0$, and terminate;     
			\tcp{$G_{\tau}$ is set to zero for technical reasons and is not required in practice.}
		}
		\Else{
			Set the price $\tilde x_t=x_t+\delta \tilde U_t$ and 
			construct a stochastic gradient estimator $G_t$ using one-point feedback of the form
			$$
			G_t=\frac{\phi_t(\tilde x_t)}{\delta }\bar U_t.
			$$
			
			Update $x_t$ for the next period via the mirror ascent step
			$$
			x_{t+1}=\argmax_{x\in \Theta} \left\{\eta  
			G_t^{\mathtt T}x - B_{\psi }(x,x_t)
			\right\},
			$$
			where $B_{\psi}(x,y):=\psi(x)-\psi(y)-\nabla \psi (y)^{\mathtt T}(x-y)$ for all $x,y \in \Theta $, and $\psi(\cdot)$ is an $\alpha$-strongly convex function on $\Theta$;
			
			Increment the time index by setting $t = t+1$;
		}
	}
\end{algorithm}

In a nonstationary environment, the seller can submit only a single price query in each period to obtain an appropriate gradient estimate, since the underlying functions may change over time. Therefore, we rely on a one-point feedback approach to construct the gradient estimator $G_t$ in Step~7 of Algorithm~\ref{alg:static}. Despite this restriction, $G_t$ admits a broad and flexible formulation that encompasses many widely used estimation schemes as special cases, including estimators based on spherical smoothing \citep{Flaxman2005SODA}, Gaussian smoothing \citep{Gao2022ICML},  coordinate-wise perturbations \citep{Besbes2015OR}, and simultaneous perturbations \citep{Spall1997Automatica}.
In this study, we do not confine our analysis to a specific form of $G_t$. Instead, we require it to satisfy the conditions stated in Assumptions~\ref{asp:ut} and~\ref{asp:oracle}.

\begin{assumption}[Perturbation]
	\label{asp:ut}
	For every time period $t$, it holds that $x_t+\delta \tilde U_t\in \mathcal K$. 
	Furthermore, the random vector $\tilde U_t$ has zero conditional mean and bounded second moment; that is, there exists a constant $C_u>0$ such that, with probability one,
	\[
	\mathsf E\left[ \tilde U_t | \mathcal F_t   \right]=0 \text{ and }
	\mathsf E\left[   \|\tilde U_t\|^2  | \mathcal F_t \right]\leq C_u,
	\]
	where 
	$
	\mathcal F_t := \sigma\{x_1, \tilde U_1,\bar U_1, \xi_1, \ldots,  \tilde U_{t-1}, \bar U_{t-1}, \xi_{t-1}\}
	$ denotes the filtration capturing all information available up to $t$.
\end{assumption}

\begin{assumption}[Bias--Variance]
	\label{asp:oracle}
	There exist constants $C_1,C_2,p,q>0$ such that, with probability one,
	\begin{equation*}
		\| \nabla r_t(x_t) - \mathsf E\left[ G_t | \mathcal F_t \right]  \| 
		\leq C_1 \delta^p  \text{ and }
		\mathsf E\left[
		\| G_t-\mathsf E[G_t|\mathcal F_t]  \|^2 | \mathcal F_t\right]
		\leq C_2 \delta^{-q}.
	\end{equation*}
	
\end{assumption}
 Assumption~\ref{asp:ut} is not restrictive and guarantees the validity of the stochastic gradient construction.
The constants $C_1,C_2,p,q$ in Assumption~\ref{asp:oracle} are properties of the gradient estimation scheme and the underlying problem (e.g., smoothness of $r_t(\cdot)$), and do not depend on the algorithmic parameters $\delta$ and $\eta$.
Concrete examples of gradient estimators satisfying Assumption~\ref{asp:oracle} can be derived under additional regularity conditions.
For instance, following an analysis similar to that of the stochastic gradient estimator proposed by \cite{Spall1997Automatica}, if   $\tilde U_t$ is symmetrically distributed with respect to the origin and $\bar U_t = h(\tilde U_t)$, where $h:\,\mathbb R^d \to \mathbb R^d$ is an odd function satisfying $\ex[\bar U_t]=0$ and $\ex[\bar U_t \tilde U_t^{\ts}] = \mathds{1}$ with $\mathds 1$ being the identity matrix, then the resulting gradient estimator satisfies Assumption \ref{asp:oracle} with $p=1,q=2$. Alternatively, building on the seminal work on spherical smoothing \citep{Flaxman2005SODA}, if   $\bar U_t = \mathbb N_W(\tilde U_t)\mathrm{vol}(\mathbb S_W)/\mathrm{vol}(W)$, where $W$ is a centrally symmetric convex body with boundary $\mathbb S_W$, $\tilde U_t$ is uniformly distributed on $\mathbb S_W$, and $\mathbb N_W(\tilde U_t)$ denotes the outward normal vector of $\mathbb S_W$ at $\tilde U_t$, then the resulting gradient estimator satisfies Assumption \ref{asp:oracle} with $p=q=2$.

Under the above assumptions, we establish the following proposition, which provides a general upper bound on the static regret of Algorithm~\ref{alg:static}. The analysis utilizes techniques developed for deterministic optimization via the mirror descent algorithm \citep{Beck2003ORL}, with appropriate modifications tailored to our online stochastic setting; see Section~\ref{ec:sec3} of the E-Companion for a detailed proof.
\begin{proposition}\label{pps:static}
	Suppose Assumptions~\ref{asp:rf}--\ref{asp:oracle} hold. 
	Define  $B=\sup_{x,y\in \Theta}B_{\psi}(x,y)$ and $C_r=\sup_{x\in \Theta}\| \nabla r_t(x) \|^2 $ for all $t$. 
	Then the static regret of Algorithm~\ref{alg:static} satisfies
	\begin{equation}\label{eq:oco_reg}
		\max_{x\in \Theta}\sum_{t=1}^{\tau}r_t(x)- 
		\mathsf E\left[\sum_{t=1}^{\tau}
		r_t(\tilde x_t)
		\right] 
		\leq
		\frac{B}{\eta}+\frac{\tau\eta}{\alpha}
		\left(C_1^2\delta^{2p}+\frac{C_2}{2}\delta^{-q} +C_r \right)
		+\tau  C_1\delta^p\sqrt{d}
		+\frac{\tau LC_u}{2}\delta^2.
	\end{equation}
\end{proposition}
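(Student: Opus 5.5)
The plan is to split the regret into the regret of the centers $x_t$ and the cost of perturbing them. For any fixed $x\in\Theta$, write
\[
r_t(x)-r_t(\tilde x_t)=\bigl[r_t(x)-r_t(x_t)\bigr]+\bigl[r_t(x_t)-r_t(\tilde x_t)\bigr].
\]
The second bracket is handled by smoothness. By Assumption~\ref{asp:rf}, the $L$-Lipschitz gradient gives
\[
r_t(x_t+\delta\tilde U_t)\ge r_t(x_t)+\delta\nabla r_t(x_t)^{\ts}\tilde U_t-\tfrac{L}{2}\delta^2\|\tilde U_t\|^2 .
\]
We take conditional expectations given $\mathcal F_t$ and use Assumption~\ref{asp:ut}. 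Since $x_t$ is $\mathcal F_t$-measurable and $\tilde U_t$ has zero conditional mean, the linear term vanishes, so $\ex[r_t(x_t)-r_t(\tilde x_t)]\le LC_u\delta^2/2$. Summing over $t$ produces the last term $\tau LC_u\delta^2/2$.

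For the first bracket, concavity gives $r_t(x)-r_t(x_t)\le \nabla r_t(x_t)^{\ts}(x-x_t)$. Set $g_t:=\ex[G_t|\mathcal F_t]$, let $b_t:=\nabla r_t(x_t)-g_t$ be the bias, and let $\epsilon_t:=G_t-g_t$ be the noise. Then
\[
\nabla r_t(x_t)^{\ts}(x-x_t)=G_t^{\ts}(x-x_t)+b_t^{\ts}(x-x_t)-\epsilon_t^{\ts}(x-x_t).
\]
The noise term has zero conditional expectation because $x_t$ is $\mathcal F_t$-measurable and $x$ is fixed. The bias term is at most $C_1\delta^p\,\mathrm{diam}(\Theta)=C_1\delta^p\sqrt d$ by Assumption~\ref{asp:oracle} and the normalization of $\Theta$; summed over $t$ this gives $\tau C_1\delta^p\sqrt d$.

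The remaining term $\sum_t G_t^{\ts}(x-x_t)$ is controlled by the standard mirror-ascent three-point inequality of \cite{Beck2003ORL}. The optimality condition of the proximal step, combined with $\alpha$-strong convexity of $\psi$, gives
\[
\eta G_t^{\ts}(x-x_t)\le B_\psi(x,x_t)-B_\psi(x,x_{t+1})+\frac{\eta^2}{2\alpha}\|G_t\|^2 .
\]
To obtain it, expand $\eta G_t^{\ts}(x-x_{t+1})\le B_\psi(x,x_t)-B_\psi(x,x_{t+1})-B_\psi(x_{t+1},x_t)$. Then bound $\eta G_t^{\ts}(x_{t+1}-x_t)-B_\psi(x_{t+1},x_t)\le \eta\|G_t\|\|x_{t+1}-x_t\|-\tfrac{\alpha}{2}\|x_{t+1}-x_t\|^2\le \eta^2\|G_t\|^2/(2\alpha)$.

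Summing, the Bregman terms telescope to at most $B$, which gives the term $B/\eta$. The last period needs no separate treatment: $G_\tau=0$, so its contribution to $\sum_t G_t^{\ts}(x-x_t)$ is zero, while the bias and noise terms for $t=\tau$ are still covered by the bounds above.

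The step needing care is bounding $\ex\|G_t\|^2$ to match the stated constants $C_1^2\delta^{2p}+\tfrac{C_2}{2}\delta^{-q}+C_r$. The natural route is
\[
\ex[\|G_t\|^2|\mathcal F_t]=\|g_t\|^2+\ex[\|\epsilon_t\|^2|\mathcal F_t]\le 2\|b_t\|^2+2\|\nabla r_t(x_t)\|^2+C_2\delta^{-q}.
\]
Multiplying by $\eta/(2\alpha)$ gives $\tfrac{\eta}{\alpha}\bigl(C_1^2\delta^{2p}+C_r+\tfrac{C_2}{2}\delta^{-q}\bigr)$, which is exactly the stated bracket. So I would use this orthogonal decomposition together with $\|a+b\|^2\le 2\|a\|^2+2\|b\|^2$ on the mean part, and check that the factor $1/2$ from the three-point inequality aligns. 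Finally, since $x$ is a deterministic point of $\Theta$ (the functions are chosen obliviously), the bound holds for every $x$, and taking the maximum over $x\in\Theta$ yields \eqref{eq:oco_reg}.
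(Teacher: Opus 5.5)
Your proposal is correct and follows essentially the same route as the paper's proof. Smoothness with the conditionally zero-mean perturbation gives $\tfrac{\tau LC_u}{2}\delta^2$, concavity plus the conditional bias bound gives $\tau C_1\delta^p\sqrt d$, the variational inequality with the three-point identity and Young's inequality telescopes to $B/\eta$, and the same bias--variance split of $\mathsf E\|G_t\|^2$ produces the middle term. One point you share with the paper needs tightening: since the algorithm sets $G_\tau=0$, the ``bias'' at $t=\tau$ is $\nabla r_\tau(x_\tau)$ itself, so Assumption~\ref{asp:oracle} does not sensibly cover that period; it is cleaner to let $G_\tau$ denote the unused estimator $\phi_\tau(\tilde x_\tau)\bar U_\tau/\delta$ together with a phantom update $x_{\tau+1}\in\Theta$, which yields exactly the stated bound.
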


Proposition~\ref{pps:static} clarifies the basic trade-off created by one-point feedback learning. The regret bound contains terms associated with optimization stability, smoothing bias, and estimation variance. Intuitively, more aggressive perturbation improves the informativeness of the revenue signal but also introduces bias, whereas smaller perturbation reduces bias but makes learning noisier. In operational terms, the result formalizes a familiar experimentation trade-off: insufficient experimentation slows learning, while excessive experimentation sacrifices current revenue.

Building on Proposition~\ref{pps:static}, we further tune the parameters $\eta$ and $\delta$ to optimize the upper bound on the right-hand side of \eqref{eq:oco_reg}, with all other problem- and estimator-dependent constants held fixed.
For simplicity, we assume $\delta \in (0,1)$ and retain only the dominant terms, approximating the bound by
\[
\frac{B}{\eta} + \frac{\tau \eta C_2}{2\alpha} \delta^{-q} + \tau \hat{C} \delta^{\hat{p}},
\]
where $\hat C:=\max\{ C_1\sqrt{d}, LC_u/2 \}$ and $\hat p:=\min\{ p,2 \}$. Minimizing this approximate bound yields the result in Corollary~\ref{crl:static}. The use of the $O(\cdot)$ notation in the corollary is intended to avoid an overly lengthy statement. 

The proof of the corollary follows from routine calculations and is therefore omitted.

\begin{corollary}\label{crl:static}
	Suppose all conditions in Proposition~\ref{pps:static} hold. 
	Define $\hat C=\max\{ C_1\sqrt{d}, LC_u/2 \}$ and $\hat p=\min\{ p,2 \}$.
	If the algorithmic parameters are chosen to satisfy
	$$
	\eta = \sqrt{2}
	\left(\frac{\alpha}{C_2}\right)^{\frac{\hat p}{2\hat p+q} }
	\left( \frac{q}{\hat p\hat C} \right)^{ \frac{q}{2\hat p+q} }
	B^{\frac{\hat p +q}{2\hat p +q} }
	\tau^{-\frac{\hat p+q}{2\hat p+q} }  \text{ and }
	\delta=\left( \frac{C_2B}{2\alpha} \right)^{\frac{1}{2\hat p+q} }
	\left( \frac{q}{\hat C \hat p} \right)^{\frac{2}{2\hat p+q} }
	\tau^{-\frac{1}{2\hat p+q} }< 1,
	$$
	then the static regret of Algorithm~\ref{alg:static} is of order
	$
	O(\tau^{(\hat{p}+q) / (2\hat{p}+q)}).
	$
	Moreover, if all constants $L,C_u,C_1,C_2,C_r,B$ scale at most polynomially with the dimension $d$, denoted as $O(\mathrm{poly}(d))$, then the hidden constant in  $O(\tau^{(\hat{p}+q) / (2\hat{p}+q)})$ is also of order $O(\mathrm{poly}(d))$. Consequently, the static regret of Algorithm~\ref{alg:static} satisfies
	$
	O(\mathrm{poly}(d)\tau^{(\hat{p}+q) / (2\hat{p}+q)}).
	$
\end{corollary}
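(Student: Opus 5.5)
The plan is to plug the prescribed $\eta$ and $\delta$ into the bound of Proposition~\ref{pps:static} and check that each term is $O(\tau^{(\hat p+q)/(2\hat p+q)})$ with a polynomial-in-$d$ constant. First I will reduce to the approximate bound stated before the corollary. Since $\delta<1$ and $\hat p=\min\{p,2\}$, we have $\delta^{p}\le\delta^{\hat p}$ and $\delta^2\le\delta^{\hat p}$. Hence
\[
\tau C_1\delta^p\sqrt d+\tfrac{\tau LC_u}{2}\delta^2\le 2\tau\hat C\delta^{\hat p}.
\]
Likewise $C_1^2\delta^{2p}\le C_1^2$. Therefore the right-hand side of \eqref{eq:oco_reg} is at most
\[
\frac{B}{\eta}+\frac{\tau\eta C_2}{2\alpha}\delta^{-q}+2\tau\hat C\delta^{\hat p}+\frac{\tau\eta}{\alpha}\bigl(C_1^2+C_r\bigr).
\]

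Second, I will justify the parameter choice by minimizing $f(\eta,\delta)=B/\eta+\tau\eta C_2\delta^{-q}/(2\alpha)+\tau\hat C\delta^{\hat p}$. For fixed $\delta$, the optimal step is $\eta=\sqrt{2\alpha B/(\tau C_2)}\,\delta^{q/2}$, which gives $f=\sqrt{2BC_2\tau/\alpha}\,\delta^{-q/2}+\tau\hat C\delta^{\hat p}$. Setting the derivative in $\delta$ to zero yields $\delta^{\hat p+q/2}\propto \tau^{-1/2}$, that is, $\delta\asymp\tau^{-1/(2\hat p+q)}$. Substituting back gives $\eta\asymp\tau^{-1/2}\tau^{-q/(2(2\hat p+q))}=\tau^{-(\hat p+q)/(2\hat p+q)}$. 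I will confirm by direct algebra that the constants in the stated formulas match these expressions; the exact constants only matter through their polynomial dependence on $d$.

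Third, I will evaluate each term at this choice:
\begin{itemize}
\item $B/\eta\asymp\tau^{(\hat p+q)/(2\hat p+q)}$;
\item $\tau\eta\delta^{-q}\asymp\tau^{1-(\hat p+q)/(2\hat p+q)+q/(2\hat p+q)}=\tau^{(\hat p+q)/(2\hat p+q)}$;
\item $\tau\delta^{\hat p}\asymp\tau^{1-\hat p/(2\hat p+q)}=\tau^{(\hat p+q)/(2\hat p+q)}$;
\item the leftover term $\tau\eta(C_1^2+C_r)/\alpha\asymp\tau^{\hat p/(2\hat p+q)}$, which is of lower order.
\end{itemize}
Summing these gives the claimed rate.

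Fourth, for the dimension statement I will note that $\eta$, $\delta$ and all coefficients are products of powers of $\alpha,B,C_2,\hat C,q,\hat p$ with exponents bounded by absolute constants, since $\hat p\le 2$ and $q>0$ are fixed. A product of fixed powers of $\mathrm{poly}(d)$ quantities is again $\mathrm{poly}(d)$, provided $\alpha$ does not depend on $d$ (e.g.\ the Euclidean regularizer).

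The main obstacle is a subtlety in this last step. Negative exponents, for instance of $\hat C$ in $\delta$ or of $C_2$ in $\eta$, could produce inverse polynomials. That is still harmless for an upper bound, because inverse polynomials are bounded by polynomials once the constants are bounded below. The positive-exponent parts likewise remain polynomial in $d$.
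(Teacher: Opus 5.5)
Your proposal is correct and follows the route the paper indicates: bound by the dominant terms $B/\eta+\tau\eta C_2\delta^{-q}/(2\alpha)+\tau\hat C\delta^{\hat p}$, balance them, then substitute. You are more careful than the paper in keeping the dropped term $\tau\eta(C_1^2\delta^{2p}+C_r)/\alpha$, which carries $C_2^{-\hat p/(2\hat p+q)}\hat C^{-q/(2\hat p+q)}$; this term is exactly where your lower-bound caveat on $C_2,\hat C$ (and $d$-independence of $\alpha$) is genuinely needed for the $\mathrm{poly}(d)$ claim.

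One minor point: the stated $\delta$ is the exact minimizer, but the stated $\eta$ exceeds $\sqrt{2\alpha B/(\tau C_2)}\,\delta^{q/2}$ by the constant factor $2^{q/(2(2\hat p+q))}$, so your ``constants match'' check will not come out exact for $\eta$. This is harmless, because Step~3 only uses the $\tau$-exponents and the polynomial dependence on $d$.
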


We remark that, in the special case where Assumption~\ref{asp:oracle} holds with $p=q=2$ (e.g., using spherical smoothing), the static regret is of order $O(\tau^{2/3})$. This recovers the result of \cite{Saha2011AISTATS} and matches the lower bound established in the literature \citep[see][Theorem~4]{Hu2016AISTATS}.
On the other hand, the $O(\mathrm{poly}(d))$ scaling assumption on all constants is typically satisfied for a broad class of practical models. In particular, under the standard choice of the mirror map given by the Euclidean regularizer $\psi(x)=\|x\|^2/2$, the Bregman diameter $B=\sup_{x,y\in\Theta} B_\psi(x,y)$ scales as $O(d)$ whenever the set $\Theta$ has diameter $\sqrt{d}$.
In addition, for one-point gradient estimators based on spherical or Gaussian smoothing, as well as simultaneous perturbation schemes, the corresponding bias and variance constants $C_1$ and $C_2$, together with the perturbation second-moment bound $C_u$, scale at most polynomially with the dimension $d$. Finally, in typical pricing optimization models with smooth concave revenue functions defined on bounded domains, both the smoothness constant $L$ and the gradient bound $C_r=\sup_{x\in\Theta}\|\nabla r_t(x)\|^2$ are also polynomial in $d$.\footnote{Although all constants scale polynomially with the dimension, we acknowledge that the polynomial degree may be relatively high; such dependence is often unavoidable when using gradient estimation with one-point feedback. We do not pursue this issue further and refer interested readers to the comprehensive book by \cite{Lattimore_2026}.}

\subsection{Nonstationary Stochastic Setting}\label{sec:ds}

To analyze the dynamic regret, we ``port over'' results obtained in the adversarial OCO setting to the nonstationary stochastic setting through a simple restarting procedure \citep{Besbes2015OR}.
Specifically, we partition the time horizon $T$ into $\lceil T/\tau \rceil$ batches of equal length $\tau$ (except possibly the final batch, which can be less than $\tau$ periods). Within each batch, we run Algorithm~\ref{alg:static} developed in the previous subsection. Thus, the resulting algorithm can be viewed as restarting Algorithm~\ref{alg:static} at the beginning of each batch.
The algorithm description is provided in Algorithm~\ref{alg:dynamic}.
\begin{algorithm}[htbp]  
\caption{Restarting Online Mirror Ascent with One-Point Feedback}
\label{alg:dynamic}
\KwIn{time horizon $T$; batch size $\tau$;
}
\BlankLine

\For{$b=1,\ldots, \lceil T/\tau\rceil  $}{
	
	Choose the perturbation size $\delta_b$  and step size $\eta_b$ for the $b$-th batch, set the time index $t = (b-1)\tau + 1$, and select an arbitrary initial point $x_{(b-1)\tau+1} \in \Theta$ for the current batch;
	
	\While{$t \leq \min\{b\tau, T \}$}
	{
		Generate a set of jointly distributed random variables $(\tilde U_t, \bar U_t)\in \mathbb R^d \times \mathbb R^d$;
		
		\uIf{$t=T$}
		{
			Set the price $\tilde x_t = x_t + \delta_b \tilde U_t$, assign $G_t = 0$, and terminate; 
		}
		\uElseIf{$t = b\tau$}
		{
			Set the price $\tilde x_t = x_t + \delta_b \tilde U_t$, assign $G_t = 0$, and break the while loop;
		}
		\Else{
			Set the price $\tilde x_t=x_t+\delta_b \tilde U_t$ and 
			construct a stochastic gradient estimator $G_t$ using one-point feedback of the form
			$$
			G_t=\frac{\phi_t(\tilde x_t)}{\delta_b }\bar U_t.
			$$
			
			Update $x_t$ for the next period via the mirror ascent step
			$$
			x_{t+1}=\argmax_{x\in \Theta} \left\{\eta_b  
			G_t^{\mathtt T}x - B_{\psi }(x,x_t)
			\right\},
			$$
			where $B_{\psi}(x,y):=\psi(x)-\psi(y)-\nabla \psi (y)^{\mathtt T}(x-y)$ for all $x,y \in \Theta $, and $\psi(\cdot)$ is an $\alpha$-strongly convex function on $\Theta$;
			
			Increment the time index by setting $t = t+1$;
		}
	}  
}

\end{algorithm}

Recall that $r_t(\cdot)$ is continuously differentiable on a compact set for all $t$ (Assumption~\ref{asp:rf}), so it is Lipschitz continuous. 
Now, we  formally connect the performance of Algorithm~\ref{alg:dynamic} relative to the dynamic benchmark in the nonstationary stochastic setting with that of Algorithm~\ref{alg:static} relative to the single best action in the adversarial OCO setting, as demonstrated in Theorem~\ref{the:dyn_reg}. The proof follows a decomposition similar to that in \cite[][Proposition~2]{Besbes2015OR}, in which the dynamic regret is divided into batches. Within each batch, the regret consists of two components: a static regret term and a term capturing the performance gap between the single best price benchmark and the dynamic benchmark. The complete proof is deferred to Section~\ref{ec:sec3} of the E-Companion.

\begin{theorem}\label{the:dyn_reg}
Suppose all conditions in Proposition~\ref{pps:static} hold.
Let $L_r$ be the uniform Lipschitz constant of the sequence of functions $\{r_t(\cdot)\}_{t=1}^T$. 
Then the dynamic regret of  Algorithm~\ref{alg:dynamic}  satisfies 
\begin{equation*}
	\sum_{t=1}^{T}r_t(x_t^*)- \mathsf E\left[\sum_{t=1}^{T}r_t(\tilde x_t)		\right] 
	\leq \sum_{b=1}^{\lceil T/\tau \rceil}U_{b}
	+L_r\tau V_T,
\end{equation*}
where $U_{b}$ denotes the upper bound appearing on the right-hand side of \eqref{eq:oco_reg}, with $\delta$ and $\eta$ replaced by $\delta_b$ and $\eta_b$, respectively.
\end{theorem}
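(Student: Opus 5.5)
The proof splits the dynamic regret batch by batch, in the spirit of \cite[Proposition~2]{Besbes2015OR}. Let $\mathcal T_b=\{(b-1)\tau+1,\ldots,\min\{b\tau,T\}\}$ denote the $b$-th batch. Pick any point $x^{(b)}\in\Theta$; the natural choice is the maximizer $x^*_{t_b}$ at the first period $t_b$ of the batch, which lies in $\Theta$ by Assumption~\ref{asp:rf}. For each batch we write
\[
\sum_{t\in\mathcal T_b}r_t(x_t^*)-\mathsf E\Big[\sum_{t\in\mathcal T_b}r_t(\tilde x_t)\Big]
=\Big(\sum_{t\in\mathcal T_b}r_t(x_t^*)-\sum_{t\in\mathcal T_b}r_t(x^{(b)})\Big)
+\Big(\sum_{t\in\mathcal T_b}r_t(x^{(b)})-\mathsf E\Big[\sum_{t\in\mathcal T_b}r_t(\tilde x_t)\Big]\Big).
\]
Summing over $b$ then gives the total dynamic regret.

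\textbf{Second term.} Bound it by $\max_{x\in\Theta}\sum_{t\in\mathcal T_b}r_t(x)-\mathsf E[\sum_{t\in\mathcal T_b}r_t(\tilde x_t)]$.

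Within batch $b$, Algorithm~\ref{alg:dynamic} is exactly Algorithm~\ref{alg:static} run with horizon $|\mathcal T_b|\le\tau$, parameters $(\delta_b,\eta_b)$, and a fresh initial point in $\Theta$. The zero gradient assigned at the final period of the batch matches the convention $G_\tau=0$.

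We need to check that Proposition~\ref{pps:static} still applies when it is conditioned on the history before the batch. Assumptions~\ref{asp:ut}--\ref{asp:oracle} are stated conditionally on the global filtration $\mathcal F_t$, and the adversary is oblivious, so the functions in the batch are fixed in advance. The proposition's proof therefore goes through verbatim with $\mathcal F_{t_b}$-conditional expectations and then taking total expectation.

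This yields at most the bound \eqref{eq:oco_reg} with $\tau$ replaced by $|\mathcal T_b|$. That is at most $U_b$, since the right-hand side of \eqref{eq:oco_reg} is nondecreasing in the horizon. Summing gives $\sum_b U_b$.

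\textbf{First term.} For $t\in\mathcal T_b$, Lipschitz continuity gives
\[
r_t(x_t^*)-r_t(x^*_{t_b})\le L_r\|x_t^*-x^*_{t_b}\|\le L_r\sum_{s=t_b+1}^{t}\|x_s^*-x_{s-1}^*\|.
\]
The right-hand side is at most $L_r V_b$, where $V_b:=\sum_{s\in\mathcal T_b,\,s>t_b}\|x_s^*-x_{s-1}^*\|$ is the path variation inside batch $b$. Summing over the at most $\tau$ periods of the batch gives at most $L_r\tau V_b$.

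Summing over batches, the within-batch variations are disjoint pieces of the full path sum, so $\sum_b V_b\le V_T$. The first term therefore totals at most $L_r\tau V_T$.

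Two technical points remain. First, $V_T$ is defined as a maximum over selections $x_t^*\in\Theta^*_t$, so the bound holds for whichever maximizers are used in the regret. Second, the Lipschitz constant $L_r$ must be valid on $\mathcal K\supset\Theta$. This holds because $x^*_{t_b}$ and $x^*_t$ both lie in $\Theta$.

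\textbf{Main obstacle.} The delicate step is the restart in the second term. We must justify that the static-regret guarantee survives being started at a random, history-dependent time with a history-dependent filtration. This requires verifying that the proof of Proposition~\ref{pps:static} uses only the conditional bias--variance bounds and the conditional moments of $\tilde U_t$, so that the smoothing term $\tau LC_u\delta^2/2$ and the bias term are not affected by conditioning. I would handle this by restating the proposition conditionally on $\mathcal F_{t_b}$ and then integrating.
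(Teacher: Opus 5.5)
Your proposal is correct and is essentially the paper's proof. The paper likewise splits each batch's regret into two parts. The first is the within-batch static regret $J_{1,b}\le U_b$, using that the bound in \eqref{eq:oco_reg} is nondecreasing in the horizon. The second is the gap $J_{2,b}$ between the dynamic benchmark and the best fixed price, which the paper bounds by $L_r\tau V_b$ by comparing with $x^*_{\bar t}$ for an arbitrary period $\bar t$ of the batch; this is your choice with $\bar t=t_b$.

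Your extra care about restarting is harmless but not needed. The restart times $(b-1)\tau+1$ are deterministic, not random, and Assumptions~\ref{asp:ut}--\ref{asp:oracle} are stated for the global filtration, so the proof of Proposition~\ref{pps:static} applies verbatim within each batch. Your definition of $V_b$ without the cross-batch increments also makes $\sum_b V_b\le V_T$ slightly cleaner than the paper's version.
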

Theorem~\ref{the:dyn_reg} shows how restarting transforms a within-batch learning guarantee into a dynamic regret bound in a nonstationary environment. The resulting decomposition highlights two key components: the learning cost incurred within each batch and the adaptation cost induced by environmental drift across periods. This characterization underscores a central design challenge in nonstationary pricing: how to balance within-batch learning with cross-batch adaptation. If the seller relies too heavily on older observations, pricing decisions may react too slowly to current market conditions. If the seller restarts too frequently, however, too little information is accumulated to learn effectively from noisy one-point revenue observations.

Similar to the analysis in Corollary~\ref{crl:static}, the upper bound established in Theorem~\ref{the:dyn_reg} can be further optimized by appropriately tuning the parameters. In particular, we set $\delta_b$ and $\eta_b$ according to the specifications of  $\delta$ and $\eta$ in Corollary~\ref{crl:static}, and then minimize with respect to $\tau$, leading to the following result (Corollary~\ref{crl:dynamic}); the proof is provided in Section~\ref{ec:sec3} of the E-Companion.

\begin{corollary}\label{crl:dynamic}
Suppose all conditions in Theorem~\ref{the:dyn_reg} hold. Define 
$\hat C=\max\{ C_1\sqrt{d}, LC_u/2 \}$ and $\hat p=\min\{ p,2 \}$.
Let $$\tau =\left\lceil \left(\frac{\sqrt{d}T}{V_T}\right)
^{\frac{2\hat p+q}{3\hat p +q} } \right\rceil.$$
All constants $L, L_r, C_u,C_1,C_2,C_r,B$ are assumed to scale at most polynomially with the dimension $d$.
If  the algorithmic parameters are chosen to satisfy
$$
\eta_b = \sqrt{2}
\left(\frac{\alpha}{C_2}\right)^{\frac{\hat p}{2\hat p+q} }
\left( \frac{q}{\hat p\hat C} \right)^{ \frac{q}{2\hat p+q} }
B^{\frac{\hat p +q}{2\hat p +q} }
\tau_b^{-\frac{\hat p+q}{2\hat p+q} }  
\text{ and }
\delta_b=\left( \frac{C_2B}{2\alpha} \right)^{\frac{1}{2\hat p+q} }
\left( \frac{q}{\hat C \hat p} \right)^{\frac{2}{2\hat p+q} }
\tau_b^{-\frac{1}{2\hat p+q} }< 1,
$$
where $\tau_b$ denotes the size of the $b$-th batch, which may differ from $\tau$ only for the final batch,
then the dynamic regret of Algorithm~\ref{alg:dynamic} is of order

$
O( 
\mathrm{poly}(d)
T^{(2\hat p+q)/(3\hat p+q)  }
V_T^{\hat p/(3\hat p+q)  }
).
$
\end{corollary}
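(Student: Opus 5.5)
Starting from Theorem~\ref{the:dyn_reg}, I will bound each per-batch term $U_b$ using the tuned parameters of Corollary~\ref{crl:static}, sum over batches, and then balance the total against $L_r\tau V_T$ through the prescribed choice of $\tau$.

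\emph{Per-batch bound.} For a batch of length $\tau_b$, with $\eta_b,\delta_b$ as specified, Corollary~\ref{crl:static} applied with horizon $\tau_b$ gives
\[
U_b\le K\,\tau_b^{(\hat p+q)/(2\hat p+q)},
\qquad K=O(\mathrm{poly}(d)).
\]
The point to verify is that $K$ depends only on $L,C_u,C_1,C_2,C_r,B,\alpha,p,q$ and not on $\tau_b$. Substituting $\eta_b\propto\tau_b^{-(\hat p+q)/(2\hat p+q)}$ and $\delta_b\propto\tau_b^{-1/(2\hat p+q)}$ into \eqref{eq:oco_reg} shows that $B/\eta_b$, $\tau_b\eta_b\delta_b^{-q}$ and $\tau_b\delta_b^{\hat p}$ all scale as $\tau_b^{(\hat p+q)/(2\hat p+q)}$. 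The remaining lower-order terms can be absorbed as follows. Since $\delta_b<1$ and $\hat p\le p$, $\hat p\le 2$, we have $\delta_b^p\le\delta_b^{\hat p}$ and $\delta_b^2\le\delta_b^{\hat p}$. Also $\tau_b\eta_b(C_1^2\delta_b^{2p}+C_r)\lesssim\tau_b\eta_b\delta_b^{-q}$.

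\emph{Summation.} There are at most $\lceil T/\tau\rceil\le 2T/\tau$ batches, and each has $\tau_b\le\tau$ (we may assume $\tau\le T$; otherwise there is a single batch and the argument is easier). Hence
\[
\sum_b U_b\le 2K\,T\,\tau^{(\hat p+q)/(2\hat p+q)-1}=2K\,T\,\tau^{-\hat p/(2\hat p+q)}.
\]
Adding the drift term gives the total bound
\[
\mathrm{D\text{-}Reg}(T)\le 2K\,T\,\tau^{-\hat p/(2\hat p+q)}+L_r\tau V_T .
\]

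\emph{Balancing.} Let $\tau_0=(\sqrt dT/V_T)^{(2\hat p+q)/(3\hat p+q)}$, so that $\tau_0\le\tau\le 2\tau_0$ because $\tau_0\ge1$ (recall $V_T\le\sqrt dT$). The first term is decreasing in $\tau$, so it is at most $2K\,T\,\tau_0^{-\hat p/(2\hat p+q)}=2K\,T^{(2\hat p+q)/(3\hat p+q)}V_T^{\hat p/(3\hat p+q)}d^{-\hat p/(2(3\hat p+q))}$. The second term is at most $2L_r\tau_0V_T=2L_r\,d^{(2\hat p+q)/(2(3\hat p+q))}\,T^{(2\hat p+q)/(3\hat p+q)}V_T^{1-(2\hat p+q)/(3\hat p+q)}$. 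Since $1-(2\hat p+q)/(3\hat p+q)=\hat p/(3\hat p+q)$, both terms have order $\mathrm{poly}(d)\,T^{(2\hat p+q)/(3\hat p+q)}V_T^{\hat p/(3\hat p+q)}$, which is the claimed bound.

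\emph{Main obstacle.} The hardest step is the bookkeeping that shows a single constant $K$ works uniformly for all batches, including the shorter final batch. This relies on the hypothesis $\delta_b<1$ and on tracking which terms dominate. The mild concern is whether $\delta_b<1$ holds when $\tau_b$ is small; I will take it as an assumed condition, as the statement does. A secondary check is the integer rounding in $\tau$, which the factor-of-two bounds above handle.
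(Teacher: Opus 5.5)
Your proposal is correct and takes the same route as the paper. You bound each $U_b$ by $O(\mathrm{poly}(d))\,\tau_b^{(\hat p+q)/(2\hat p+q)}$ via Corollary~\ref{crl:static}, use $\lceil T/\tau\rceil\le 2T/\tau$ to reach $O(T\tau^{-\hat p/(2\hat p+q)})+L_r\tau V_T$, and substitute $\tau\le 2(\sqrt d\,T/V_T)^{(2\hat p+q)/(3\hat p+q)}$, which gives the same $d$-exponents as the paper. Your explicit check that the per-batch constant is uniform in $\tau_b$ (using $\delta_b<1$) fills in a step the paper leaves implicit. Also, $\tau\le T$ need not be assumed: it follows from $V_T\ge\sqrt d$ and the exponent $(2\hat p+q)/(3\hat p+q)$ being below one, so your ``single batch'' aside is never needed.
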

The specification of $\tau$ in Corollary~\ref{crl:dynamic} identifies the restart frequency when the variation budget is known. In particular, more volatile environments call for more frequent forgetting of past observations, while more stable environments justify longer learning windows. This insight is  relevant for operational settings in which managers must decide how much historical data should influence current pricing decisions.
Moreover, the result of Corollary~\ref{crl:dynamic} shows that the dynamic regret depends on a suitable measure of environmental variation rather than solely on the time horizon, thereby clearly separating the contributions of $T$ and the variation budget $ V_T$. In the special case $p = q = 2$, our algorithm achieves the same $T^{3/4}$ dependence as established in the literature on nonstationary online learning with one-point feedback; see, e.g., \cite{Chen2019ieee, Zhao2021JMLR}. (The polynomial dependence on $V_T$ differs due to the use of distinct variation measures.)

\section{Unknown $V_T$: Bandit over Bandit} \label{sec:unknown}

From Theorem~\ref{the:dyn_reg} and Corollary~\ref{crl:dynamic}, we observe that the ``optimal'' parameter configurations of the proposed algorithm require prior knowledge of the path variation budget $V_T$, which may be unavailable for real implementations. In this section, we overcome this limitation by adopting the bandit-over-bandit strategy \citep[see][and the references therein]{Cheung2022MS}, that is, by constructing a meta-layer atop Algorithm~\ref{alg:dynamic} to ``hedge'' the unknown quantities.
This approach is intuitively related to the idea of \emph{online ensembles}, which perform a grid search over candidate parameters by maintaining them in parallel and employing expert-weighting algorithms to combine predictions and track the best-performing parameter \citep{Zhao2021JMLR}. Motivated by this insight and building on the construction of \cite{Zhao2021JMLR}, we leverage the adaptive online learning algorithm of \cite{Erven2021JMLR} as a meta-algorithm and combine it with the restarting online mirror ascent procedure described above.

To elaborate, we maintain a pool of $N$ ``experts,'' where expert $i\in \{1,\ldots,N\}$ runs Algorithm~\ref{alg:dynamic} with a candidate restarting batch size $\tau^{(i)}$. At each period $t$, the meta-algorithm aggregates all experts' recommendations $\{x_t^{(i)}\}_{i=1}^N$ into a single decision $x_t$ according to the current expert weights $\{w_t^{(i)}\}_{i=1}^N$. 
Then, as described in the previous section, the seller posts a price $\tilde x_t = x_t + \delta \tilde U_t$, observes the resulting revenue, and constructs a one-point feedback stochastic gradient estimator $G_t$. 
Importantly, the experts do not interact with the market separately: in each period, the market generates only one feedback observation, corresponding to the single posted price $\tilde{x}_t$, and the algorithm accordingly constructs only one gradient estimate $G_t$.
This shared gradient signal is broadcast to all experts to update their internal states, while the meta-layer updates the expert weights via exponential reweighting based on the experts' estimated performance. The complete procedure is summarized in Algorithm~\ref{alg:hedge}.

\begin{algorithm}[htbp]
\caption{Expert-Weighting over Restarting Online Mirror Ascent with One-Point Feedback}
\label{alg:hedge}
\KwIn{time horizon $T$, number of experts $N$, candidate restarting batch sizes $\{\tau^{(1)},\ldots,\tau^{(N)}\}$, and meta-algorithm learning rate $\varepsilon$.}
\BlankLine

Set $t=1$, and for each $i \in \{1,\ldots,N\}$, initialize the $i$-th expert algorithm with the starting point $x_t^{(i)}$ and the input restarting batch size $\tau^{(i)}$, and assign the initial weight
\[
w_1^{(i)} = \frac{N+1}{N}\frac{1}{i(i+1)};
\]

\While{$t \leq T$}{
	Receive the decisions $x_t^{(i)}$ from all experts $i \in \{1,\ldots,N\}$ and aggregate them as
	\[
	x_t = \sum_{i=1}^N w_t^{(i)} x_t^{(i)};
	\]
	
	Generate jointly distributed random variables $(\tilde U_t, \bar U_t) \in \mathbb{R}^d \times \mathbb{R}^d$;
	
	\uIf{$t = T$}{
		Set the price $\tilde x_t = x_t + \delta \tilde U_t$, set $G_t = 0$, and terminate;
	}
	\Else{
		Set the price $\tilde x_t = x_t + \delta \tilde U_t$, and construct a stochastic gradient estimator $G_t$ using one-point feedback of the form
		\[
		G_t = \frac{\phi_t(\tilde x_t)}{\delta} \bar U_t;
		\]
		
		Broadcast $G_t$ to all experts as a shared gradient signal;
		
		For each expert $i \in \{1,\ldots,N\}$, obtain $x_{t+1}^{(i)}$ via mirror ascent with step size $\eta^{(i)}$ and gradient $G_t$, applied within each batch of size $\tau^{(i)}$;
		\tcp{This is regarded as each expert executing one step of its own instance of Algorithm~\ref{alg:dynamic}.
		}

		Update the expert weights according to
		\[
		w_{t+1}^{(i)} =
		\frac{w_t^{(i)} \exp\left(\varepsilon G_t^{\mathtt T} x_t^{(i)}\right)}
		{\sum_{j=1}^N w_t^{(j)} \exp\left(\varepsilon G_t^{\mathtt T} x_t^{(j)}\right)},
		\qquad \forall i \in \{1,\ldots,N\};
		\]
		
		Increment the time index by setting $t = t+1$;
	}
}

\end{algorithm}

To facilitate the analysis of the dynamic regret of the proposed algorithm under an unknown variation budget, we introduce an additional requirement stated in Assumption~\ref{asp:bound}. The boundedness assumption is mild and holds for widely used gradient approximation schemes such as spherical smoothing and simultaneous perturbation methods.
Also, the sub-Gaussian noise condition is standard in the stochastic optimization literature and accommodates noise distributions frequently encountered in practice, including Gaussian and bounded random variables.

\begin{assumption}\label{asp:bound}
For all $t$, $\bar U_t$ is bounded with $B_v:=\max_{t}\|\bar U_t\|$, and 
there exists a positive constant $\sigma_{\xi}$ such that with probability one, 
\[
\mathsf E\left[ \exp\left( \lambda \xi_t \right) \mid \mathscr{F}_t \right]
\leq
\exp\left( \lambda^2\sigma_{\xi}^2/2 \right), \quad \text{for all } \lambda \in \mathbb{R},
\]
where $\mathscr F_t:=\sigma\{\{x_t^{(i)}, \, i=1,\ldots,N\}, \tilde U_1,\bar U_1, \xi_1, \ldots, \tilde U_{t-1}, \bar U_{t-1}, \xi_{t-1}, \tilde U_{t},\bar U_{t}\}$ denotes the information available immediately prior to observing the revenue in period $t$.

\end{assumption}

Through appropriately choosing the input parameters, we obtain Theorem~\ref{the:uno_dynamic}, which establishes an upper bound on the dynamic regret of the proposed algorithm.
The reasoning behind the specific parameter selections is detailed in the proof, presented in Section~\ref{ec:sec4} of the E-Companion.
This proof builds on the analysis of Theorem~3 in \cite{Zhao2021JMLR}, while introducing new techniques that extend their results; their analysis considers a deterministic setting, whereas ours addresses a stochastic environment.

\begin{theorem}\label{the:uno_dynamic}
Suppose Assumptions~\ref{asp:rf}--\ref{asp:bound} hold.  
Define  $B=\sup_{x,y\in \Theta}B_{\psi}(x,y)$, $C_r=\sup_{x\in \Theta,\,1\leq t\leq T}\| \nabla r_t(x) \|^2$ and $B_r=\sup_{x \in \Theta,\,1\leq t\leq T}|  r_t(x) |$. 
Assume $T\geq 14$, $N=\lceil\log_2  \lceil T^{2/3} \rceil \rceil+ 1$, and $2B_r\sqrt{T}\leq \sigma_{\xi}$. 	
For each expert $i \in \{1,\ldots, N\}$, let
$ \tau^{(i)} = 2^{i-1}$, $\eta^{(i)}=\sqrt{2\alpha B/((2C_1^2\delta^{2p} +C_2\delta^{-q}+2C_r )\tau^{(i)})}$, and $\varepsilon = \delta/( B_v\sigma_{\xi}\sqrt{d T} )	$. Then the dynamic regret of Algorithm~\ref{alg:hedge} satisfies
\begin{align*}
	\mathsf E\left[\sum_{t=1}^T
	\left( r_t(x^*_t)-r_t(\tilde x_t) \right)
	\right]&\leq
	\left(4d^{-1/3}\sqrt{\frac{B(2C_1^2\delta^{2p} +C_2\delta^{-q}+2C_r )}{\alpha} }
	+2d^{1/3}\left( C_1\delta^p+\sqrt{C_r}\right)\right)
	T^{2/3}V_T^{1/3}\\
	&\quad
	+\frac{ \sigma_{\xi}B_v\sqrt{d}}{\delta}\sqrt{T}
	\left(
	1+2\ln\left( \log_2\lceil(\sqrt{d}T/V_T)^{2/3}\rceil+2 \right)
	\right) +\left(2 C_1 \delta^p \sqrt{d}+\frac{LC_u}{2}\delta^2\right)T.
\end{align*}
\end{theorem}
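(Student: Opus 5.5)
We will split the dynamic regret into three parts using the aggregated point $x_t=\sum_i w_t^{(i)}x_t^{(i)}$:
\[
\sum_t\big(r_t(x_t^*)-r_t(\tilde x_t)\big)=\underbrace{\sum_t\big(r_t(x_t)-r_t(\tilde x_t)\big)}_{\text{perturbation}}+\underbrace{\sum_t\big(r_t(x_t^{(i)})-r_t(x_t)\big)}_{\text{meta}}+\underbrace{\sum_t\big(r_t(x_t^*)-r_t(x_t^{(i)})\big)}_{\text{expert}},
\]
for a fixed comparator expert $i$.

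\textbf{Perturbation term.} We use $L$-smoothness and $\mathsf E[\tilde U_t\mid\mathcal F_t]=0$. These give
\[
\mathsf E[r_t(x_t)-r_t(\tilde x_t)]\le \tfrac{L}{2}\delta^2 C_u,
\]
which is the $LC_u\delta^2T/2$ contribution.

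\textbf{Linearization.} For the other two terms we linearize by concavity. For any comparator $u_t\in\Theta$,
\[
r_t(u_t)-r_t(x_t)\le \nabla r_t(x_t)^{\ts}(u_t-x_t).
\]
Next we replace $\nabla r_t(x_t)$ by $\mathsf E[G_t\mid\mathcal F_t]$. Assumption~\ref{asp:oracle} bounds this bias, and with $\|u_t-x_t\|\le\sqrt d$ it costs $C_1\delta^p\sqrt d$ per period in each of the two terms. This accounts for the $2C_1\delta^p\sqrt dT$ term. It remains to control the surrogate linear regrets $\mathsf E\sum_t G_t^{\ts}(u_t-x_t)$.

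\textbf{Meta-regret.} Consider the surrogate linear losses $\ell_t(i)=-G_t^{\ts}x_t^{(i)}$. The exponential weights update of Algorithm~\ref{alg:hedge} with prior $w_1^{(i)}\propto 1/(i(i+1))$ is a standard Hedge with nonuniform prior. Following \cite{Zhao2021JMLR} (built on \cite{Erven2021JMLR}), the regret against expert $i$ is bounded by
\[
\frac{\ln(1/w_1^{(i)})}{\varepsilon}+\varepsilon\sum_t (\text{range})^2 .
\]
Here we use concavity of $x\mapsto G_t^{\ts}x$, i.e.\ linearity, so that $G_t^{\ts}x_t=\sum_i w_t^{(i)}G_t^{\ts}x_t^{(i)}$.

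\textbf{Main obstacle.} The losses are unbounded, since $|G_t^{\ts}x|\le |\phi_t|B_v\sqrt d/\delta$ and $\phi_t$ contains sub-Gaussian noise. Deterministic Hedge bounds therefore do not apply directly. We would instead redo the Hedge potential argument in expectation, bounding the log-moment-generating term
\[
\ln\mathsf E_{i\sim w_t}\exp\big(\varepsilon G_t^{\ts}(x_t^{(i)}-x_t)\big)
\]
conditionally on $\mathscr F_t$. Assumption~\ref{asp:bound} makes $\xi_t$ sub-Gaussian, and $|r_t|\le B_r\le\sigma_\xi/(2\sqrt T)$ keeps the mean part subordinate. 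Together these give a per-step bound of order $\varepsilon^2\sigma_\xi^2B_v^2d/\delta^2$, up to a constant absorbed by the condition $2B_r\sqrt T\le\sigma_\xi$.

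\textbf{Meta-regret, continued.} With $\varepsilon=\delta/(B_v\sigma_\xi\sqrt{dT})$ the second term sums to $\sigma_\xi B_v\sqrt{dT}/\delta$. The first term is
\[
\frac{\sigma_\xi B_v\sqrt{dT}}{\delta}\,\ln\!\big(i(i+1)N/(N+1)\big)\le \frac{\sigma_\xi B_v\sqrt{dT}}{\delta}\cdot 2\ln(i+1).
\]
Plugging in the chosen $i$ gives the $\ln(\log_2\lceil(\sqrt dT/V_T)^{2/3}\rceil+2)$ factor.

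\textbf{Expert regret.} Each expert runs Algorithm~\ref{alg:dynamic} fed by the shared $G_t$. The mirror-ascent analysis behind Proposition~\ref{pps:static} uses only the conditional bias and variance of $G_t$ at the played point $x_t$, not at $x_t^{(i)}$. So we rerun that argument for the linear surrogate, using the per-batch restart decomposition of Theorem~\ref{the:dyn_reg}. This gives a linear regret per batch of
\[
\frac{B}{\eta^{(i)}}+\frac{\tau^{(i)}\eta^{(i)}}{2\alpha}\big(2C_1^2\delta^{2p}+C_2\delta^{-q}+2C_r\big).
\]
The drift cost across the $T$ periods is bounded by $\tau^{(i)}V_T\,\|\mathsf E G_t\|$, which is at most $\tau^{(i)}V_T(C_1\delta^p+\sqrt{C_r})$.

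\textbf{Choice of expert.} With the stated $\eta^{(i)}$, the per-batch linear regret becomes
\[
\sqrt{2B(2C_1^2\delta^{2p}+C_2\delta^{-q}+2C_r)\tau^{(i)}/\alpha}.
\]
Summing over $T/\tau^{(i)}$ batches gives order $T/\sqrt{\tau^{(i)}}$. The ideal batch size is $\tau^*=(\sqrt dT/V_T)^{2/3}\le T^{2/3}$. The doubling grid with $N=\lceil\log_2\lceil T^{2/3}\rceil\rceil+1$ contains an $i$ with $\tau^*\le\tau^{(i)}\le2\tau^*$. Substituting this $i$, the factors of two yield the constants $4d^{-1/3}$ and $2d^{1/3}$ multiplying $T^{2/3}V_T^{1/3}$. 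The condition $T\ge14$ ensures the grid covers $\tau^*$ and that the leftover batches are controlled.

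\textbf{Conclusion.} Summing the perturbation, meta and expert terms gives the stated bound.
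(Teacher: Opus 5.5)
Your proposal follows the paper's proof essentially step for step:
\begin{itemize}
\item linearization at the aggregated point $x_t$ with a $C_1\delta^p$ bias correction;
\item the split of $\mathsf E\sum_t G_t^{\mathtt T}(x_t^*-x_t)$ into expert and meta parts;
\item per-batch mirror-ascent regret plus the drift term $\tau^{(i)}V_T(C_1\delta^p+\sqrt{C_r})$;
\item the exponential-weights potential argument redone in conditional expectation, using Jensen and the sub-Gaussian MGF, with $2B_r\sqrt T\le\sigma_\xi$ absorbing the mean part.
\end{itemize}
Three steps, however, do not go through as you wrote them.

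\emph{Linearization.} The function-value expert term $r_t(x_t^*)-r_t(x_t^{(i)})$ cannot be linearized at $x_t$. Concavity linearizes it at $x_t^{(i)}$, and $\mathsf E[G_t\mid\mathcal F_t]$ is within $C_1\delta^p$ of $\nabla r_t(x_t)$, not of $\nabla r_t(x_t^{(i)})$. You must linearize the whole gap, $r_t(x_t^*)-r_t(x_t)\le\nabla r_t(x_t)^{\mathtt T}(x_t^*-x_t)$, and only then split the linear form through $x_t^{(i)}$. The paper does exactly this through its surrogate $\bar r_t$ with $\nabla\bar r_t(x_t)=\mathsf E[G_t\mid\mathcal F_t]$.

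\emph{Choice of expert.} The paper picks $i^*$ with $\tau^{(i^*)}\le\lceil(\sqrt dT/V_T)^{2/3}\rceil\le 2\tau^{(i^*)}$. This yields $i^*+1\le\log_2\lceil(\sqrt dT/V_T)^{2/3}\rceil+2$ and $\tau^{(i^*)}V_T\le 2d^{1/3}T^{2/3}V_T^{1/3}$. Your choice $\tau^*\le\tau^{(i)}\le2\tau^*$, with your unrounded $\tau^*$, gives $i-1=\lceil\log_2\tau^*\rceil$. This can exceed $\log_2\lceil\tau^*\rceil$ (e.g.\ $\tau^*=3$), so it does not reproduce the stated logarithmic factor.

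\emph{Dimension exponent.} We have $T/\sqrt{\tau^*}=T(\sqrt dT/V_T)^{-1/3}=d^{-1/6}T^{2/3}V_T^{1/3}$. Write $G_\delta^2:=2C_1^2\delta^{2p}+C_2\delta^{-q}+2C_r$. The first coefficient you actually obtain is then $4d^{-1/6}\sqrt{B/\alpha}\,G_\delta$ with the paper's $i^*$, or $2\sqrt2\,d^{-1/6}\sqrt{B/\alpha}\,G_\delta$ with yours. Neither is $4d^{-1/3}\sqrt{B/\alpha}\,G_\delta$. The $d^{-1/3}$ in the statement comes from an arithmetic slip in the paper's own final expert-regret display. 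So you should not claim it follows from the factors of two; what this argument proves has $d^{-1/6}$ in that place.
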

Theorem~\ref{the:uno_dynamic} shows that the algorithm can adapt to unknown environmental variation without requiring the seller to specify a single restart schedule in advance. From a managerial standpoint, this means that firms need not commit ex ante to one fixed memory length for pricing decisions. Instead, the adaptive procedure effectively learns whether the market calls for slower or faster forgetting and reallocates weight across restart schedules accordingly.

The dynamic regret bound in Theorem~\ref{the:uno_dynamic} can be refined to $ O(\mathrm{poly}(d) T^{7/9} V_T^{1/3}) $ by assuming $p=q=2$, which is standard in gradient estimation methods such as spherical smoothing,
ensuring that all dimension-dependent constants scale at most polynomially with $d$, and by appropriately setting  $\delta = C_{\delta}T^{-1/9}<1$ for some $C_{\delta}>0$. 
Moreover,
notice that the algorithm implicitly assumes that the time horizon $T$ is known in advance. This requirement can be removed via a standard technique known as the doubling trick, thereby yielding an anytime online algorithm  \citep{Lattimore2020}. The key idea is to maintain an initial guess of the time horizon and, whenever the actual number of iterations exceeds this guess, to double it and restart the algorithm. Specifically, the initial guess is set to $2$. Consequently, there are $K = \lfloor \log_2 (T+2) \rfloor$ epochs, where the $k$-th epoch consists of $T_k=2^{k}$ iterations for $k \in \{1,\ldots, K\}$. By directly applying H\"{o}lder's inequality, the regret guarantees that account solely for time dependence extend to this anytime version of the algorithm as follows:
\[
\sum_{k=1}^{K} T_k^{7/9} V_{T_k}^{1/3}
\leq
\left( \sum_{k=1}^{K} T_k^{7/6} \right)^{2/3}
\left( \sum_{k=1}^{K} V_{T_k} \right)^{1/3}
=
\left( \sum_{k=1}^{K} 2^{7k/6} \right)^{2/3}V_T^{1/3}
=
O(T^{7/9} V_T^{1/3}).
\]

\section{Numerical Experiments}\label{sec:num}

In this section, we conduct numerical experiments to illustrate the performance and practical application of the proposed pricing framework under nonstationary demand. The analysis proceeds in three parts. First, we conduct an ablation study to distinguish the roles of the three algorithmic components: the one-point mirror-ascent learner, the restarting mechanism, and the meta-learning procedure over restart schedules. This experiment highlights the role of each component in facilitating learning and adaptation. Second, we study synthetic pricing environments with different path-variation levels, ranging from nearly stationary markets to rapidly changing markets. These experiments show how the value of adaptive forgetting depends on the level of demand nonstationarity. Third, we construct a data-calibrated pricing environment using Walmart's real sales and price data to examine whether the proposed method delivers robust performance in a real-world retail setting. In all experiments, the seller receives only one revenue observation from the posted price in each period. 

\subsection{Ablation Study of the Three Algorithmic Components} \label{sec:ablation}

We first conduct an ablation study to isolate the roles of the three algorithmic components: the base one-point mirror ascent learner (Algorithm~\ref{alg:static}), the restarting mechanism (Algorithm~\ref{alg:dynamic}), and the expert-weighting meta-layer (Algorithm~\ref{alg:hedge}). To this end, we consider a two-dimensional nonstationary quadratic revenue environment. This specification captures the local behavior of a smooth revenue function around its optimal price and is consistent with the commonly used linear demand model in dynamic pricing. At each period $t$, the seller chooses a price vector $x=(x_1,x_2)^\mathtt{T}$ from the feasible set $\mathcal{X}=[-5,5]^2$. The expected demand for the two products is specified as $d_{1,t}(x) = b_{1,t} -  x_1/2, d_{2,t}(x) = b_{2,t} - x_2/2$, where $b_t=(b_{1,t},b_{2,t})^\mathtt{T}\in[-5,5]^2$ is a time-varying parameter that determines the period-specific optimal price. The corresponding expected revenue function is
$ r_t(x) = x_1 d_{1,t}(x)+x_2 d_{2,t}(x) = x_1(b_{1,t}- x_1/2)+x_2(b_{2,t}-x_2/2).
$
Equivalently,
$
r_t(x)=b_t^\mathtt{T} x- \|x\|^2/2.
$
This revenue function is concave in $x$, and its period-specific maximizer is $
x_t^*=b_t.$ The nonstationarity is induced through the time-varying sequence $\{b_t\}_{t=1}^T$. Performance is measured by the dynamic regret defined in \eqref{eq:d-reg}. 

We generate three levels of variation patterns, as shown in the left panels of Figures~\ref{fig:ab1}--\ref{fig:ab3}. 
We implement Algorithm~\ref{alg:hedge} with the following parameter configuration. The meta-learner pool consists of $N = 6$  experts with restart periods $\tau \in \{32, 64, 128, 256, 512, 1024\}$, forming a geometric sequence with ratio 2 that covers restart frequencies from sub-problem granularity up to the full horizon $T = 1000$. The $i$-th expert weight is initialized as $w_i \propto 1/(i(i+1))$. Each expert runs projected gradient ascent on the feasible box $\mathcal{X}$ with a fixed step size $\eta = 0.01$. At each round, the perturbation radius $\delta = 0.1$. The meta-learner updates expert weights via exponential weighting with rate $\varepsilon = 0.5$. Bandit feedback is corrupted by an additive Gaussian noise $\mathscr N(0, 0.1^2)$.
As a naive baseline, we also include a random policy that selects actions uniformly from $\mathcal{X}$. All results are averaged over 30 independent sample paths, and we report the mean cumulative dynamic regret together with $95\%$ confidence intervals. The dynamic regret results are shown in the right panels of Figures~\ref{fig:ab1}--\ref{fig:ab3}.

In Figure~\ref{fig:ab1}, the first case corresponds to a fixed $b_t$. In this setting, Algorithm~\ref{alg:static} matches the environment best and achieves the lowest regret, while Algorithm~\ref{alg:dynamic} and Algorithm~\ref{alg:hedge} incur larger regret due to the restarting mechanism. Moreover, for Algorithm~\ref{alg:dynamic}, more frequent restarts (i.e., smaller $\tau$) lead to worse performance. Figure~\ref{fig:ab2} presents the case with a low level of nonstationarity, while Figure~\ref{fig:ab3} corresponds to a higher level of nonstationarity. In both cases, the algorithms with restarting mechanisms perform better, showing their advantage in adapting to environmental changes.
Figure~\ref{fig:ab2} shows that the performance gap between the static and restarting methods begins to emerge, indicating that even a mild amount of temporal variation is sufficient to make adaptation beneficial. In Figure~\ref{fig:ab3}, this trend becomes more pronounced: the regret of the static method increases more rapidly, whereas restarting methods can track the changing maximizers more effectively. Overall, Figure~\ref{fig:ab2} and Figure~\ref{fig:ab3} together demonstrate that as the degree of nonstationarity increases, adaptive restarting becomes increasingly important for achieving low dynamic regret.

\begin{figure}
\centering
\includegraphics[width=\linewidth]{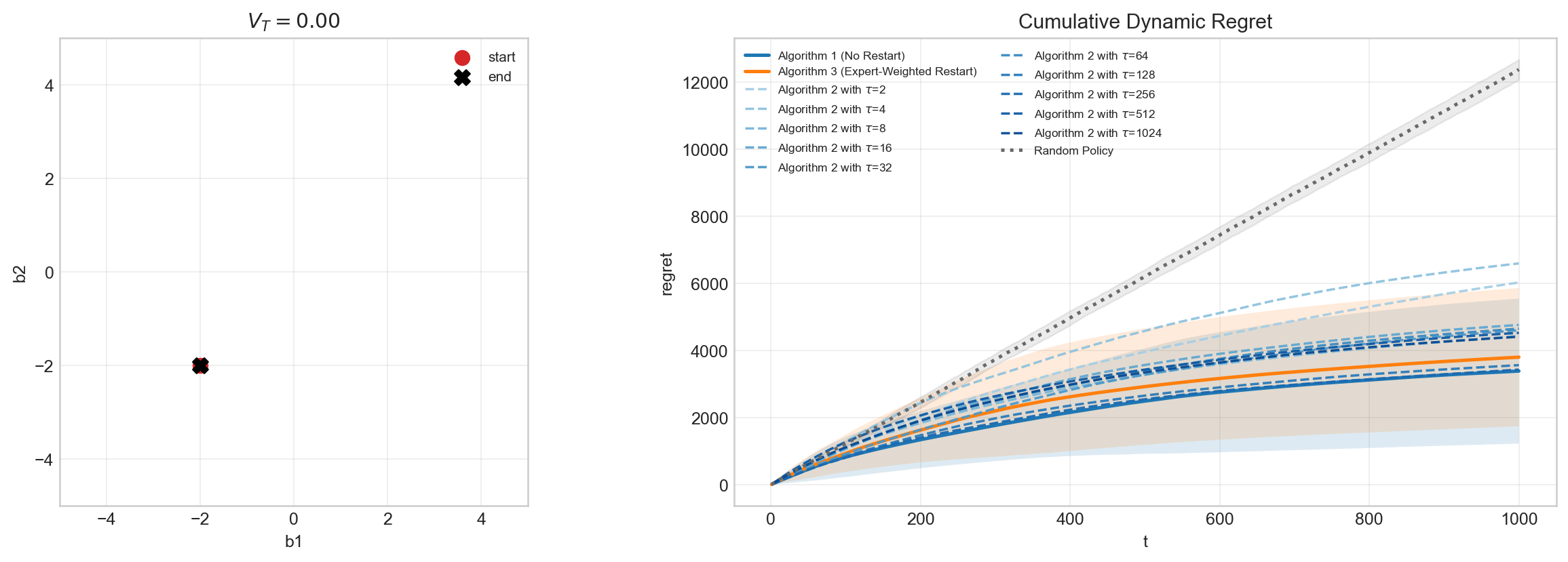}
\caption{No Variation}
\label{fig:ab1}
\end{figure}

\begin{figure}
\centering
\includegraphics[width=\linewidth]{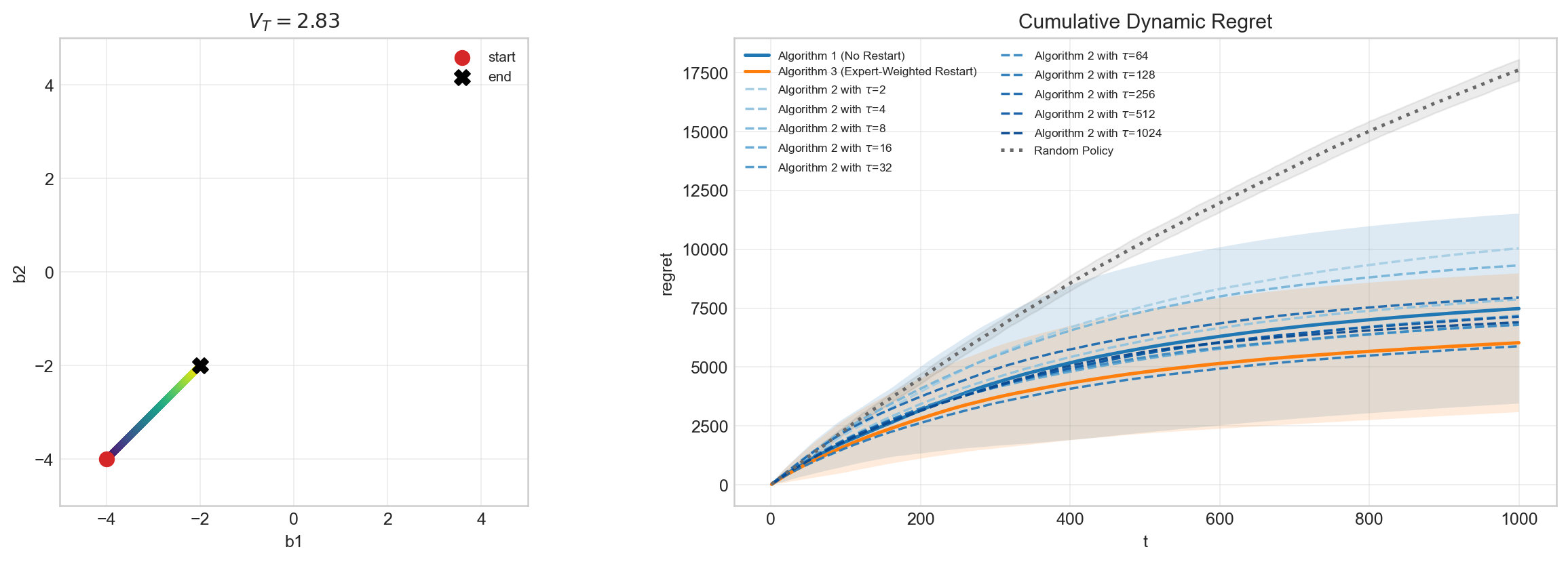}
\caption{Low Variation}
\label{fig:ab2}
\end{figure}

\begin{figure}
\centering
\includegraphics[width=\linewidth]{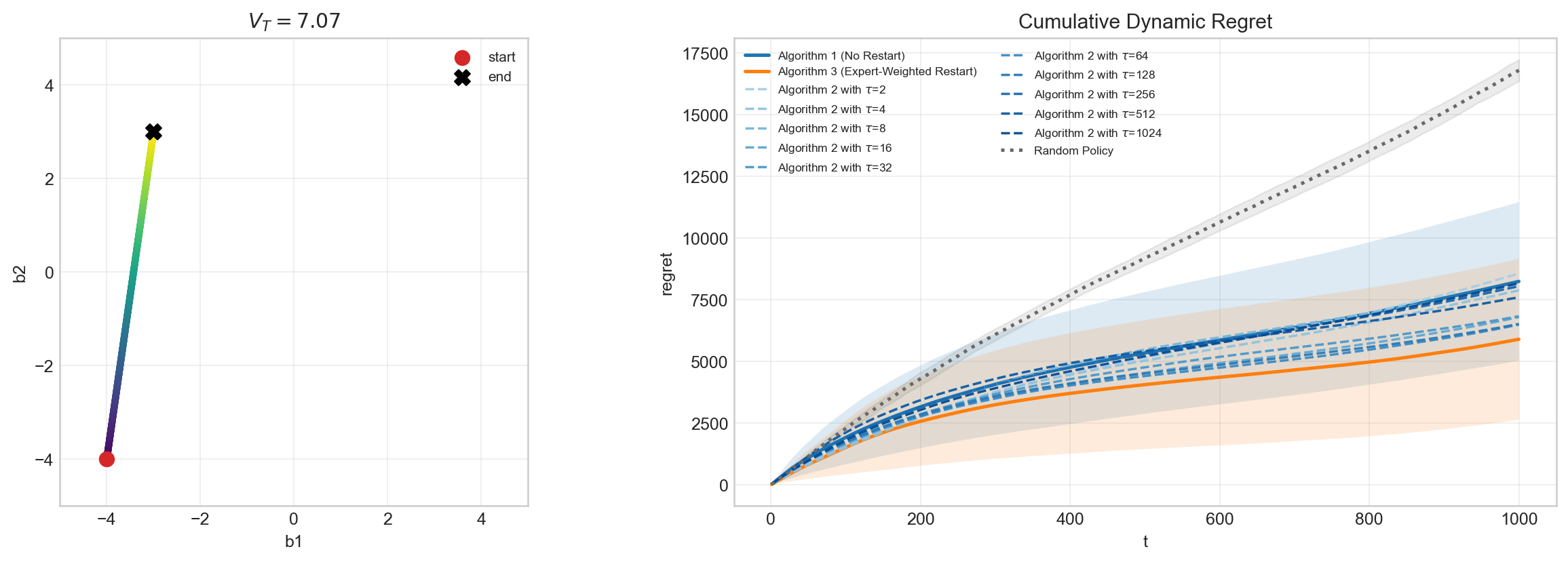}
\caption{High Variation}
\label{fig:ab3}
\end{figure}

\subsection{Synthetic Nonstationary Pricing Experiments with Controlled Variation}

To further systematically evaluate the impact of nonstationarity, we use the same linear demand as in Section~\ref{sec:ablation} but construct a family of $b$-paths with controlled variation budgets. The sequence $\{b_t\}_{t=1}^T$ is generated as a constrained random walk within the bounded domain $\mathcal{X} = [0,1]^2$. The process is initialized at $b_1 = c + \varepsilon$, where $c = (0.5, 0.5)^{\mathtt T}$ is the center of the domain and $\varepsilon$ is sampled uniformly from $[-0.3r, 0.3r]^2$ with $r = 0.5$. This ensures that the starting point is randomly perturbed while remaining well inside the domain. For each subsequent step $t = 2, \ldots, T$, we enforce a fixed parameter $\Delta_V:= V/(T-1)$, where $V$ denotes the target variation budget. At each round, a random direction $d_t$ is sampled uniformly from the unit sphere, and the next point is obtained as
\[
b_t=\max\left\{c-r,\;\min\left\{b_{t-1}+\Delta_V d_t,\;c+r\right\}\right\},
\]
where the maximum and minimum are applied componentwise, so as to guarantee feasibility within $\mathcal{X}$. By construction, the cumulative path variation satisfies
\[
\sum_{t=2}^{T} \|b_t - b_{t-1}\| \leq V.
\]
Due to boundary clipping, the realized variation may be slightly smaller than the target budget. In the experiments, we consider a range of variation levels $V \in \{0, 10, 20, 30, 40\}$, spanning regimes from nearly stationary to highly nonstationary.

We implement Algorithm~\ref{alg:hedge} with the same configuration as in Section~\ref{sec:ablation}. We compare our algorithm with vanilla UCB~\citep{ucb}, GP-UCB~\citep{gpucb}, EXP3~\citep{exp3}, HOO~\citep{hoo}, SW-UCB~\citep{Cheung2022MS}, BOB~\citep{Cheung2022MS}, and a naive random baseline. GP-UCB and HOO are continuous-space bandit algorithms and are therefore applied directly to the original feasible domain. By contrast, vanilla UCB, EXP3, SW-UCB, and BOB operate over a finite set of arms. For these discrete-action baselines, we discretize the continuous feasible region into a finite candidate set before running the algorithms. SW-UCB and BOB are nonstationary UCB-based algorithms for discrete action spaces, and thus serve as nonstationary discrete bandit baselines under the same discretization procedure. All benchmark algorithms are implemented in accordance with their respective original papers, and parameter configurations follow recommendations therein. For each method, we report four types of figures: the $b$-path, sample heatmaps, action trajectories, and dynamic regret. The $b$-path characterizes the nonstationarity of the environment. The sample heatmaps and action trajectories provide qualitative evidence of how well an algorithm tracks the evolving environment, while dynamic regret serves as the primary quantitative metric for fair comparison. Figures~\ref{fig:v10bpath}--\ref{fig:v10regret} show the case where $V=10$, and Figures~\ref{fig:v40bpath}--\ref{fig:v40regret} show the case where $V=40$. More results with different $V$ are shown in Section~\ref{sec:econnumerical} of the E-Companion.

The discrete UCB-based baselines exhibit relatively weak performance in our experiments. This is mainly because, after discretizing the continuous domain, these algorithms regard each grid point as a separate arm and need to explore each arm before effective exploitation. As the number of discretized arms increases, such arm-wise exploration becomes highly sample-inefficient, causing vanilla UCB, SW-UCB, and BOB to spend a substantial fraction of the budget on suboptimal candidate points. Under low variation, GP-UCB can perform relatively well, which is also consistent with the smooth and slowly varying structure of the $b$-path. However, as the variation level increases, our algorithm shows a clear advantage. From both the sample heatmaps and the action trajectories, we observe that our method is able to track the nonstationary environment more closely, allocating samples adaptively to the moving high-value regions. In contrast, the benchmark methods become less responsive under stronger nonstationarity, which leads to less efficient exploration and inferior tracking performance.

\begin{figure}
\centering
\includegraphics[width=.5\linewidth]{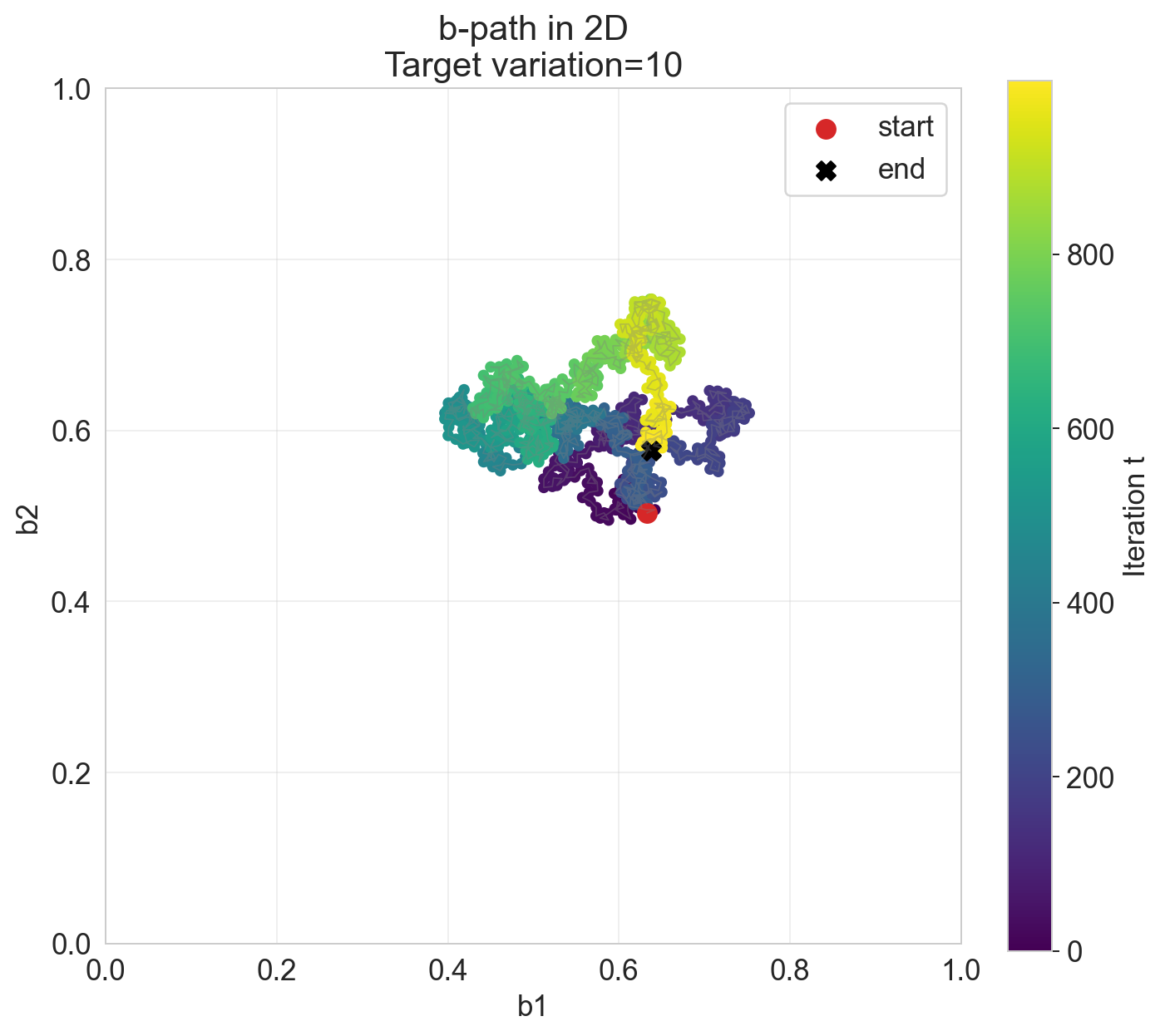}
\caption{Variation=10, b path}
\label{fig:v10bpath}
\end{figure}

\begin{figure}
\centering
\includegraphics[width=\linewidth]{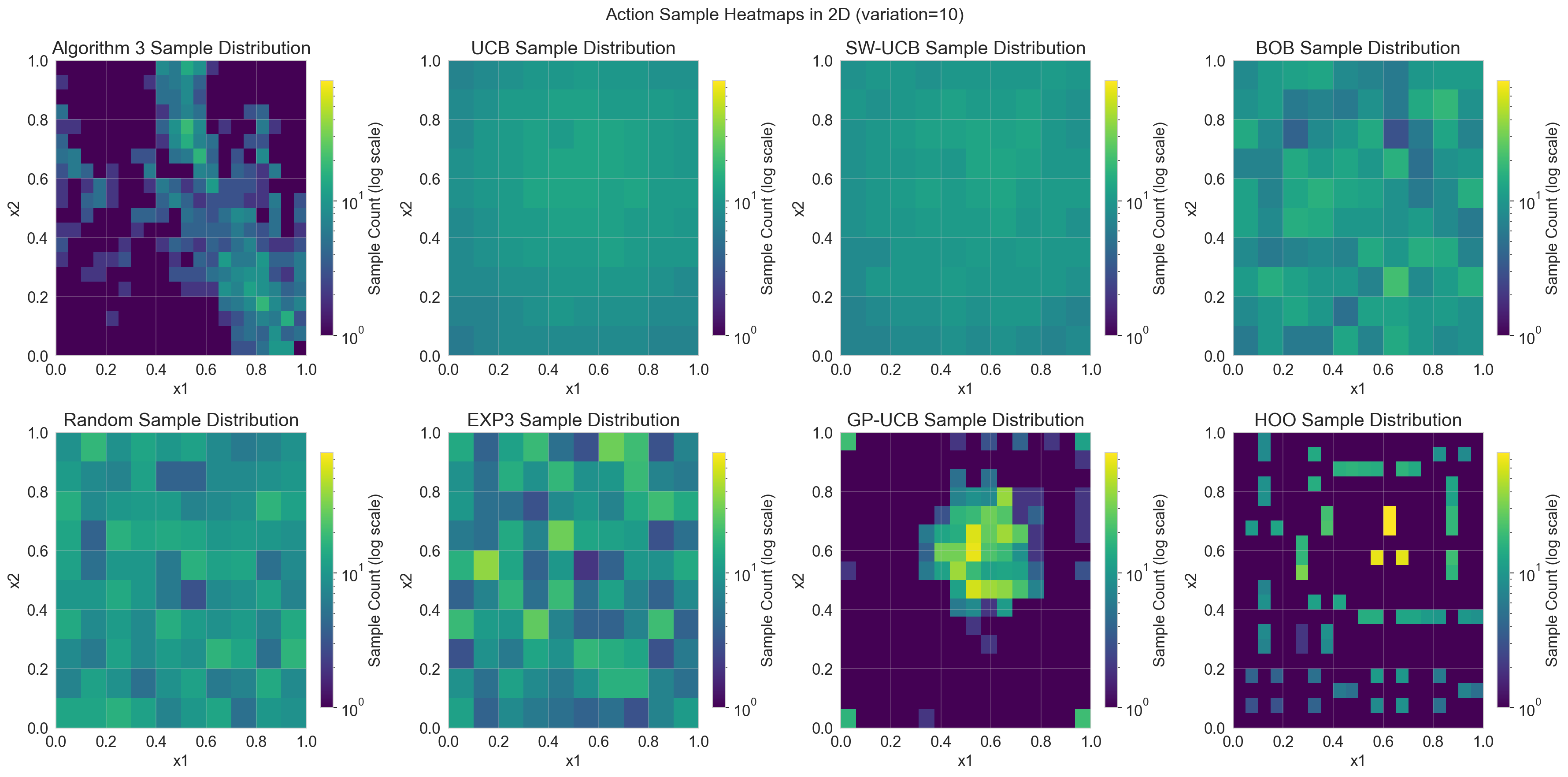}
\caption{Variation=10, sample heatmaps}
\label{fig:v10heatmap}
\end{figure}

\begin{figure}
\centering
\includegraphics[width=\linewidth]{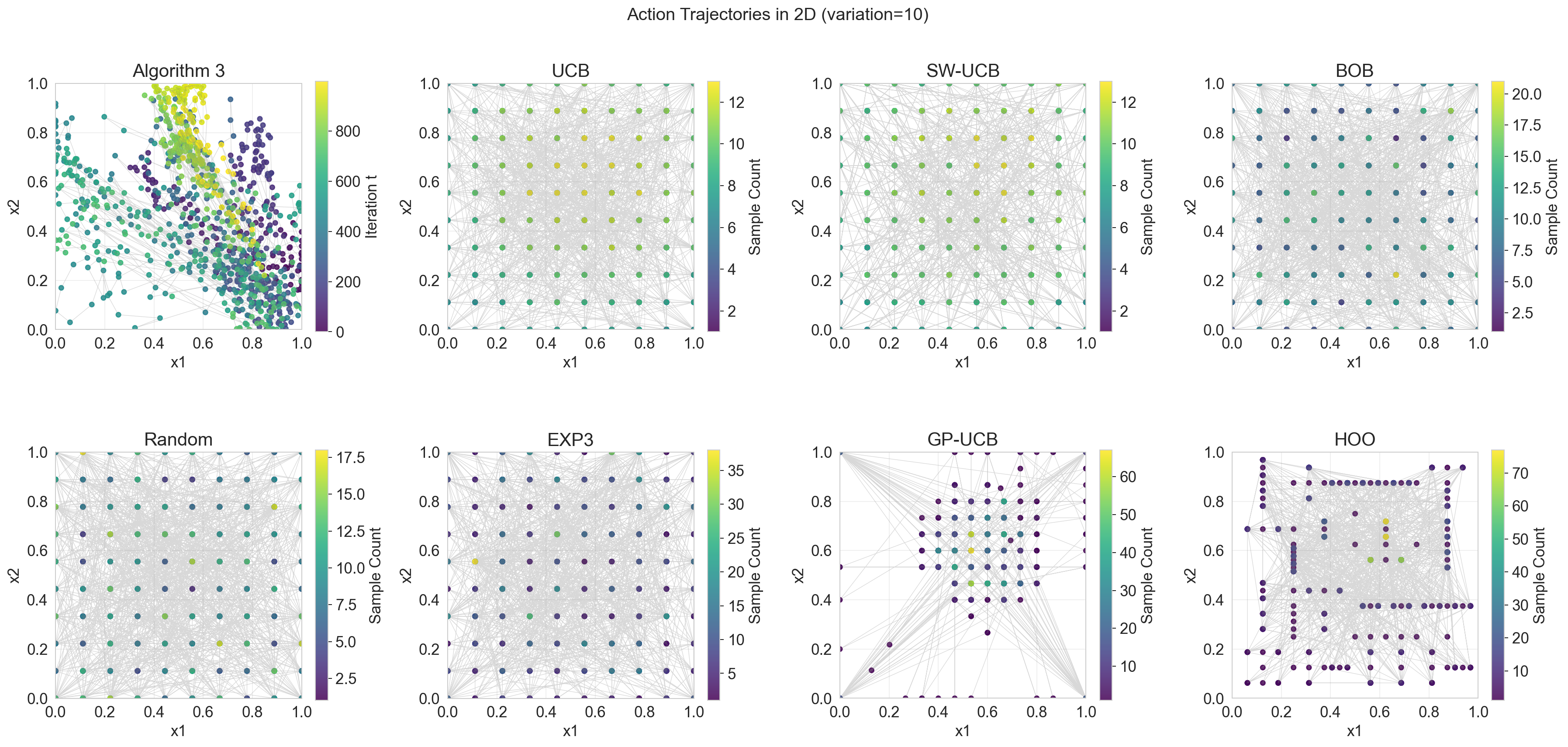}
\caption{Variation=10, action trajectories}
\label{fig:v10trajectory}
\end{figure}

\begin{figure}
\centering
\includegraphics[width=\linewidth]{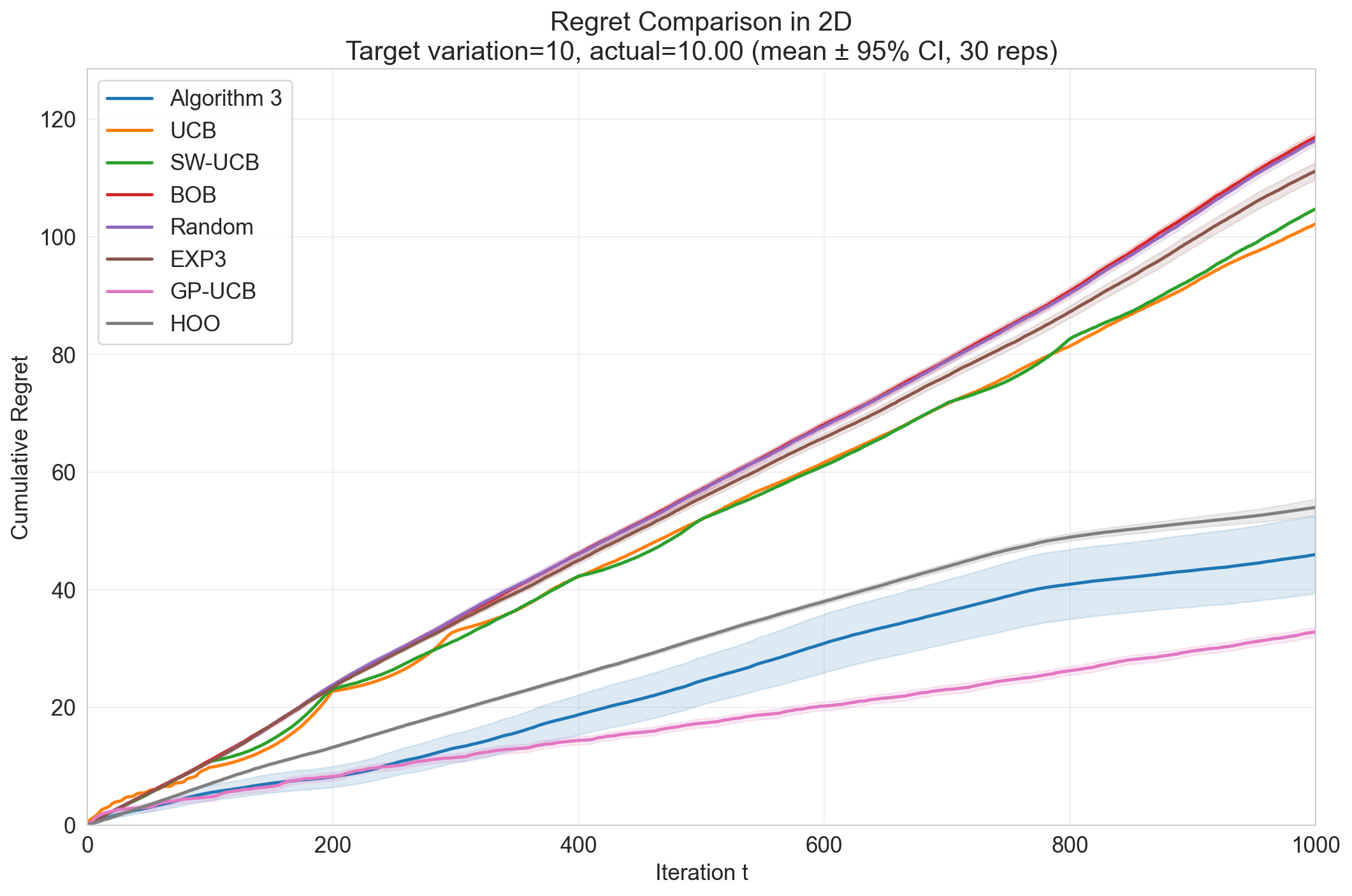}
\caption{Variation=10, regret }
\label{fig:v10regret}
\end{figure}

\begin{figure}
\centering
\includegraphics[width=.5\linewidth]{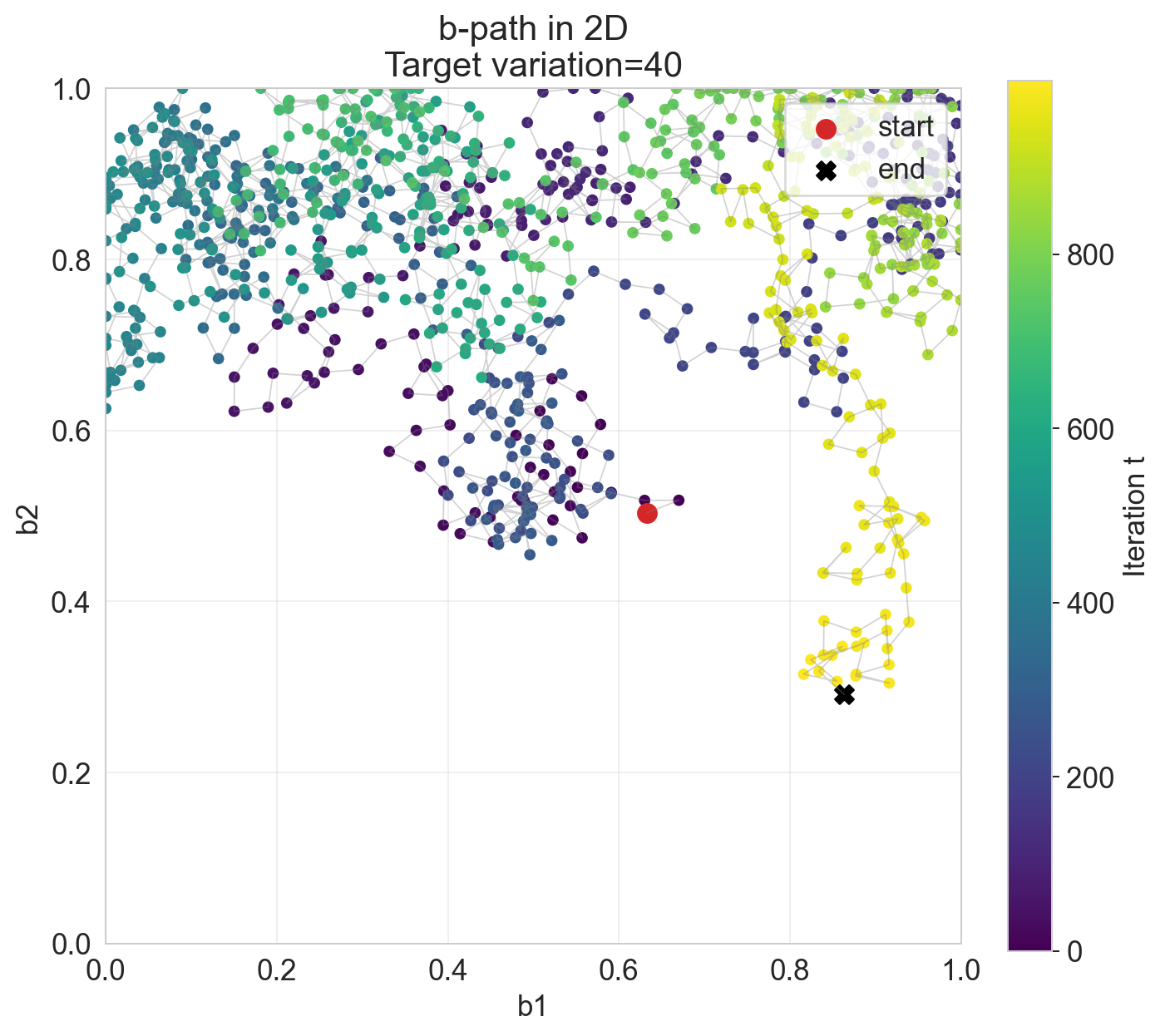}
\caption{Variation=40, b path}
\label{fig:v40bpath}
\end{figure}

\begin{figure}
\centering
\includegraphics[width=\linewidth]{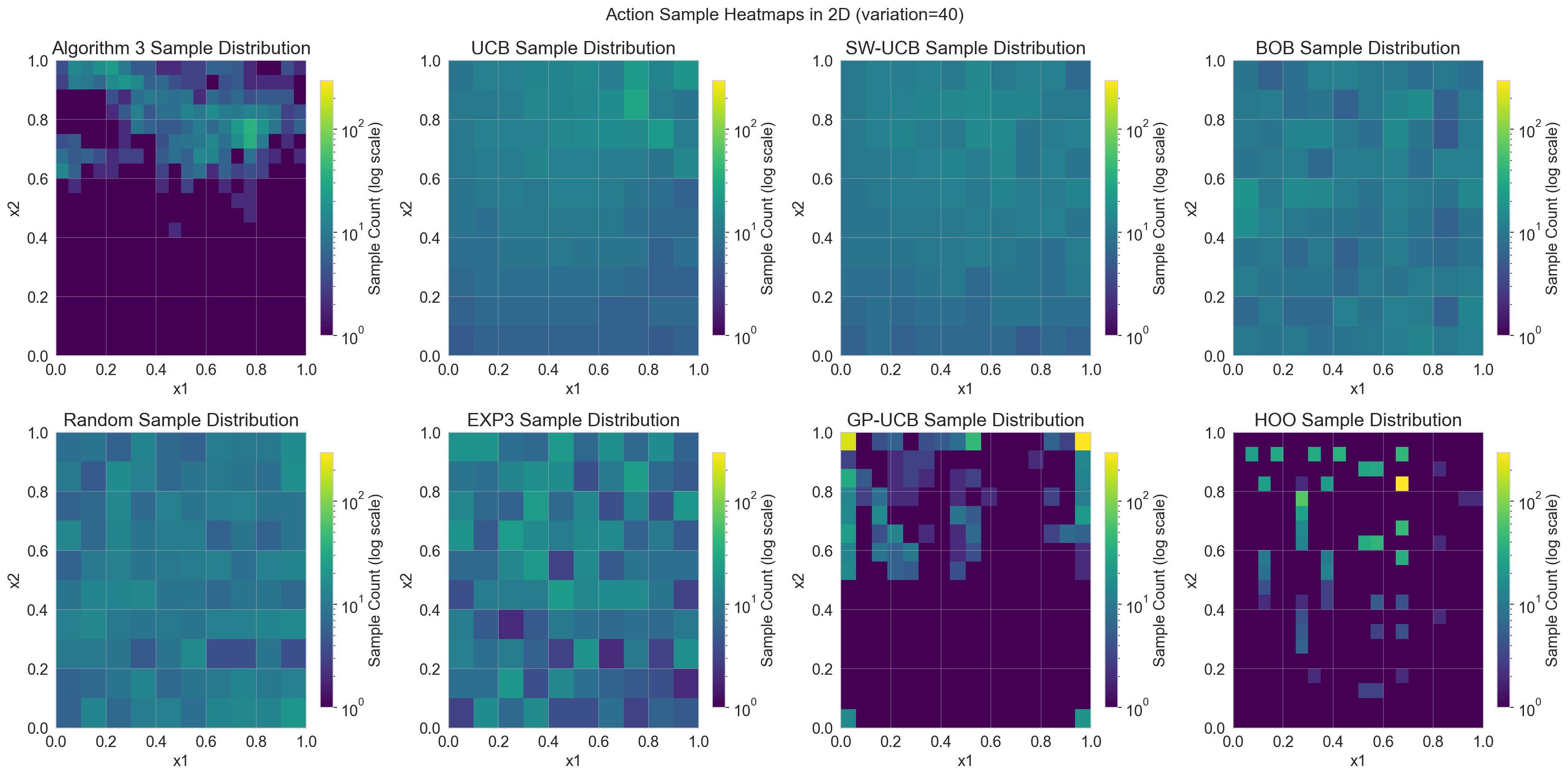}
\caption{Variation=40, sample heatmaps}
\label{fig:v40heatmap}
\end{figure}

\begin{figure}
\centering
\includegraphics[width=\linewidth]{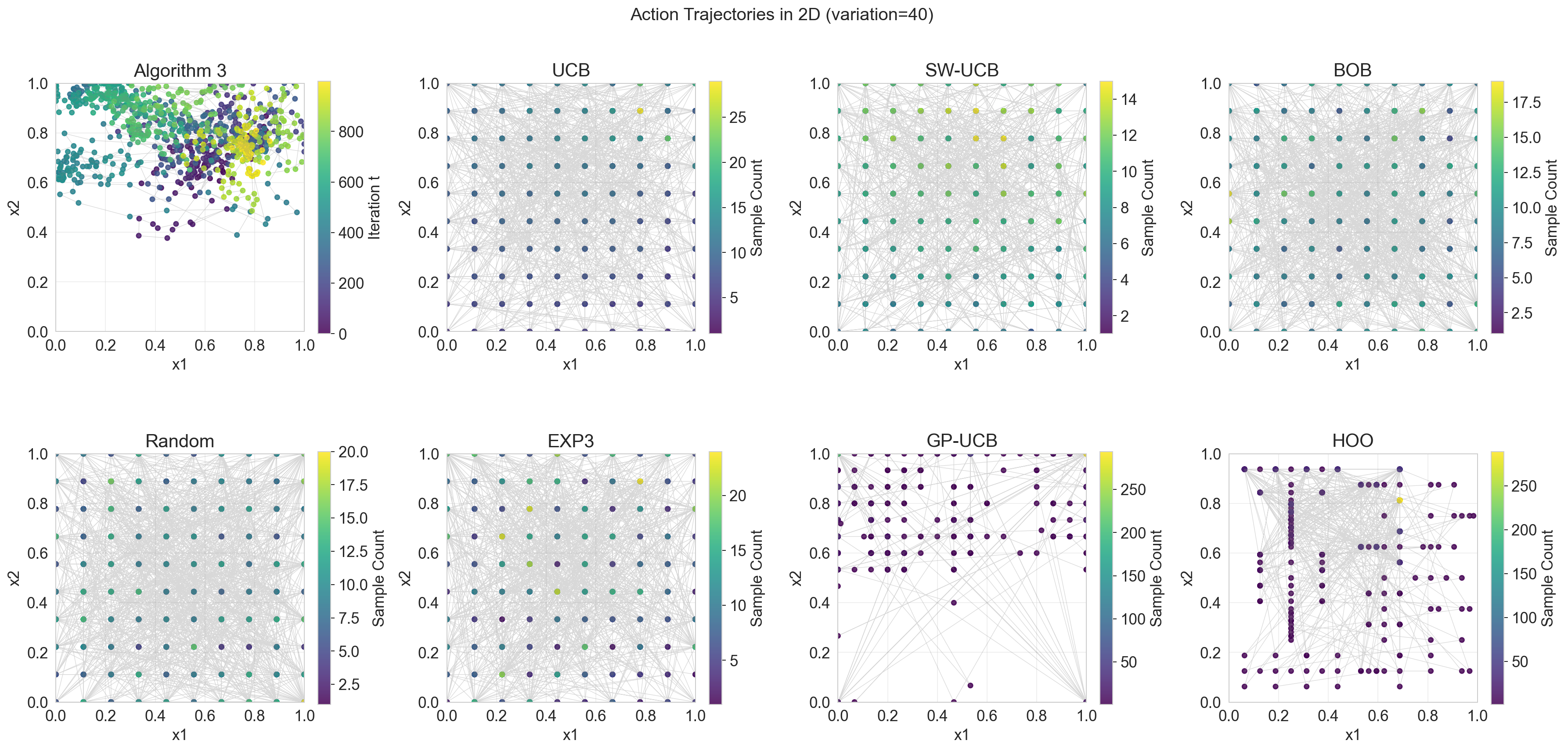}
\caption{Variation=40, action trajectories}
\label{fig:v40trajectory}
\end{figure}

\begin{figure}
\centering
\includegraphics[width=\linewidth]{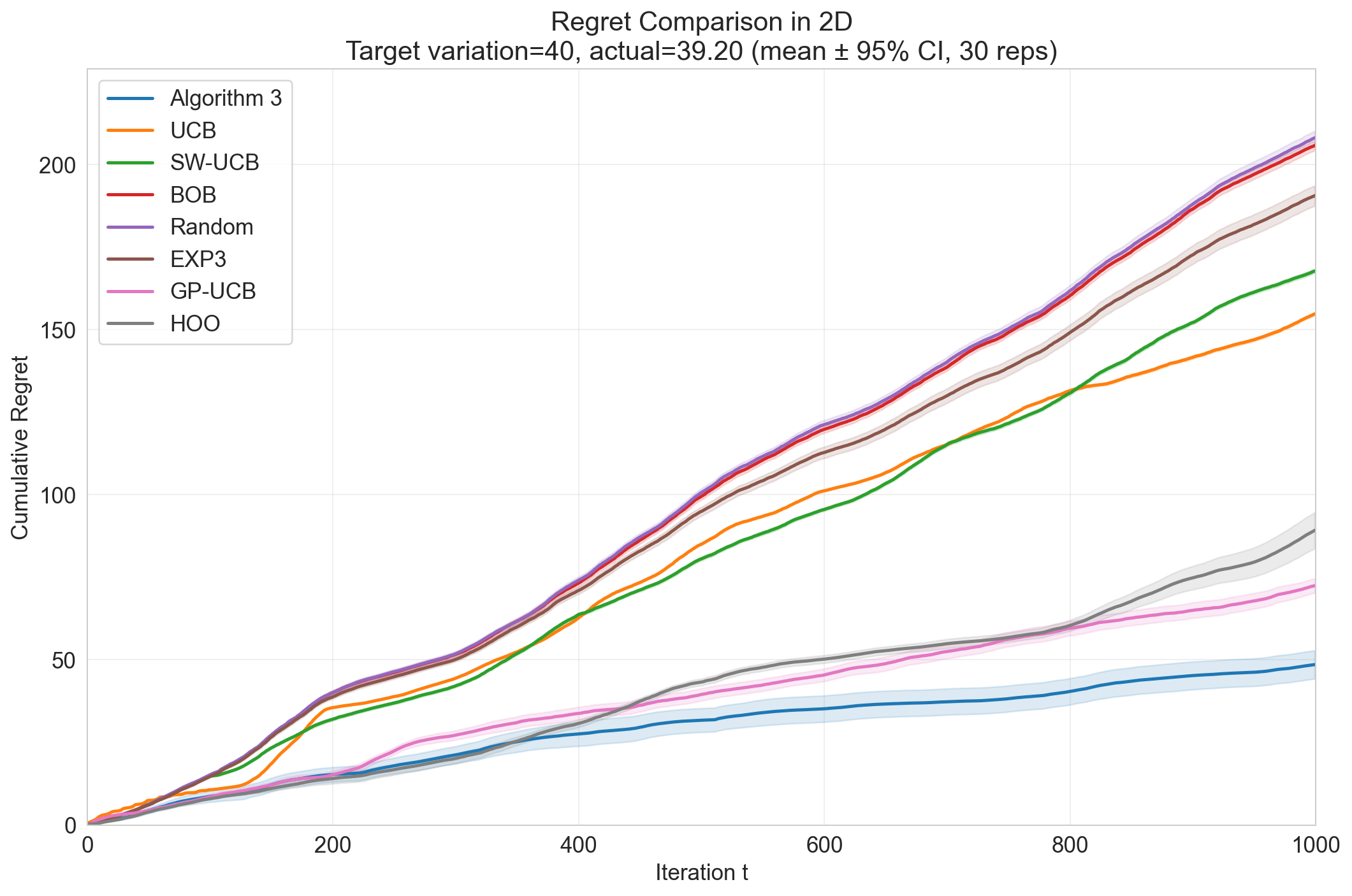}
\caption{Variation=40, regret }
\label{fig:v40regret}
\end{figure}

\subsection{Real-world Nonstationary Pricing Experiment Using Walmart Dataset}

To further evaluate the proposed policy in a more realistic nonstationary demand environment, we construct a data-calibrated retail pricing simulator using the Walmart dataset. While the preceding synthetic experiments allow us to control the degree and pattern of nonstationarity, they rely on stylized revenue functions and therefore may not fully capture the demand variability observed in retail operations. In practice, nonstationarity may arise from seasonality, product life cycles, local demand shocks, promotional effects, and changes in baseline market conditions. However, directly evaluating an online pricing policy in a real retail market is typically infeasible, since it would require experimental price interventions and repeated observations under alternative prices. We therefore build a simulator calibrated from real sales and price records: the historical data are used to construct item-level price--demand relationships and time-varying demand patterns, while the simulator allows different pricing policies to be evaluated under the same controlled nonstationary environment. This design preserves important empirical features of retail demand while maintaining a well-defined oracle benchmark for computing dynamic regret.

We use the Walmart M5 Forecasting Accuracy dataset, which contains daily sales records for multiple products across 10 Walmart stores in three U.S. states (CA, TX, and WI) from January 2011 to June 2016, spanning 1,913 days. The dataset includes daily sales, weekly prices, and a calendar file mapping dates to fiscal weeks. To ensure sufficient signal for demand estimation, we filter items based on average demand and retain only 54 items with consistently large numbers of observations. We then construct a weekly price--demand panel for each item. Specifically, daily sales are aggregated into weekly quantities and further summed across stores to obtain the total demand for each item-week pair. This yields a panel of observations.

For each item $i$ and period $t$, we estimate a time-varying price--demand curve from the filtered price--demand observations using polynomial regression. Depending on the number of distinct historical prices available for the item-period pair, we fit a quadratic, linear, or constant specification, yielding estimated coefficients $(\hat a_{i,t},\hat b_{i,t},\hat c_{i,t})$ from the period $t$ data. We then construct a stochastic pricing simulator over the 54 retained items. The feasible price set for each item is obtained by expanding its historical price range by $10\%$ on both ends. In period $t$, given a price vector $p_t=(p_{1,t},\ldots,p_{54,t})$, item-level demand is generated independently according to $ D_{i,t}\sim \mathrm{Poisson}(\lambda_{i,t}), $
where
$
\lambda_{i,t}=\max\{\hat a_{i,t}p_{i,t}^2+\hat b_{i,t}p_{i,t}+\hat c_{i,t},0\}.
$
The total revenue in period $t$ is
\[
R_t=\sum_{i=1}^{54}p_{i,t}D_{i,t}.
\]
Finally, we define an oracle benchmark by discretizing each item's feasible price interval into a grid of 201 points and selecting, in each period, the price vector that maximizes expected revenue under the current demand model. The cumulative oracle revenue serves as the benchmark for performance evaluation. This yields a data-calibrated nonstationary pricing environment with dimension $d=54$ and horizon $T=200$.

Since the total horizon $T=200$, we set $\tau \in \{16, 32, 64, 128, 256 \}$. Other configurations of Algorithm~\ref{alg:hedge} are kept the same as in Section~\ref{sec:ablation}. We compare Algorithm~\ref{alg:hedge} with HOO, a random policy, and the oracle benchmark. For the 54-dimensional problem, discrete-action baselines such as vanilla UCB, SW-UCB, and BOB require discretizing the continuous feasible region into a finite set of arms. However, in 54 dimensions, even a very coarse discretization would generate an exponentially large action set, making arm-wise exploration prohibitively inefficient. We therefore exclude these discrete baselines from this experiment. GP-UCB is also not included, since constructing and updating a Gaussian process model in a 54-dimensional space is computationally expensive and numerically challenging. 
HOO, by contrast, is a continuous-space bandit algorithm and can be applied without prior discretization, so it serves as the main bandit baseline in this high-dimensional setting. We report cumulative regret averaged across items and independent simulation replications. The regret is calculated relative to the oracle benchmark. We also plot the $95\%$ confidence intervals of HOO and our Algorithm~\ref{alg:hedge}. Figure~\ref{fig:readworld} shows that Algorithm~\ref{alg:hedge} consistently achieves the best performance among the compared methods. The results indicate that Algorithm~\ref{alg:hedge} is able to better adapt to the nonstationary pricing environment and exploit the evolving demand structure more effectively than the comparison methods. In particular, compared with HOO and the random baseline, it achieves higher cumulative revenue throughout the horizon and maintains a more stable advantage over repeated runs. This demonstrates the practical effectiveness of our method in high-dimensional real-world pricing problems.

\begin{figure}
\centering
\includegraphics[width=\linewidth]{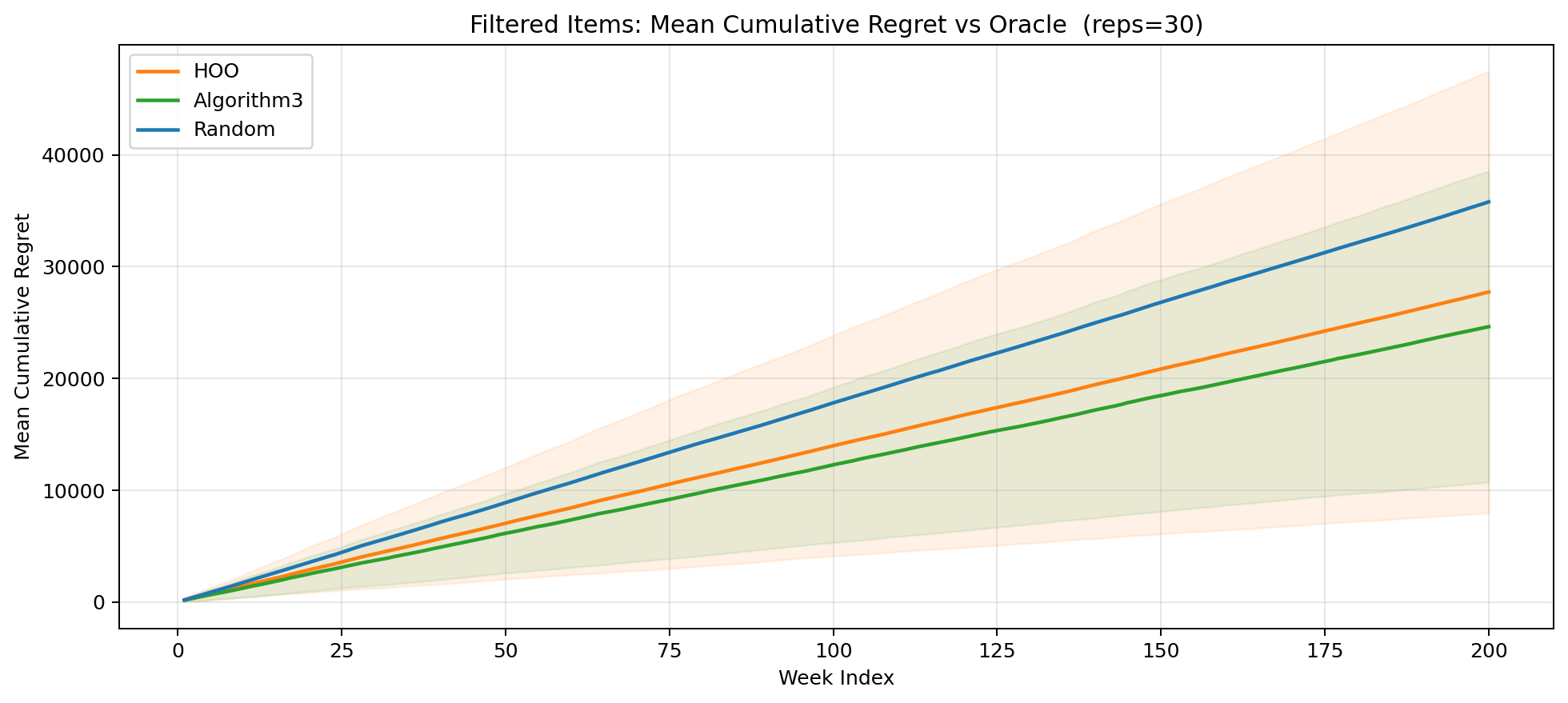}
\caption{Real-world nonstationary environment}
\label{fig:readworld}
\end{figure}

\section{Conclusion}\label{sec:conclu}

This paper studies a nonparametric dynamic pricing problem in a nonstationary environment with one-point feedback. The seller observes only realized revenues from a single posted price in each period, while the underlying revenue functions may change over time. We analyze this problem through a hierarchical construction that combines online mirror ascent with one-point gradient estimation, a restarting mechanism to address nonstationarity, and a meta-layer based on expert aggregation to accommodate unknown variation levels.
We first establish static regret guarantees for online mirror ascent under general bias--variance conditions on the gradient estimator. These results are then extended to the nonstationary stochastic setting via a batching and restarting procedure, leading to dynamic regret bounds that explicitly reflect the interplay between the time horizon and the path variation budget. When the variation budget is unknown, the bandit-over-bandit strategy allows the algorithm to hedge across multiple restarting scales while preserving comparable regret rates.
The theoretical analysis highlights how the attainable performance depends jointly on the smoothness of the revenue functions, the properties of the gradient estimator, and the degree of environmental variation. The numerical experiments further illustrate the behavior of the proposed procedures under different forms of nonstationarity and demonstrate their empirical effectiveness.

Two extensions are particularly worth exploring. The first is operations-management-oriented. It would be interesting to incorporate inventory constraints or capacity limits into nonstationary environments and study nonstationary joint pricing and inventory control problems \citep[see, e.g.,][]{Chen2021POM, Chen2021MOR}. Second, from a methodological perspective, it would be valuable to sharpen the dimension dependence of one-point feedback methods and establish information-theoretic minimax lower bounds under comparable variation measures.

\bibliographystyle{plainnat}

\let\oldbibliography\thebibliography
\renewcommand{\thebibliography}[1]{
\oldbibliography{#1}
\baselineskip14pt
\setlength{\itemsep}{10pt}
}

\bibliography{ref1}

\ECSwitch 

\ECHead{\centering E-Companion for \\
``Nonparametric Learning and Earning with One-Point Feedback under Nonstationarity''}

\section{Technical Supplements for Section~\ref{sec:known} }
\label{ec:sec3}

\proof{Proof of Proposition~\ref{pps:static}} 
Using Assumptions~\ref{asp:rf}--\ref{asp:oracle}, we have for every $x\in \Theta$,
\begin{align}\label{eceq:reg}
\sum_{t=1}^{\tau}r_t(x)- 
\mathsf E\left[\sum_{t=1}^{\tau}
r_t(x_t+\delta \tilde U_t)
\right] 
&\leq 
\sum_{t=1}^{\tau}r_t(x)- \mathsf E\left[
\sum_{t=1}^{\tau}\left(r_t(x_t)+\delta \nabla r_t(x_t)^{\mathtt T}\tilde U_t
-\frac{L}{2}\delta^2\|\tilde U_t\|^2\right)
\right]  \nonumber \\
&\leq \mathsf E\left[\sum_{t=1}^{\tau}r_t(x)- 
\sum_{t=1}^{\tau}r_t(x_t)\right]+\frac{\tau LC_u}{2}\delta^2 \nonumber\\
&\leq
\mathsf E\left[
\sum_{t=1}^{\tau} \nabla r_t(x_t)^{\mathtt T}(x-x_t)\right]+
\frac{\tau LC_u}{2}\delta^2 \nonumber\\
&=
\mathsf E\left[\sum_{t=1}^{\tau}
G_t^{\mathtt T}(x-x_t)\right]
+\sum_{t=1}^{\tau}\mathsf E\left[ (x-x_t)^{\mathtt T}
\mathsf E[\nabla r_t(x_t)-G_t|\mathcal F_t]
\right]+\frac{\tau LC_u}{2}\delta^2 \nonumber\\
&\leq
\mathsf E\left[\sum_{t=1}^{\tau}
G_t^{\mathtt T}(x-x_t)\right]
+C_1\delta^p \sum_{t=1}^{\tau}\mathsf E[\|x-x_t\|]
+\frac{\tau LC_u}{2}\delta^2 \nonumber \\
&\leq \mathsf E\left[\sum_{t=1}^{\tau}
G_t^{\mathtt T}(x-x_t)\right]
+\tau  C_1\delta^p\sqrt{d}
+\frac{\tau LC_u}{2}\delta^2,
\end{align}
where the first inequality follows from the smoothness of $r_t(\cdot)$, the second inequality holds because $\tilde U_t$ is conditionally mean zero, the third inequality is due to the concavity of $r_t(\cdot)$, the fourth inequality applies the bias assumption on $G_t$, and the last inequality relies on the previously stated condition $\mathrm{diam}(\Theta)\equiv \sqrt{d} $. Next, we bound the first term on the right-hand side of \eqref{eceq:reg}.

By the first-order optimality condition (or variational inequality) in the mirror ascent step (i.e., Step~8 of Algorithm~\ref{alg:static}), we have for all $x \in \Theta $,
\[
\left(\eta G_t-\nabla \psi(x_{t+1}) + \nabla \psi(x_t)
\right)^{\mathtt T}(x-x_{t+1})\leq 0.
\]
This is equivalent to
\begin{equation}\label{eceq:g1}
G_t^{\mathtt T}(x-x_{t+1})\leq
\frac{1}{\eta}\left(\nabla \psi(x_{t+1})-\nabla \psi(x_{t})\right)
^{\mathtt T}(x-x_{t+1})=\frac{1}{\eta}\left(
B_{\psi}(x,x_t)-B_{\psi}(x,x_{t+1})-B_{\psi}(x_{t+1},x_t)
\right),
\end{equation}
where the equality is from the well-known three-point identity for the Bregman divergence \citep[see, e.g.,][]{bibChen1993}. 
On the other hand, by Young's inequality and the $\alpha$-strong convexity of $\psi(\cdot)$, we obtain that
\begin{equation}\label{eceq:g2}
G_t^{\mathtt T}(x_{t+1}-x_{t})\leq 
\frac{\eta}{2\alpha}\|G_t\|^2
+
\frac{\alpha}{2\eta}\|x_{t+1} - x_t\|^2\leq 
\frac{\eta}{2\alpha}\|G_t\|^2
+\frac{1}{\eta}B_{\psi}(x_{t+1},x_t).
\end{equation}
Combining \eqref{eceq:g1} and \eqref{eceq:g2}  yields 
\begin{equation}\label{eceq:g3}
G_t^{\mathtt T}(x-x_{t})
=G_t^{\mathtt T}(x-x_{t+1})
+G_t^{\mathtt T}(x_{t+1}-x_{t})\leq 
\frac{1}{\eta}\left(
B_{\psi}(x,x_t)-B_{\psi}(x,x_{t+1})
\right)+\frac{\eta}{2\alpha}\|G_t\|^2.
\end{equation}
Summing the first term on the right-hand side of \eqref{eceq:g3} over $t$ gives
$$\sum_{t=1}^{\tau-1}\frac{1}{\eta}\left(
B_{\psi}(x,x_t)-B_{\psi}(x,x_{t+1})
\right)
=\frac{1}{\eta}\left(
B_{\psi}(x,x_1)-B_{\psi}(x,x_{\tau})
\right)
\leq \frac{B}{\eta}.$$
Also,  we use Assumption~\ref{asp:oracle} to bound the expectation of the second term on the right-hand side of \eqref{eceq:g3}:
\begin{align*}
\mathsf E[\| G_t\|^2]&=
\mathsf E\left[\mathsf E[\| G_t\|^2|\mathcal F_t]\right]  \\
&=\mathsf E\left[
\mathsf E[ \|G_t-\mathsf E[G_t|\mathcal F_t]\|^2 |\mathcal F_t]
+  \| \mathsf E[G_t|\mathcal F_t] \|^2 
\right]\\
&\leq
C_2\delta^{-q}+
2\mathsf E\left[
\| \mathsf E[G_t|\mathcal F_t] -\nabla r_t(x_t)\|^2
\right]
+2\mathsf E\left[
\|\nabla r_t(x_t)\|^2
\right]\\
&\leq
C_2\delta^{-q}+2C_1^2\delta^{2p}+2C_r.
\end{align*}
Combining these results with  \eqref{eceq:reg} yields
\begin{equation*}
\sum_{t=1}^{\tau}r_t(x)- 
\mathsf E\left[\sum_{t=1}^{\tau}
r_t(x_t+\delta \tilde U_t)
\right] \leq
\frac{B}{\eta}+\frac{\tau\eta}{\alpha}
\left(C_1^2\delta^{2p}+\frac{C_2}{2}\delta^{-q} +C_r \right)
+\tau  C_1\delta^p\sqrt{d}
+\frac{\tau LC_u}{2}\delta^2.
\end{equation*}
Since the above holds for all $x\in \Theta$, the desired result follows.
\Halmos  
\endproof

\proof{Proof of Theorem~\ref{the:dyn_reg}}

We denote by $\{\mathscr T_1,\ldots,\mathscr T_{\lceil T/\tau \rceil}\}$ the collection of partitioned batches, each of size $\tau$ except possibly the last one. It follows that the regret admits the decomposition  $$\sum_{t=1}^{T}r_t(x_t^*)- \mathsf E\left[\sum_{t=1}^{T}r_t(\tilde x_t)		\right] =\sum_{b=1}^{\lceil T/\tau \rceil}R_b,$$ where
\begin{equation*}
R_b:=
\sum_{t \in \mathscr T_b} r_t(x^*_t) - \mathsf E\left[ \sum_{t \in \mathscr T_b} r_t(\tilde x_t)
\right]=\underbrace{\max_{x\in\Theta}\sum_{t \in \mathscr T_b} 
	r_t(x)-\mathsf E\left[ \sum_{t \in \mathscr T_b}  r_t(\tilde x_t) \right]}
_{J_{1,b}}
+\underbrace{\sum_{t \in \mathscr T_b} r_t(x^*_t)-
	\max_{x\in\Theta}\sum_{t \in \mathscr T_b}r_t(x)}_{J_{2,b} }.
	\end{equation*}
	Note that the first term, $J_{1,b}$, is exactly the static regret incurred in batch $b$ under the adversarial OCO setting, whereas the second term, $J_{2,b}$, quantifies the performance gap over batch $b$ between the best single action in hindsight and the dynamic benchmark. We next derive upper bounds for these two terms.
	
	On the one hand, since the upper bound is nondecreasing in the batch length and each batch contains at most $\tau$ periods, we obtain
	$
	J_{1,b}\leq U_b,
	$
	where  $U_b$ is the upper bound on the right-hand side of \eqref{eq:oco_reg}, with $\delta$ and $\eta$ replaced by $\delta_b$ and $\eta_b$, respectively.
	On the other hand,  let $V_b=\max_{ \{x_t^*\in \Theta_t^*\}_{t\in \mathscr T_{b}} }\sum_{t \in \mathscr T_{b} }\|x_t^*-x_{t-1}^*\|$, so that the total variation budget constraint can be equivalently written as
	$
	\sum_{b=1}^{\lceil T/\tau \rceil}V_b
	\leq V_T.
	$
	Let $\bar t$ denote an arbitrary period in batch $b$, and recall that $L_r$ is the Lipschitz constant of $r_t(\cdot)$. Then
	\begin{equation*}
J_{2,b}
\leq 
\sum_{t \in \mathscr T_b}\left(
r_t(x^*_t) - r_t(x^*_{\bar t})
\right)
\leq
L_r \sum_{t \in \mathscr T_b} \|x^*_t-x^*_{\bar t}\| 
\leq L_r\tau V_b.
\end{equation*}

Summing over all batches, we have
\[
\sum_{t=1}^{T}r_t(x_t^*)- \mathsf E\left[\sum_{t=1}^{T}r_t(\tilde x_t)		\right]
= \sum_{b=1}^{\lceil T/\tau \rceil} (J_{1,b} + J_{2,b})
\le \sum_{b=1}^{\lceil T/\tau \rceil} U_b
+L_r \tau V_T .
\]
This completes the proof.
\Halmos  
\endproof

\proof{Proof of Corollary~\ref{crl:dynamic}} 

Based on the results in Corollary~\ref{crl:static} and Theorem~\ref{the:dyn_reg}, the dynamic regret satisfies 
\begin{align}\label{eceq:dr1}
\sum_{t=1}^{T}r_t(x_t^*)- \mathsf E\left[\sum_{t=1}^{T}r_t(\tilde x_t)		\right] 
&\leq \sum_{b=1}^{\lceil T/\tau \rceil}U_{b}
+L_r\tau V_T\nonumber \\ 
&\leq \left\lceil  \frac{T}{\tau}\right\rceil O\left(  \tau_b^{ \frac{\hat p +q}{2\hat p +q} } \right)  + L_r\tau V_T\nonumber\\
&\leq \left( \frac{T}{\tau} +1\right)O\left(  \tau^{ \frac{\hat p +q}{2\hat p +q} } \right)+ L_r\tau V_T \nonumber \\
&\leq 2\frac{T}{\tau}O\left(  \tau^{ \frac{\hat p +q}{2\hat p +q} } \right)+ L_r\tau V_T \nonumber \\
&=O\left(T  \tau^{ -\frac{\hat p}{2\hat p +q} } \right)+ L_r\tau V_T
\end{align}
where the third inequality uses the fact that $\tau_b\leq \tau$, while the fourth inequality follows from $\tau \leq T$, since $\tau$ is chosen as $\tau=\lceil (\sqrt{d}T/V_T)^{(2\hat p +q )/(3\hat p +q)} \rceil$ and we recall that $\sqrt{d} \leq V_T \leq \sqrt{d} T$.
Then,  we have 
\begin{align*}
\eqref{eceq:dr1}&\leq O\left( T \left(\frac{\sqrt{d}T}{V_T}\right)^{-\frac{\hat p}{3\hat p +q}  }\right)
+L_r\left( \left(\frac{\sqrt{d}T}{V_T}\right)^{ \frac{2\hat p+q}{3\hat p +q}  }+1\right)V_T\\
&\leq O\left(  d^{-\frac{\hat p}{6\hat p +2q}} T^{ \frac{2\hat p + q}{3\hat p +q} } V_T^{  \frac{\hat p}{3\hat p +q} } \right)
+2L_r d^{\frac{2\hat p+q}{6\hat p +2q}}T^{ \frac{2\hat p + q}{3\hat p +q} } V_T^{  \frac{\hat p}{3\hat p +q} }\\
&=O\left(\mathrm{poly}(d)
T^{ \frac{2\hat p + q}{3\hat p +q} } V_T^{  \frac{\hat p}{3\hat p +q} }\right).
\end{align*}
\Halmos  
\endproof

\section{Technical Supplements for Section~\ref{sec:unknown} }
\label{ec:sec4}

\proof{Proof of Theorem~\ref{the:uno_dynamic}} 
Based on Assumptions~\ref{asp:rf}--\ref{asp:oracle}, we first observe that it is possible to construct a new sequence of functions $\{\bar r_t(\cdot)\}_{t=1}^T$ such that the resulting gradient estimator $G_t$ is conditionally unbiased with respect to $\nabla \bar r_t(\cdot)$. Specifically, for any $y \in \Theta$, define
\[
\bar r_t(y) = \mathsf{E}\Big[ r_t(y) + (G_t - \nabla r_t(x_t))^\mathtt{T} y \big| \mathcal{F}_t \Big],
\]
which implies
$
\nabla \bar r_t(y) = \nabla r_t(y) - \nabla r_t(x_t) + \mathsf{E}[G_t \mid \mathcal{F}_t].
$
Thus, evaluating $\nabla \bar r_t(y)$ at $y = x_t$ gives
\[
\nabla \bar r_t(x_t) = \mathsf{E}[G_t \mid \mathcal{F}_t].
\]
Moreover, since $\|\mathsf E\left[ \nabla r_t(x_t)-G_t | \mathcal F_t \right] \| 
\leq C_1 \delta^p$ (Assumption~\ref{asp:oracle}), it follows that for all $y \in \Theta$,
\begin{equation}\label{eceq:rr}
|\bar r_t(y) -  r_t(y) |
=|y^{\mathtt T}\mathsf E[G_t-\nabla r_t(x_t)|\mathcal F_t]|
\leq C_1\delta^p\sqrt{d},
\end{equation}
where we recall that the decision space has been normalized so that $\sup_{y\in \Theta}\|y\|=\sqrt{d}$. 
On the other hand, by the concavity of $r_t(\cdot)$, for all $y \in \Theta$ we have
\begin{align*}
\bar r_t(y)-\bar r_t(x_t)
&=
r_t(y)-r_t(x_t)
+\mathsf E\left[ 
(y-x_t)^{\mathtt T}(G_t-\nabla r_t(x_t))
|\mathcal F_t \right]\\
&\leq
(y-x_t)^{\mathtt T}\nabla r_t(x_t) + 
(y-x_t)^{\mathtt T}\left(\nabla \bar r_t(x_t)-\nabla r_t(x_t)\right)\\
&=
(y-x_t)^{\mathtt T}\nabla \bar r_t(x_t).
\end{align*}
By setting $y = x^*_t$ and taking the expectation, we obtain 
\begin{equation}\label{eceq:cav_bar}
\mathsf E[\bar r_t(x^*_t)-\bar r_t(x_t)]\leq
\mathsf E\left[(x^*_t-x_t)^{\mathtt T}\nabla \bar r_t(x_t)\right]=
\mathsf E\left[(x^*_t-x_t)^{\mathtt T} 
\mathsf E\left[G_t |\mathcal F_t \right]
\right]
=\mathsf E\left[(x^*_t-x_t)^{\mathtt T} G_t\right].
\end{equation}
Hence, combining \eqref{eceq:rr} and \eqref{eceq:cav_bar} above, we get that for every expert $i \in \{1, \ldots, N\}$,
\begin{align}\label{eceq:brace}
\mathsf E\left[\sum_{t=1}^T
\left(	r_t(x^*_t)-r_t(x_t)\right)
\right]&\leq 
\mathsf E\left[
\sum_{t=1}^T \left( \bar r_t(x^*_t)-\bar r_t(x_t) \right)
\right]+2 C_1 \delta^p\sqrt{d}T \nonumber \\
&\leq 
\mathsf E\left[
\sum_{t=1}^T G_t^{\mathtt T} (x^*_t-x_t)
\right]+2 C_1 \delta^p\sqrt{d}T \nonumber \\
&=\underbrace{\mathsf E\left[
	\sum_{t=1}^T G_t^{\mathtt T} \left(x^*_t-x_t^{(i)}\right)
	\right]}_{\texttt{i-th expert regret} }
+\underbrace{\mathsf E\left[
	\sum_{t=1}^T G_t^{\mathtt T} \left(x_t^{(i)}-x_t \right)
	\right]}_{\texttt{meta regret} }
+2C_1 \delta^p\sqrt{d}T.
\end{align}

Next, we bound the first two terms on the right-hand side of \eqref{eceq:brace} separately.
We begin by analyzing the expert regret. Since the bound holds uniformly over all experts, we focus on the ``best” expert, denoted by $i^*$, whose precise definition is given in \eqref{eceq:i_star} below. On the one hand,
by an argument analogous to that used in the static setting (see Section~\ref{ec:sec3}), within each batch we obtain that for any $x \in \Theta$,
\[
\mathsf{E}\left[\sum_{t=1}^{\tau^{(i^*)}} G_t^\mathtt{T}\left(x - x_t^{(i^*)}\right) \right]
\leq \frac{B}{\eta^{(i^*)}} + \frac{\tau^{(i^*)}\eta^{(i^*)}}{2\alpha} G_{\delta}^2,
\]
where $G_{\delta}:=\sqrt{2C_1^2\delta^{2p} +C_2\delta^{-q}+2C_r }$ for notational convenience.
Choosing $\eta^{(i^*)}=\sqrt{2\alpha B/(G^2_{\delta}\tau^{(i^*)})}$ by the AM--GM inequality and taking the maximum over $x$, we have
\begin{equation}\label{eceq:in}
\max_{x\in \Theta}\sum_{t=1}^{\tau^{(i^*)} }
\mathsf E\left[G_t^{\mathtt T}x\right]
- \mathsf E
\left[\sum_{t=1}^{\tau^{(i^*)} }G_t^{\mathtt T}x_t^{(i^*)}\right]
\leq 
\sqrt{\frac{2B}{\alpha} } G_{\delta} \sqrt{\tau^{(i^*)} }.
\end{equation}
On the other hand, to complete the analysis of the expert regret, it remains to bound the term
$$
\mathsf E\left[\sum_{t \in \mathscr T_b}G_t^{\mathtt T} x^*_t\right]-
\max_{x\in\Theta}\sum_{t \in \mathscr T_b}
\mathsf E\left[G_t^{\mathtt T}x\right],
$$
where, with a slight abuse of notation, $\mathscr T_b$ denotes all time periods in the $b$-th batch,  with $b \in \{1,\ldots, \lceil T/\tau^{(i^*)} \rceil \}$. 
To this end, again, denote by $\bar t$ an arbitrary period in  $\mathscr T_b$. Recalling that $V_b$ denotes the path variation budget over the $b$-th batch and that $C_r = \sup_{x \in \Theta,\,1\leq t\leq T} \| \nabla r_t(x) \|^2$, we have
\begin{align}\label{eceq:out}
\mathsf E\left[\sum_{t \in \mathscr T_b}G_t^{\mathtt T} x^*_t\right]-
\max_{x\in\Theta}\sum_{t \in \mathscr T_b}
\mathsf E\left[G_t^{\mathtt T}x\right]
&\leq \mathsf E\left[\sum_{t \in \mathscr T_b}
G_t^{\mathtt T} (x^*_t-x^*_{\bar t})\right] \nonumber \\
&\leq
\sum_{t \in \mathscr T_b}\mathsf E\left[
\|x^*_t-x^*_{\bar t}\| \| \mathsf E[G_t|\mathcal F_t] \|
\right]  \nonumber \\
&\leq
\sum_{t \in \mathscr T_b}V_b\mathsf E\left[
\|\mathsf E[G_t|\mathcal F_t]-\nabla r_t(x_t)+ \nabla r_t(x_t)\|
\right]  \nonumber \\
&\leq
\sum_{t \in \mathscr T_b}V_b\left( C_1\delta^p+\sqrt{C_r}\right)  \nonumber\\
&\leq
\tau^{(i^*)}V_b\left( C_1\delta^p+\sqrt{C_r}\right).
\end{align}
Combining  inequalities \eqref{eceq:in} and \eqref{eceq:out}, we derive that
\begin{align}\label{eceq:exp_reg}
\mathsf E\left[
\sum_{t=1}^T G_t^{\mathtt T} \left(x^*_t-x_t^{(i^*)}\right)
\right]
&\leq
\sum_{b=1}^{\lceil T/\tau^{(i^*)} \rceil}
\left(
\sqrt{\frac{2B}{\alpha} } G_{\delta} \sqrt{\tau^{(i^*)} }+
\tau^{(i^*)}V_b\left( C_1\delta^p+\sqrt{C_r}\right)
\right) \nonumber \\
&= 
\sqrt{\frac{2B}{\alpha} }\left\lceil T/\tau^{(i^*)}\right\rceil
G_{\delta} \sqrt{\tau^{(i^*)} }
+\left( C_1\delta^p+\sqrt{C_r}\right)
\tau^{(i^*)}V_T \nonumber \\
&\leq
2\sqrt{\frac{2B}{\alpha} }\frac{T}{\sqrt{\tau^{(i^*)} }}G_{\delta}
+\left( C_1\delta^p+\sqrt{C_r}\right)
\tau^{(i^*)}V_T,
\end{align}
where in the last step we implicitly utilize the condition $T \geq \tau^{(i^*)}$, which will be justified later.
To ensure that the right-hand side of \eqref{eceq:exp_reg} is small, we aim to choose $\tau^{(i^*)}$  close to $\lceil (\sqrt{d}T / V_T)^{2/3} \rceil$, which is not directly available because $V_T$ is unknown. To address this difficulty, note that $\sqrt{d} \leq V_T \leq \sqrt{d} T$, implying
$$
1 \leq \lceil (\sqrt{d}T/V_T)^{2/3} \rceil  \leq \left\lceil T
^{2/3} \right\rceil.
$$
Hence, we construct a pool of candidate batch sizes to discretize the above range:
\[
\left\{ \tau^{(i)} = 2^{i-1}, \, i = 1, \ldots, N \right\},
\]
where $N$ is chosen to satisfy
\[
\log_2 \left\lceil T^{2/3} \right\rceil + 1 \leq N \leq \log_2 T+1.
\]
The above lower bound ensures that the range of $\lceil (\sqrt{d}T / V_T)^{2/3} \rceil$ is fully covered by the exponential grid, and the upper bound guarantees that the largest $\tau^{(i)}$ does not exceed $T$. Since the condition $T\geq 14$ is sufficient to guarantee that the difference between the upper and lower bounds exceeds $1$,  we can set $N=\lceil\log_2  \lceil T^{2/3} \rceil \rceil+ 1.$
Notice that there are only $O(\log_2 T)$ candidate batch sizes, which do not incur too much computational overhead. 
It follows that there exists a unique $i^*$ such that
\begin{equation}\label{eceq:i_star}
\left\lceil \left(\frac{\sqrt{d}T}{V_T}\right)
^{\frac{2}{3} } \right\rceil
\in [\tau^{(i^*)}, \tau^{(i^*+1)}]=[\tau^{(i^*)}, 2\tau^{(i^*)}].
\end{equation}
Thus, it holds that
\begin{align}\label{eceq:exp_reg_bound}
\eqref{eceq:exp_reg}
&\leq
2\sqrt{\frac{2B}{\alpha} }
\frac{T}{\sqrt{\frac{1}{2}\lceil (\sqrt{d}T/V_T)^{2/3} \rceil }}G_{\delta}
+\left( C_1\delta^p+\sqrt{C_r}\right)
\lceil (\sqrt{d}T/V_T)^{2/3} \rceil V_T \nonumber \\
&\leq
\left(4d^{-1/3}\sqrt{\frac{B}{\alpha} }G_{\delta}
+2d^{1/3}\left( C_1\delta^p+\sqrt{C_r}\right)\right)
T^{2/3}V_T^{1/3}.
\end{align}

Second, we analyze the meta regret, i.e., the second term on the right-hand side of \eqref{eceq:brace}. To this end, we adapt the standard analysis tools for the exponentially weighted average forecaster with nonuniform initial weights \citep{bibCB2006}. Recall that $\mathscr F_t=\sigma\{\{x_t^{(i)},i=1,\ldots,N\}, \tilde U_1,\bar U_1, \xi_1, \ldots,  \tilde U_{t-1}, \bar U_{t-1}, \xi_{t-1}, \tilde U_{t}, \bar U_{t}\}$, representing the information immediately prior to observing the revenue. Note that $\{x_t, x_t^{(i)}, \, i = 1, \ldots, N\}$ are measurable with respect to $\mathscr F_t$.
Define the time-varying loss function $\ell_t(y):=-G_t^{\mathtt T}y $ for all $y \in \Theta$. 
For notational convenience,
let $\ell_t^{(i)}=\ell_t(x_t^{(i)}) - \ell_t(x_t)$ and $L_t^{(i)}=\sum_{s=1}^t\ell_s\left(x_s^{(i)}\right)$ for all $i=1,\ldots,N$. Define the potential function
$W_t
=\sum_{i=1}^N w_1^{(i)}\exp(-\varepsilon L_t^{(i)})$.

The meta regret with respect to the $i^*$-th expert can be written as
$$
\mathsf E\left[ 
\sum_{t=1}^T\ell_t(x_t)-\sum_{t=1}^T\ell_t(x_t^{(i^*)})
\right]. 
$$
Then, by the update rule (Step~11 of Algorithm~\ref{alg:hedge}), we have for all $i$,
\begin{align*}
w_{t+1}^{(i)}
=
\frac{w_{t}^{(i)}\exp(-\varepsilon\ell_t(x_t^{(i)}))}
{\sum_{j=1}^{N} w_{t}^{(j)}\exp(-\varepsilon\ell_t(x_t^{(j)}))}
=
\frac{w_{1}^{(i)}\exp(-\varepsilon L_t^{(i)})}
{\sum_{j=1}^{N} w_{1}^{(j)}\exp(-\varepsilon L_t^{(j)})}.
\end{align*}
Based on these observations, on the one hand, it can be verified that
\begin{align}\label{eceq:lnw}
\ex\left[\ln W_T\right]&=\ex\left[ \ln W_1 + \sum_{t=2}^T\ln\left( \frac{W_t}{W_{t-1} } \right)\right] \nonumber\\
&=
\ex\left[\sum_{t=1}^T \ex\left[ \ln \left( 
\sum_{i=1}^N w_t^{(i)}\exp\left(-\varepsilon \ell_t(x_t^{(i)}) \right) \right) \middle| \mathscr{F}_t \right]  \right]\nonumber\\
&=
\ex\left[
\sum_{t=1}^T \left(
-\varepsilon \ell_t(x_t) +\ex\left[ \ln\left( \sum_{i=1}^N w_t^{(i)}\exp\left( -\varepsilon \ell_t^{(i)} \right)  \right)   \middle| \mathscr{F}_t \right]
\right)
\right]\nonumber\\
&\leq 
\ex\left[
-\varepsilon\sum_{t=1}^T  \ell_t(x_t)
+\sum_{t=1}^T\ln\left( \sum_{i=1}^N w_t^{(i)}\ex\left[ \exp\left( -\varepsilon \ell_t^{(i)} \right) \Big|   \mathscr{F}_t \right]
\right)
\right],
\end{align}
where the inequality is due to Jensen's inequality. Utilizing the sub-Gaussian property of $\xi_t$ in conjunction with the boundedness of $\tilde U_t$, $\bar U_t$, and $r_t(\cdot)$, we obtain the following bound on
$\ex[ \exp( -\varepsilon \ell_t^{(i)} ) | \mathscr{F}_t ]$.
\begin{align*}
\ex\left[ \exp\left( -\varepsilon \ell_t^{(i)} \right) \Big|   \mathscr{F}_t \right]
&= \exp\left( \varepsilon(x_t^{(i)} - x_t )^{\ts}\bar U_t r_t(\tilde{x}_t)/\delta \right)
\ex\left[  \exp\left( \varepsilon(x_t^{(i)} - x_t )^{\ts}\bar U_t\xi_t/\delta   \right)   \Big|  \mathscr{F}_t \right]\\
&\leq
\exp\left( \varepsilon\sqrt{d}B_v B_r/\delta +\varepsilon^2 d B_v^2\sigma_{\xi}^2/(2\delta^2)   \right)\\
&\leq 
\exp\left( d \varepsilon^2 B_v^2\sigma_{\xi}^2/\delta^2  \right),
\end{align*}
where the last step is from the condition $\delta \leq \varepsilon \sqrt{d}B_v\sigma_{\xi}^2/(2B_r)$, which will be justified later once the choice of $\varepsilon$ is specified.
It follows that 
\begin{equation*}
\eqref{eceq:lnw}\leq 
\ex\left[
-\varepsilon\sum_{t=1}^T  \ell_t(x_t)
+\sum_{t=1}^T\ln\left(\exp\left( d \varepsilon^2 B_v^2\sigma_{\xi}^2/\delta^2  \right)\right)
\right] 
= 
-\varepsilon \ex\left[\sum_{t=1}^T  \ell_t(x_t)\right]
+\left( d \varepsilon^2 B_v^2\sigma_{\xi}^2/\delta^2  \right)T.
\end{equation*}

On the other hand, it holds that
\begin{align*}
\ex\left[ \ln W_T\right]&=\ex\left[  \ln \left(
\sum_{i=1}^{N}w_1^{(i)}\exp\left(-\varepsilon L_T^{(i)}\right)
\right)  \right] \\
&\geq \ex\left[
\ln \left(
\max_{i}\left\{w_1^{(i)}\exp\left(-\varepsilon L_T^{(i)}\right)\right\}
\right)  \right] \\
&= \ex\left[
\ln \left(
\max_i \left\{\exp(\ln w_1^{(i)} - \varepsilon L_T^{(i)})\right\}
\right) \right]  \\
&=
-\varepsilon\ex\left[  \min_i\left\{
L_T^{(i)}+\frac{1}{\varepsilon} \ln\left(\frac{1}{w_1^{(i)}}\right)
\right\} \right].
\end{align*}
Combining the upper and lower bounds on $\ex[\ln W_T]$, we obtain
$$
-\varepsilon \ex\left[\min_i\left\{
L_T^{(i)}+\frac{1}{\varepsilon} \ln\left(\frac{1}{w_1^{(i)}}\right)
\right\}\right]
\leq 
-\varepsilon \ex\left[\sum_{t=1}^T  \ell_t(x_t)\right]
+ \frac{d \varepsilon^2 B_v^2\sigma_{\xi}^2}{\delta^2}T.
$$
Hence for all $i$, we have
\begin{equation}\label{eceq:meta_reg}
\ex\left[\sum_{t=1}^T\ell_t(x_t)-\sum_{t=1}^T\ell_t(x_t^{(i)}) \right]
\leq
\frac{d \varepsilon B_v^2\sigma_{\xi}^2}{\delta^2}T
+
\frac{1}{\varepsilon}\ln\left( \frac{1}{w_1^{(i)}} \right).
\end{equation}
Setting $\varepsilon = \delta/( B_v\sigma_{\xi}\sqrt{d T} )$,  one can verify that this choice satisfies the aforementioned condition $\delta \leq \varepsilon \sqrt{d}B_v\sigma_{\xi}^2/(2B_r)$ whenever  $2B_r\sqrt{T}\leq \sigma_{\xi}$.
The right-hand side of \eqref{eceq:meta_reg} then becomes
$$
\frac{ \sigma_{\xi}B_v\sqrt{d}}{\delta}\sqrt{T}
\left(
1+\ln\left( \frac{1}{w_1^{(i)}} \right)
\right).
$$
Focusing again on the $i^*$-th expert and recalling the definition of $w_1^{(i^*)}$, we get that
\begin{equation}\label{eceq:meta_reg_bound}
\eqref{eceq:meta_reg}
\leq 
\frac{ \sigma_{\xi}B_v\sqrt{d}}{\delta}\sqrt{T}
\left(
1+2\ln\left(i^{*}+1 \right)
\right)
\leq
\frac{ \sigma_{\xi}B_v\sqrt{d}}{\delta}\sqrt{T}
\left(
1+2\ln\left( \log_2\lceil(\sqrt{d}T/V_T)^{2/3}\rceil+2 \right)
\right),
\end{equation}
where the last inequality is derived from \eqref{eceq:i_star}, which implies that $i^*\leq \log_2\lceil(\sqrt{d}T/V_T)^{2/3}\rceil+1$.

Finally, by combining \eqref{eceq:brace}, \eqref{eceq:exp_reg_bound}, and \eqref{eceq:meta_reg_bound}, and recalling the $L$-smoothness of $r_t(\cdot)$, we obtain the following bound on the dynamic regret:
\begin{align*}
\mathsf E\left[\sum_{t=1}^T
\left( r_t(x^*_t)-r_t(\tilde x_t) \right)
\right]&=\mathsf E\left[\sum_{t=1}^T
\left( r_t(x^*_t)-r_t(x_t) \right)
\right]+\mathsf E\left[\sum_{t=1}^T
\left(r_t(x_t)-r_t(x_t+\delta \tilde U_t) \right)
\right]\\
&\leq 
\mathsf E\left[
\sum_{t=1}^T G_t^{\mathtt T} (x^*_t-x_t^{(i^*)})
\right]
+\mathsf E\left[
\sum_{t=1}^T G_t^{\mathtt T} (x_t^{(i^*)}-x_t)
\right]
+2 C_1 \delta^p \sqrt{d}T+\frac{LC_u}{2}\delta^2T
\\
&\leq
\left(4d^{-1/3}\sqrt{\frac{B}{\alpha} }G_{\delta}
+2d^{1/3}\left( C_1\delta^p+\sqrt{C_r}\right)\right)
T^{2/3}V_T^{1/3}\\
&\quad
+\frac{ \sigma_{\xi}B_v\sqrt{d}}{\delta}\sqrt{T}
\left(
1+2\ln\left( \log_2\lceil(\sqrt{d}T/V_T)^{2/3}\rceil+2 \right)
\right) + \left(2 C_1 \delta^p \sqrt{d}+\frac{LC_u}{2}\delta^2\right)T.
\end{align*}
\Halmos  
\endproof

\newpage

\section{Algorithm Performance under Different Variation Levels} \label{sec:econnumerical}
\subsection{V=0}

\begin{figure}[H]
\centering
\includegraphics[width=.5\linewidth]{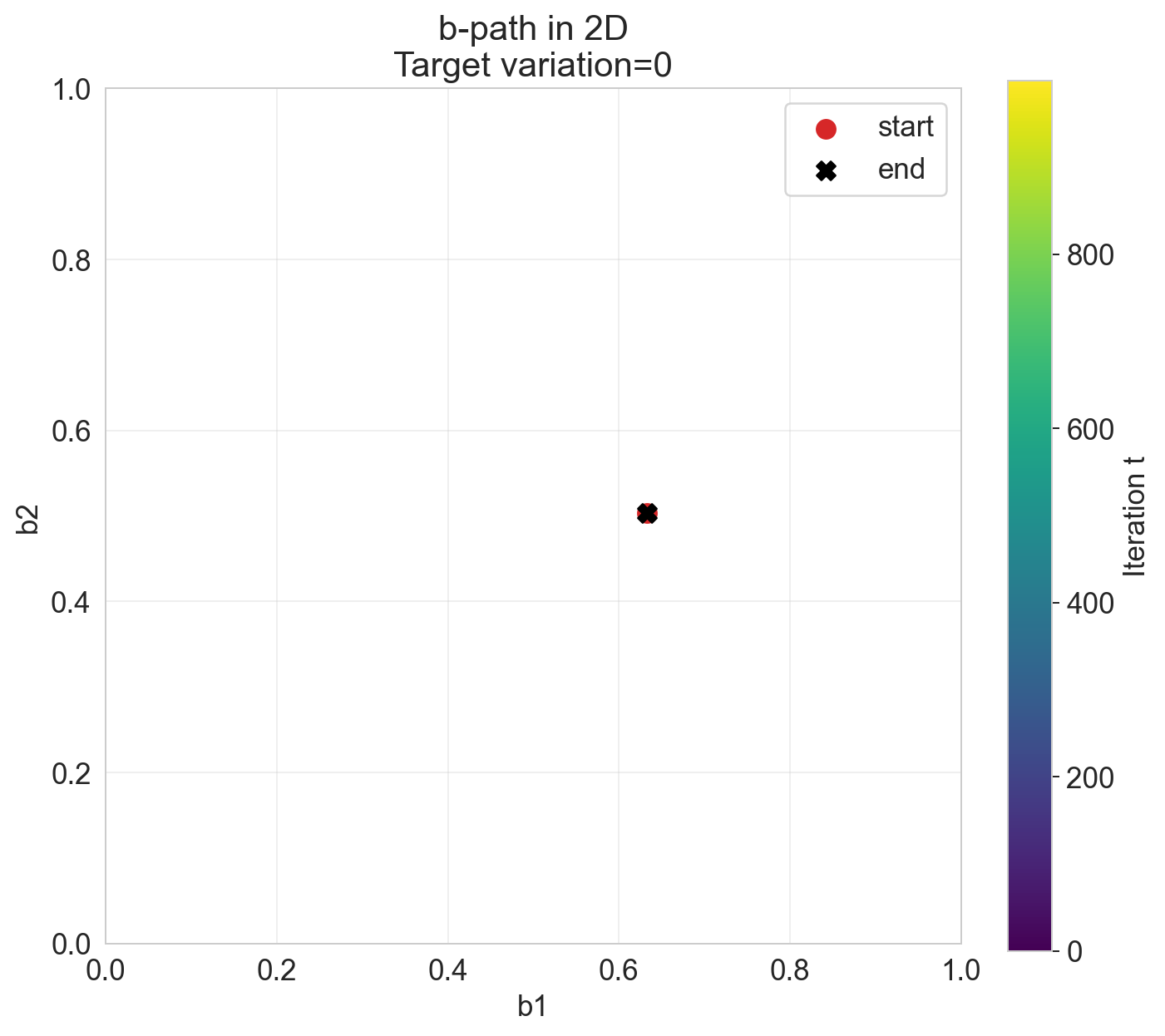}
\caption{Variation=0, b path}
\label{fig:placeholder}
\end{figure}

\begin{figure}[H]
\centering
\includegraphics[width=\linewidth]{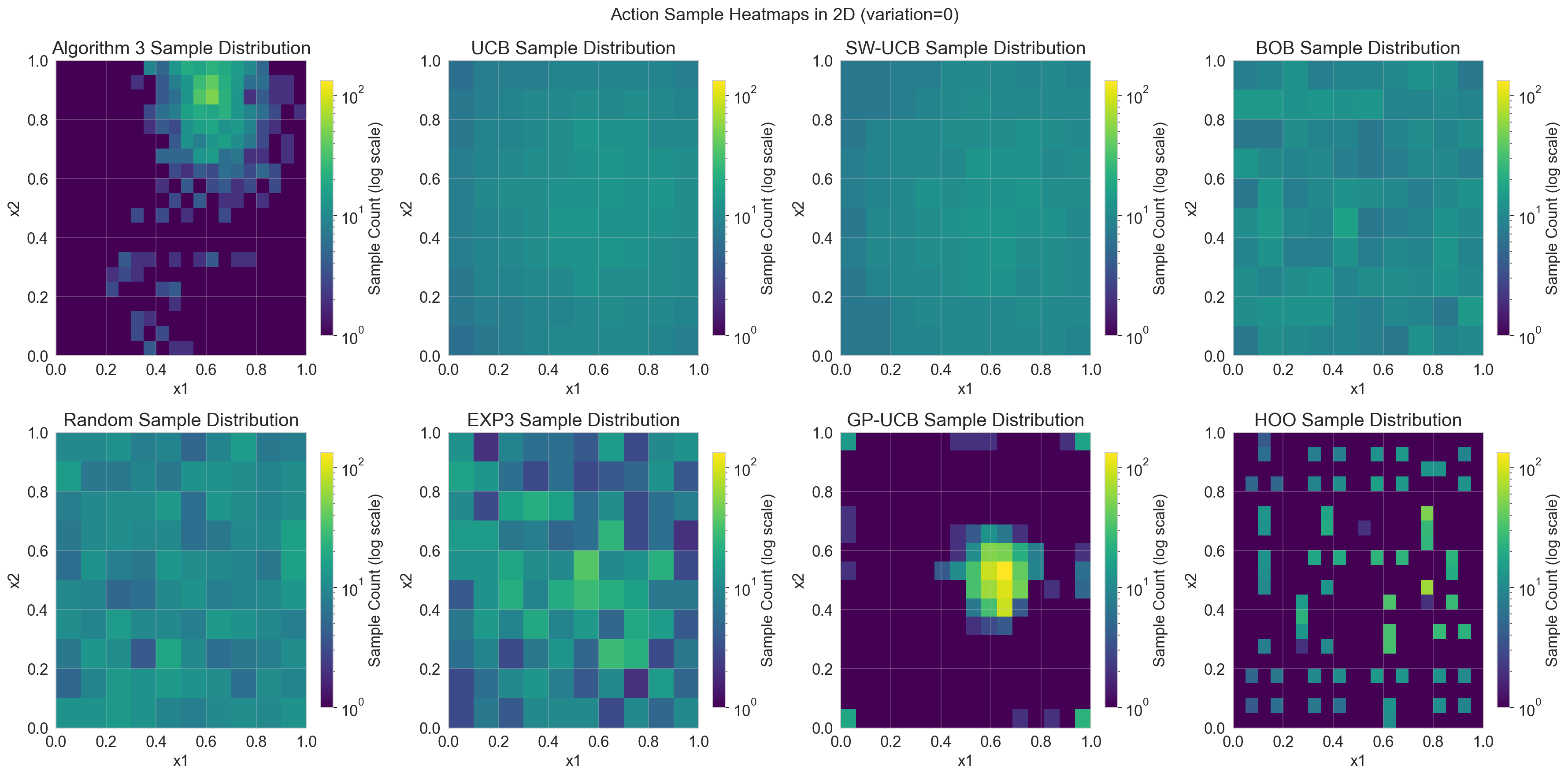}
\caption{Variation=0, sample heatmaps}
\label{fig:placeholder}
\end{figure}

\begin{figure}[H]
\centering
\includegraphics[width=\linewidth]{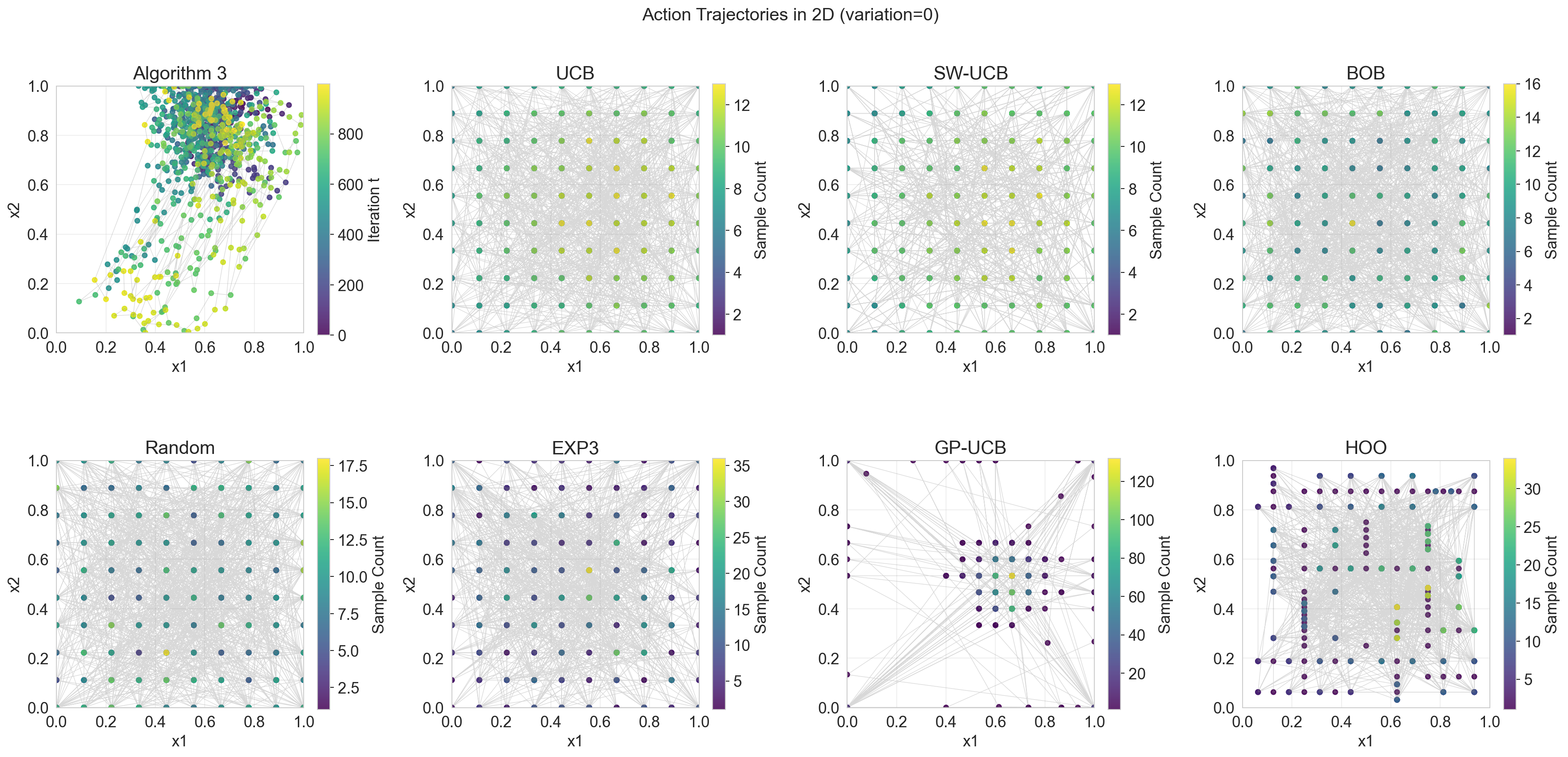}
\caption{Variation=0, action trajectories}
\label{fig:placeholder}
\end{figure}

\begin{figure}[H]
\centering
\includegraphics[width=.8\linewidth]{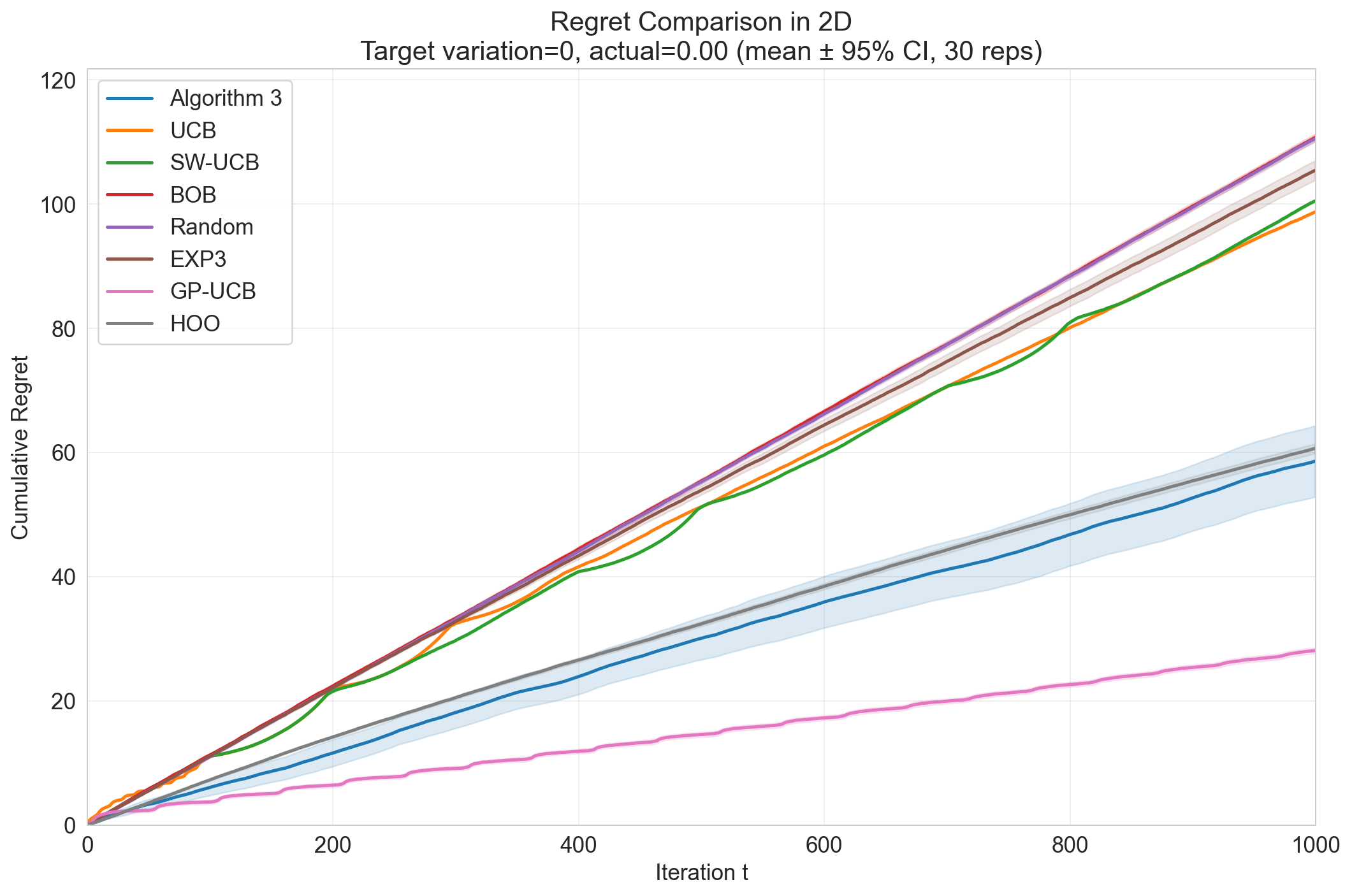}
\caption{Variation=0, regret }
\label{fig:placeholder}
\end{figure}

\newpage

\subsection{V=20}

\begin{figure}[H]
\centering
\includegraphics[width=.5\linewidth]{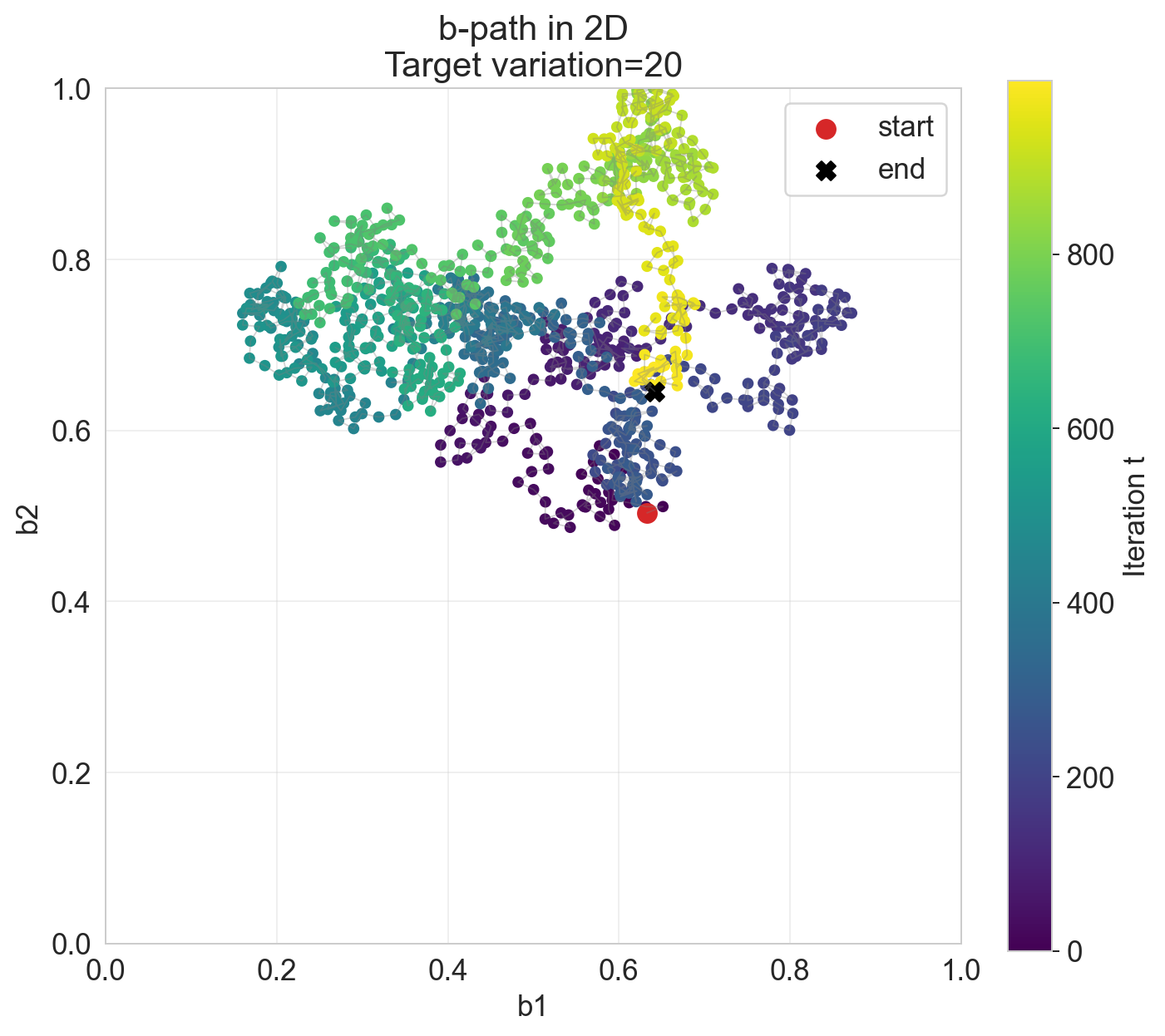}
\caption{Variation=20, b path}
\label{fig:placeholder}
\end{figure}

\begin{figure}[H]
\centering
\includegraphics[width=\linewidth]{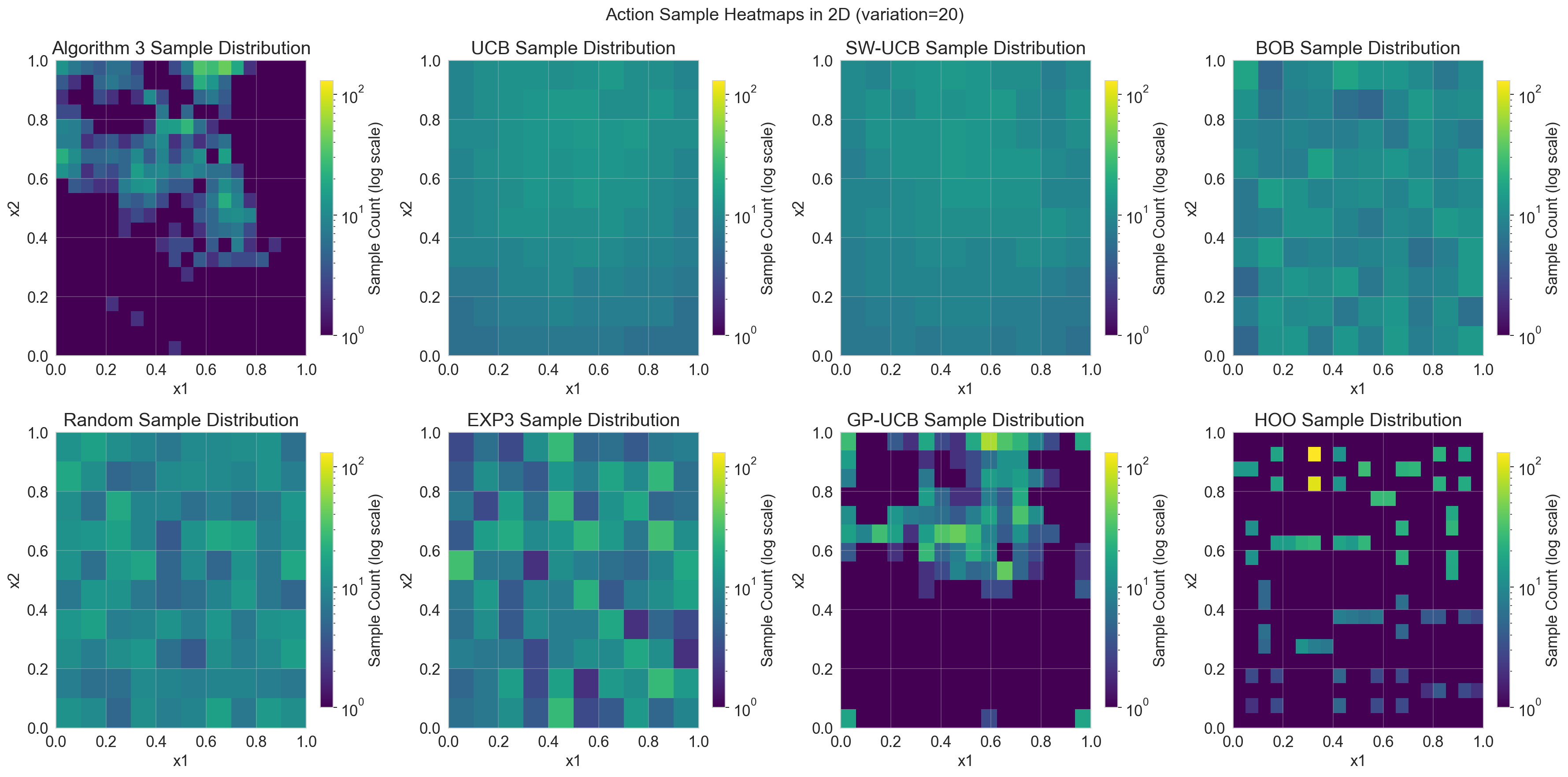}
\caption{Variation=20, sample heatmaps}
\label{fig:placeholder}
\end{figure}

\begin{figure}[H]
\centering
\includegraphics[width=\linewidth]{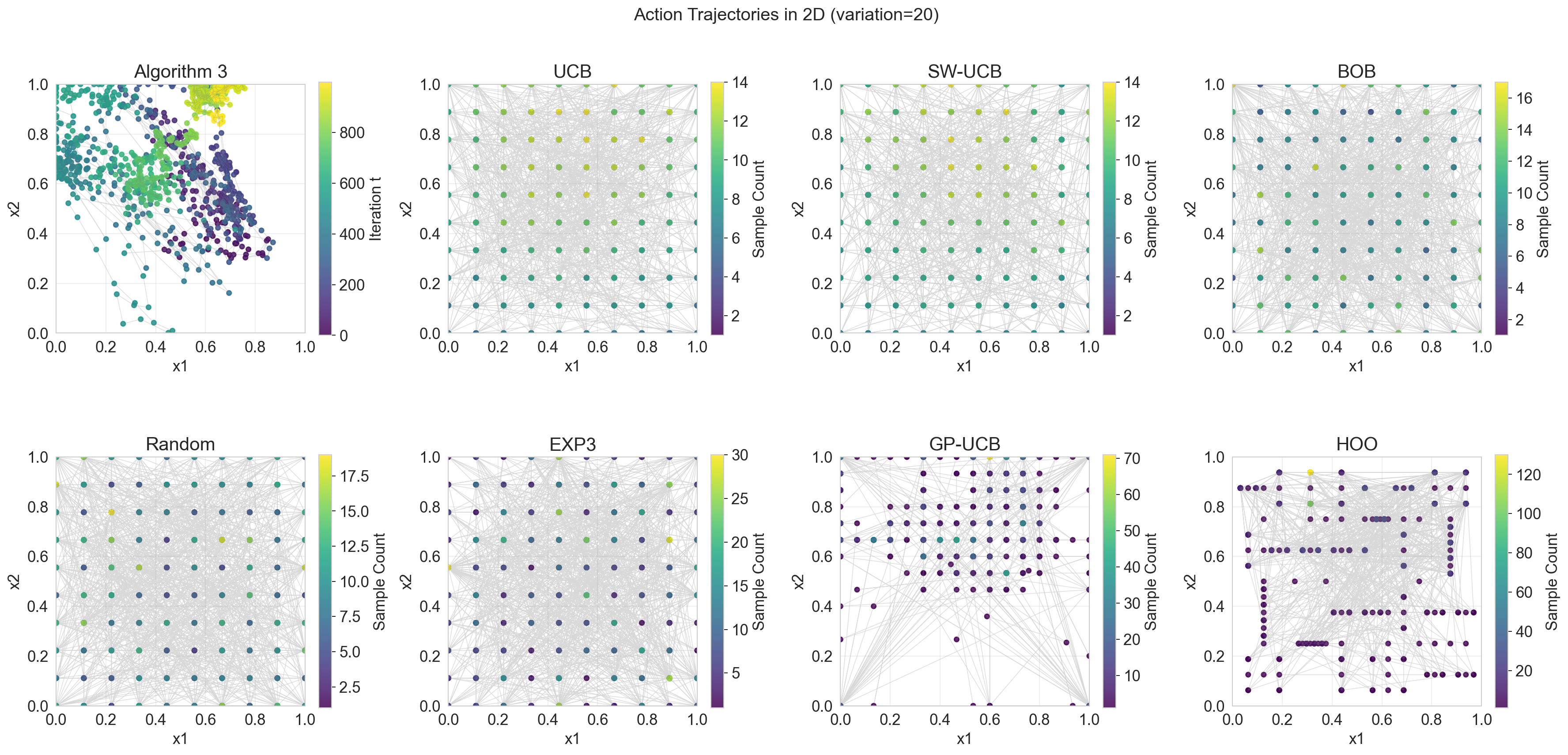}
\caption{Variation=20, action trajectories}
\label{fig:placeholder}
\end{figure}

\begin{figure}[H]
\centering
\includegraphics[width=.8\linewidth]{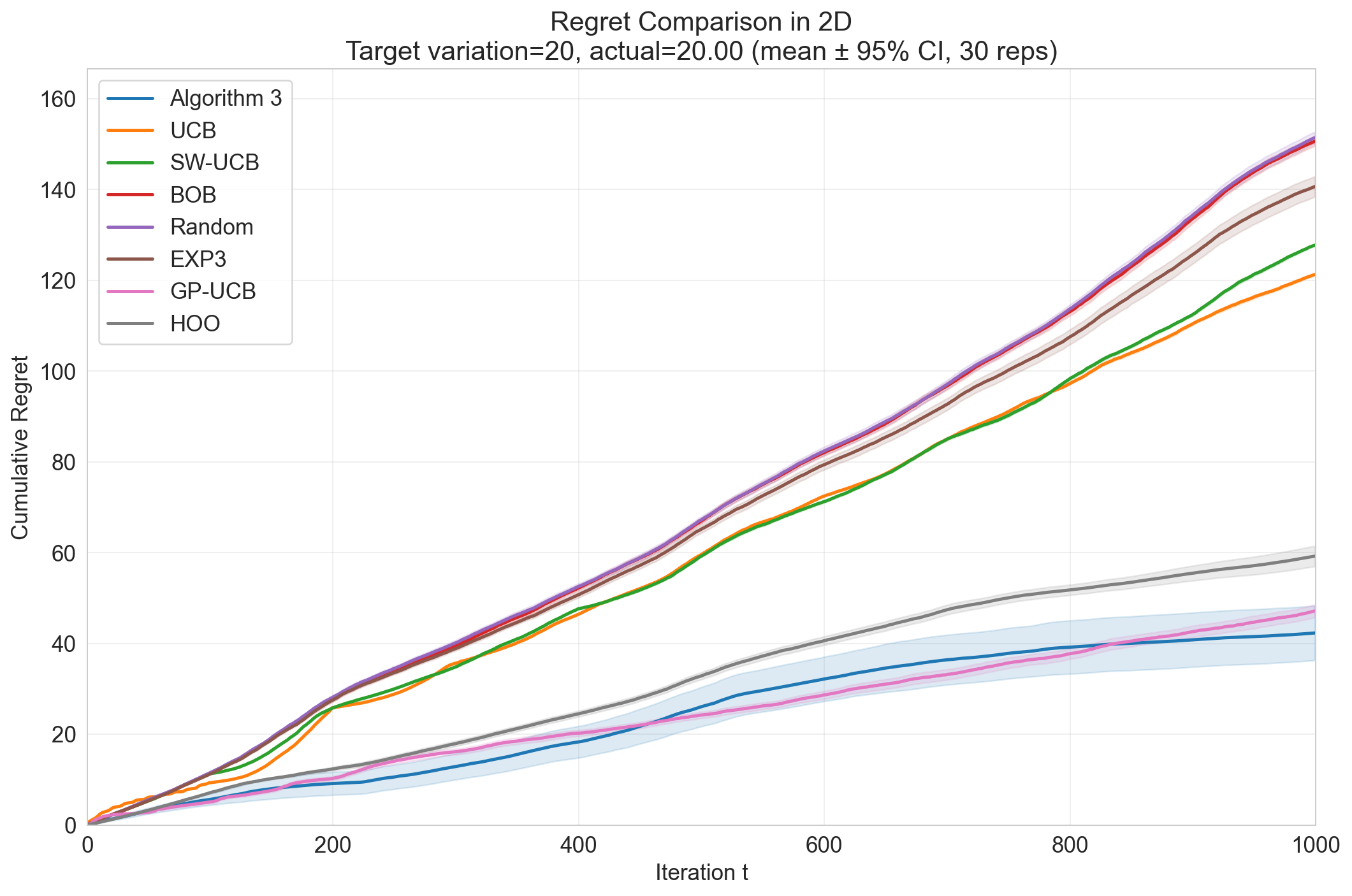}
\caption{Variation=20, regret }
\label{fig:placeholder}
\end{figure}

\newpage

\subsection{V=30}

\begin{figure}[H]
\centering
\includegraphics[width=.5\linewidth]{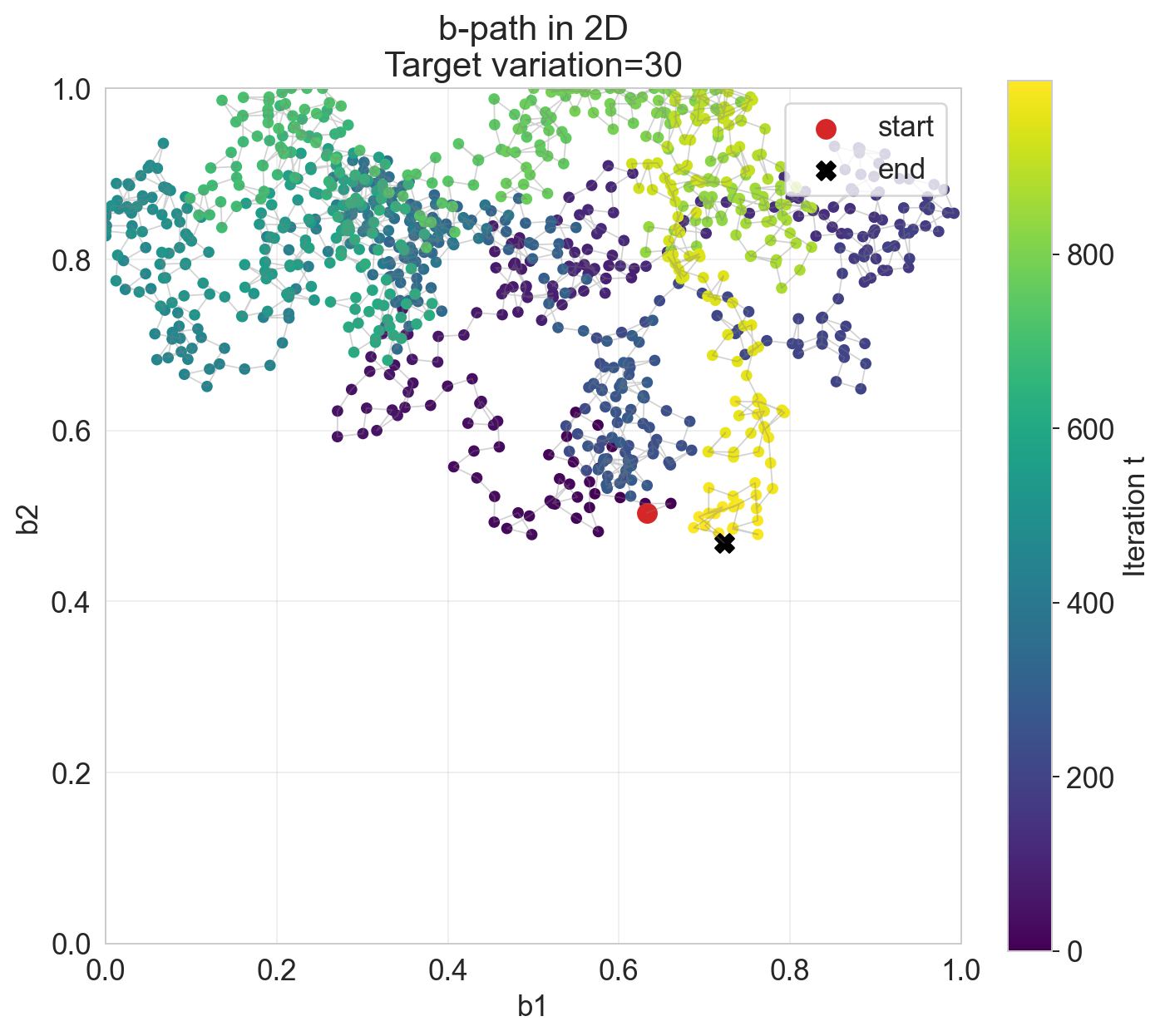}
\caption{Variation=30, b path}
\label{fig:placeholder}
\end{figure}

\begin{figure}[H]
\centering
\includegraphics[width=\linewidth]{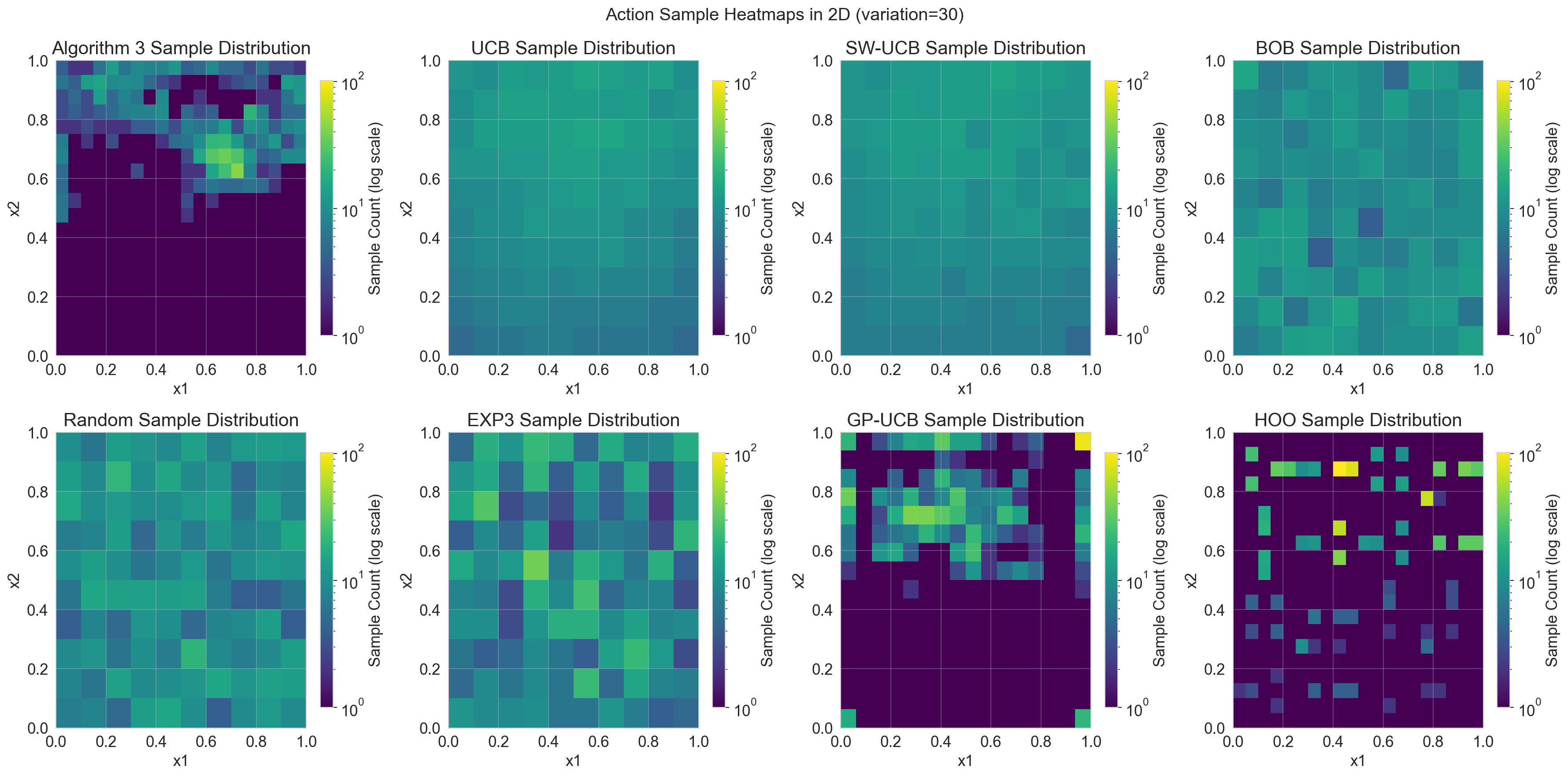}
\caption{Variation=30, sample heatmaps}
\label{fig:placeholder}
\end{figure}

\begin{figure}[H]
\centering
\includegraphics[width=\linewidth]{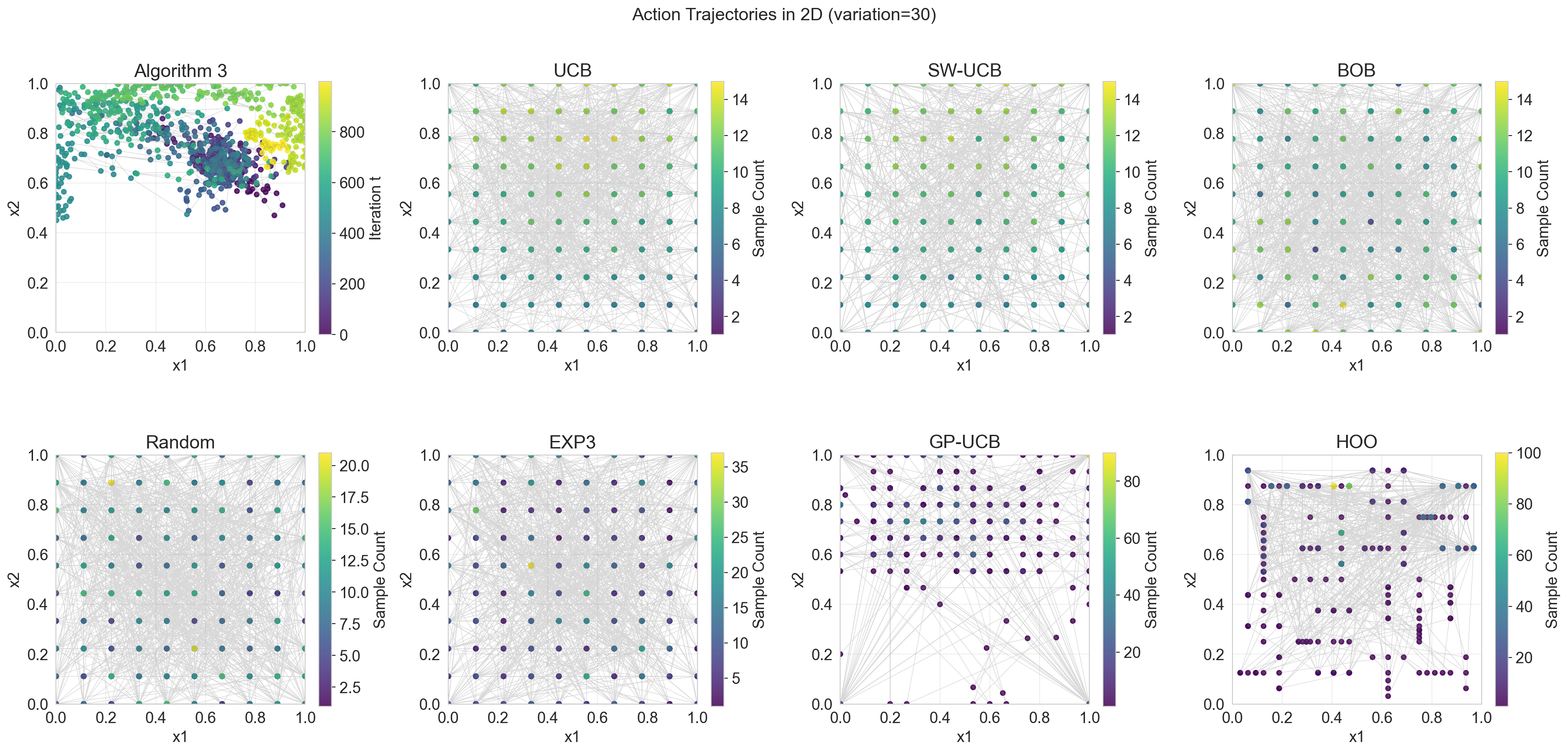}
\caption{Variation=30, action trajectories}
\label{fig:placeholder}
\end{figure}

\begin{figure}[H]
\centering
\includegraphics[width=.8\linewidth]{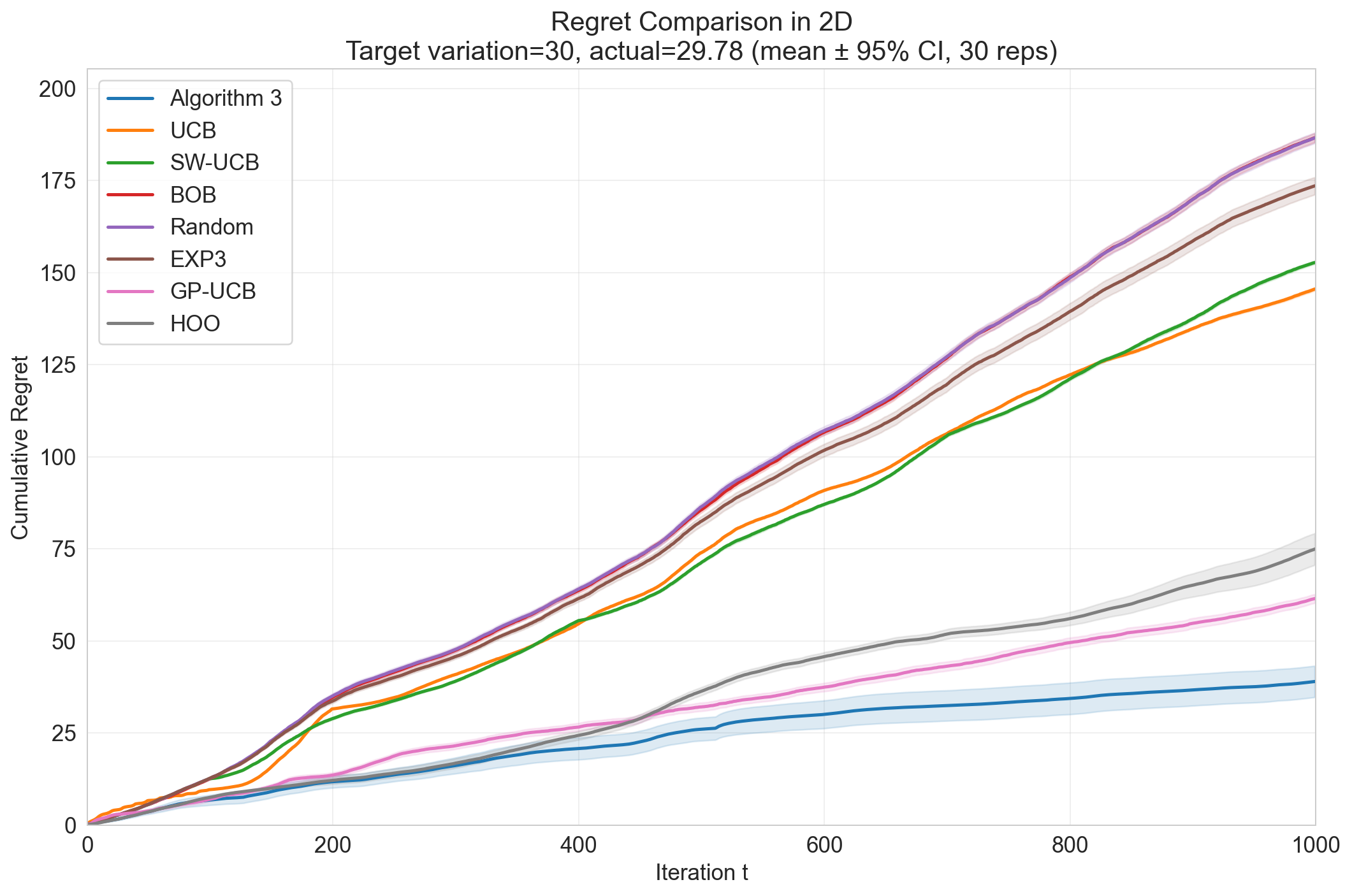}
\caption{Variation=30, regret }
\label{fig:placeholder}
\end{figure}

\end{document}